
\documentclass{sig-alternate-05-2015}

\usepackage{times}
\usepackage{graphicx} 
\usepackage[usenames,dvipsnames,svgnames,table]{xcolor}
\usepackage{framed}
\usepackage{subfigure} 
\usepackage{enumitem}
\usepackage{multirow}
\usepackage[numbers,sort&compress]{natbib}
\usepackage{balance}

\usepackage{amsmath}
\usepackage{amssymb}
\usepackage{amsthm}

\usepackage{algorithm}
\usepackage{algorithmic}

\clubpenalty=10000 
\widowpenalty = 10000

\newcommand{\D}{\mathcal{D}}
\newcommand{\w}{\boldsymbol{w}}
\newcommand{\wbarb}{\bar{\boldsymbol{w}}}
\newcommand{\wbarbR}{\bar{\boldsymbol{w}}^{R}}

\newcommand{\x}{\boldsymbol{x}}

\newcommand{\yh}{\hat{y}}
\newcommand{\z}{\boldsymbol{z}}

\newcommand{\p}{\boldsymbol{p}}

\newcommand{\0}{\boldsymbol{0}}

\newcommand{\g}{\boldsymbol{g}}

\newcommand{\eps}{\epsilon}
\newcommand{\phib}{\bar{\phi}}
\newcommand{\bphi}{\boldsymbol{\phi}}
\newcommand{\bphib}{\bar{\bphi}}

\newcommand{\Hc}{\mathcal{H}}

\newcommand{\R}{\mathbb{R}}

\newcommand{\prox}{\mathbf{prox}}

\newcommand{\brho}{\boldsymbol{\rho}}

\newcommand{\brhob}{\bar{\brho}}

\newcommand{\X}{\mathcal{X}}
\newcommand{\Y}{\mathcal{Y}}

\newcommand{\bdelta}{\boldsymbol{\delta}}
\newcommand{\A}{\mathcal{A}}
\newcommand{\cR}{\mathcal{R}}

\newcommand{\e}{\boldsymbol{e}}

\newcommand{\gbarb}{\bar{\boldsymbol{g}}}

\newcommand{\etab}{\boldsymbol{\eta}}

\newcommand{\etabarb}{\bar{\boldsymbol{\eta}}}

\definecolor{gray2}{rgb}{0.6,0.6,0.6}

\newtheorem{definition}{Definition}
\newtheorem{assumption}{Assumption}

\newtheorem{theorem}{Theorem}

\begin{document}

\setcopyright{acmcopyright}

\CopyrightYear{2016} 
\setcopyright{acmlicensed}
\conferenceinfo{KDD '16,}{August 13 - 17, 2016, San Francisco, CA, USA}
\isbn{978-1-4503-4232-2/16/08}\acmPrice{\$15.00}
\doi{http://dx.doi.org/10.1145/2939672.2939794}

%

\title{Revisiting Random Binning Features: Fast Convergence and Strong Parallelizability}

\numberofauthors{4}

\author{
\alignauthor
Lingfei Wu \titlenote{Both authors contributed equally to this manuscript}\\
       \affaddr{College of William and Mary}\\
       \affaddr{Williamsburg, VA 23185}\\
       \email{lfwu@cs.wm.edu}
\alignauthor       
Ian E.H. Yen {\Large \textsuperscript{*}}\\
       \affaddr{Unversity of Texas at Austin}\\
       \affaddr{Austin, TX 78712}\\
       \email{ianyen@cs.utexas.edu} 
\and  
\alignauthor       
Jie Chen\\
       \affaddr{IBM Research}\\
       \affaddr{Yorktown Heights, NY 10598}\\
       \email{chenjie@us.ibm.com}         
\alignauthor       
Rui Yan\\
       \affaddr{Baidu Inc.}\\
       \affaddr{Beijing 100085, China}\\
       \email{yanrui02@baidu.com}        
}

\maketitle
\begin{abstract}
Kernel method has been developed as one of the standard approaches for nonlinear learning, which however, does not scale to large data set due to its quadratic complexity in the number of samples. A number of kernel approximation methods have thus been proposed in the recent years, among which the random features method gains much popularity due to its simplicity and direct reduction of nonlinear problem to a linear one. Different random feature functions have since been proposed to approximate a variety of kernel functions. Among them the Random Binning (RB) feature, proposed in the first random-feature paper \cite{rahimi2007random}, has drawn much less attention than the Random Fourier (RF) feature proposed also in \cite{rahimi2007random}. In this work, we observe that the RB features, with right choice of optimization solver, could be orders-of-magnitude more efficient than other random features and kernel approximation methods under the same requirement of accuracy. We thus propose the first analysis of RB from the perspective of optimization, which by interpreting RB as a Randomized Block Coordinate Descent in the infinite-dimensional space, gives a faster convergence rate compared to that of other random features. In particular, we show that by drawing $R$ random grids with at least $\kappa$ number of non-empty bins per grid in expectation, RB method achieves a convergence rate of $O(1/(\kappa R))$, which not only sharpens its $O(1/\sqrt{R})$ rate from Monte Carlo analysis, but also shows a $\kappa$ times speedup over other random features under the same analysis framework. In addition, we demonstrate another advantage of RB in the L1-regularized setting, where unlike other random features, a RB-based Coordinate Descent solver can be parallelized with guaranteed speedup proportional to $\kappa$. Our extensive experiments demonstrate the superior performance of the RB features over other random features and kernel approximation methods. Our code and data is available at { \url{https://github.com/teddylfwu/RB_GEN}}.

\end{abstract}

\printccsdesc


\keywords{Kernel approximation, Random Binning Features, large-scale machine learning, faster convergence, strong parallelizability}

\section{Introduction}
\label{section:introduction}
Kernel methods have great promise for learning non-linear model from simple data input representations and have been demonstrated successful for solving various learning problems such as regression, classification, feature extraction, clustering and dimensionality reduction \cite{LearnKernels2001,Taskar2003NIPS}. However, they are typically not first choice for large-scale nonlinear learning problems, since large number of samples ($N$) presents significant challenges in terms of computation and memory consumptions to Kernel methods for computing the dense kernel matrix $K \in \mathcal{R}^{N \times N}$ which requires at least a $O(N^2)$ complexity. To scale up the kernel methods, there have been great efforts addressing this challenge from various perspectives such as numerical linear algebra, sampling approximation, optimization and functional analysis \cite{rahimi2007random,Dai2014ScalableKernel,yen2014sparse,Si2014MEKA,Drineas2005Nystrom,Fine2002JMLR}.

A line of research \cite{Seeger2000Nystrom,Smola2000SparseGreedy,Fine2002JMLR,Si2014MEKA} approximates the kernel matrix $K$ using low-rank factorizations, $K \approx Z^TZ$, where $Z \in \mathcal{R}^{N \times R}$ matrix with $R \ll N$. Among them, Nystr\"{o}m method \cite{Seeger2000Nystrom,Drineas2005Nystrom,Kumar2012SamplingNystrom,Gittens2013RevisitNystrom,Si2014MEKA} is probably one of the most popular approaches, which reduces the total computational costs to $O(NRd + NR^2 + R^3)$ or $O(NRd + NRm)$ depending whether the algorithm performs on $K$ explicitly or implicitly through $Z$, where $d$ and $m$ are the input data dimension and the number of iterations of an iterative solver respectively. However, the convergence of the low-rank approximation is proportional to $O(1/\sqrt{R} + 1/\sqrt{N})$ \cite{Drineas2005Nystrom,yen2014sparse}, which implies that the rank $R$ may need to be near-linear to the number of data points in order to achieve comparable generalization error compared to the vanilla kernel method. For large-scale problems, the low-rank approximation could become almost as expensive as the exact kernel method to maintain competitive performance \cite{Stein2014BlockDiag}. 

Another popular approach for scaling up kernel method is random features approximation \cite{rahimi2007random,rahimi2008RKS}. Unlike previous approach that approximates kernel matrix, Random Features approximate the kernel function directly via sampling from an explicit feature map. Random Fourier (RF) is one of the feature maps that attracted considerable interests due to its easy implementation and fast execution time \cite{rahimi2008RKS,  Le14Fastfood, Yang15LearnFastKernel, Dai2014ScalableKernel, bengio2009binary}, which has total computational cost and storage requirement as $O(NRd + NRm)$ and $O(NR)$ respectively, for computing feature matrix $Z$ and operating the subsequent algorithms on $Z$. A \emph{Fastfood} approach and its extension \cite{Le14Fastfood,Yang15LearnFastKernel} was proposed to reduce the time of computing Fourier features from $O(Rd)$ to $O(R\log{d})$ by leveraging Hadamard basis functions, which improves the efficiency for prediction but not necessarily for training if $d\ll m$. Although RF has been successfully applied to speech recognition and vision classifications on very large datasets \cite{JCLW2016ICCASP,Huang2014Kernel,Lu14ScaleKernel}, a drawback is that a significant large number of random features are needed to achieve a comparable performance to exact kernel method. This is not surprising since the convergence of approximation error is in the order $O(1/\sqrt{R} + 1/\sqrt{N})$ \cite{rahimi2008RKS,Dai2014ScalableKernel}, which is the same as that of low-rank kernel approximations.

Mercer's theorem [19] guarantees that any positive-definite kernel permits a feature-map decomposition. However, the decomposition is not unique. One may find different feature maps to construct the same kernel function \cite{rahimi2007random,yen2014sparse}. Therefore, we ask following question: do some of the feature maps lead to faster convergence than the others in terms of approximation? In this paper, we address this question by reconsidering the Random Binning (RB) feature map, which was proposed in the first Random-Feature paper \cite{rahimi2007random} but has drawn much less attentions since then compared to the RF feature. Our main contributions are fourfold.

First, we propose the first analysis of RB from the perspective of optimization. By interpreting RB as a \emph{Randomized Block Coordinate Descent} (RBCD) in the infinite-dimensional space induced from the kernel, we prove that RB enjoys faster convergence than other random features. Specifically, by drawing $R$ grids with expected number of non-empty bins per grid lower bounded by $\kappa$, RB can achieve a solution comparable to exact kernel method with $O(1/(\kappa R))$ precision in terms of the objective function, which is not only better than the existing $O(1/\sqrt{R})$ rate from Monte Carlo analysis \cite{rahimi2007random}, but also shows a $\kappa$ times speedup over the rate of other random features under the same analysis framework \cite{yen2014sparse}.


Second, we exploit the sparse structure of the feature matrix $Z$, which is the key to rapidly transform the data features into a very high-dimension feature space that is linearly separately by any regressors and classifiers. In addition, we discuss how to efficiently perform the computation for a large, sparse matrix by using state-of-the-art iterative solvers and advanced matrix storage techniques. As a result, the computational complexity and storage requirements in training are still $O(NRd + NRm)$ and $O(NR)$, respectively. 

Third, we show that Random Binning features is particularly suitable for \emph{Parallel Coordinate Descent} solver. Unlike other random features, RB guarantees a speedup proportional to $\kappa$ due to a sparse feature matrix. This is particularly useful in the \emph{Sparse Random Feature} setting \cite{yen2014sparse}, where L1 regularization is used to induce a compact nonlinear predictor and \emph{Coordinate Descent} is presumably the state-of-the-art solver in such setting.

Finally, we provide extensive experiments to demonstrate the faster convergence and better parallelizability of RB in practice. Compared to other popular low-rank approximations, RB shows superior performance on both regression and classification tasks under the same computational budgets, and achieves same performance with one to three orders of magnitude reduction in time and memory consumptions. When combined with Coordinate Descent to solve an L1-regularized objective, RB shows an almost linear speedup, in contrast to RF that has almost no speedup. 


\section{Random Binning Feature as Kernel Approximation}
\label{section:kernel_approximation}



In this work, we consider the problem of fitting a nonlinear prediction function $f:\X\rightarrow \Y$ in Reproducing Kernel Hilbert Space $\Hc$ from training data pairs $\{(\x_n,y_n)\}_{n=1}^N$ via regularized Empirical Risk Minimization (ERM)
\begin{equation}\label{problem}
\begin{aligned}
&f^* = \underset{f\in\Hc}{argmin} &\frac{\lambda}{2} \|f\|_{\Hc}^2  + \frac{1}{N} \sum_{n=1}^N L(f(\x_n),y_n),
\end{aligned}
\end{equation}
where $L(z,y)$ is a convex loss function with Lipschitz-continuous derivative satisfying $|L'(z_1,y)-L'(z_2,y)|\leq \beta |z_1-z_2|$, which includes several standard loss functions such as the \emph{square-loss} $L(z,y)=\frac{1}{2}(z-y)^2$, \emph{square-hinge loss} $L(z,y)=\max(1-zy,0)^2$ and \emph{logistic loss} $L(z,y)=\log(1+\exp(-yz))$.

\subsection{Learning with RKHS}

The RKHS $\Hc$ can be defined via a positive-definite (PD) kernel function $k(\x_1,\x_2)$ that measures similarity between samples as
\begin{equation}\label{kernel_H}
\Hc = \left\{ f(\cdot)=\sum_{i=1}^K \alpha_i k(\x_i,\cdot)  \textit{ }|\textit{ }  \alpha_i\in\R, \x_i\in\X \right\}.
\end{equation}
One can also define the RKHS via a possibly infinite-dimensional feature map $\{\phib_h(\x)\}_{h\in H}$ with each $h\in H$ defining a feature function $\phib_h(\x):\X\rightarrow \R$. The space can be expressed as 
\begin{equation} \label{feature_map_H}
\Hc = \left\{ f(\cdot)=\int_{h\in H} w(h)\phib_h(\cdot) dh = \langle \w,\bphib(\cdot) \rangle_{\Hc} \textit{  }|\textit{  } \|f\|_{\Hc}^2 < \infty \right\},
\end{equation}
where $w(h)$ specifies weights over the set of features $\{\phi_h(\x)\}_{h\in\Hc}$. The Mercer's theorem \cite{mercer1909functions} connects the above two formulations of RKHS by stating that every PD kernel $k(.,.)$ can be expressed as an integration over some basis functions $\{\phi_h(.)\}_{h\in H}$
\begin{equation} \label{feature_map}
k(\x_1,\x_2)=\int_{h\in H} p(h) \phi_h(\x_1) \phi_h(\x_2) dh = \langle \bphib(\x_1), \bphib(\x_2)\rangle_{\Hc},
\end{equation}
However, the decomposition \eqref{feature_map} is not unique, so one can find different feature maps $\{\phib_h(.)\}_{h\in H}$ satisfying \eqref{feature_map} for the same kernel $k(.,.)$. In particular, as an example used extensively in this work, the Laplacian Kernel
\begin{equation}\label{laplacian_kernel}
k(\x_1,\x_2)=\exp\left(-\frac{\|\x_1-\x_2\|_1}{\sigma}\right),
\end{equation}
allows decomposition based on (i) Fourier basis map \cite{rahimi2007random}, (ii) RB map \cite{rahimi2007random}, and also (iii) map based on infinite number of decision trees \cite{lin2008support} to name a few. On the other hand, different kernels can be constructed using the same set of basis function $\{\phi_h(.)\}$ with different distribution $p(h)$. For example, the RB feature map can be used to construct any shift-invariant kernel of the form \cite{rahimi2007random} 
\begin{equation} \label{shift_inv_kernel}
K(\x_1,\x_2) = K(\x_1-\x_2)= \prod_{j=1}^d k_j(x_{1j}-x_{2j}),
\end{equation}
by sampling the "width" of bins $\delta_j$ for each feature $j$ from a distribution proportional to $\delta k_j''(\delta)$, where $k_j''(\delta)$ is the second derivative of $k_j(\delta)$, assuming the kernel has a non-negative second derivative.

\subsection{Random Binning Features}

\begin{figure}
    \centering
    \includegraphics[scale=0.33]{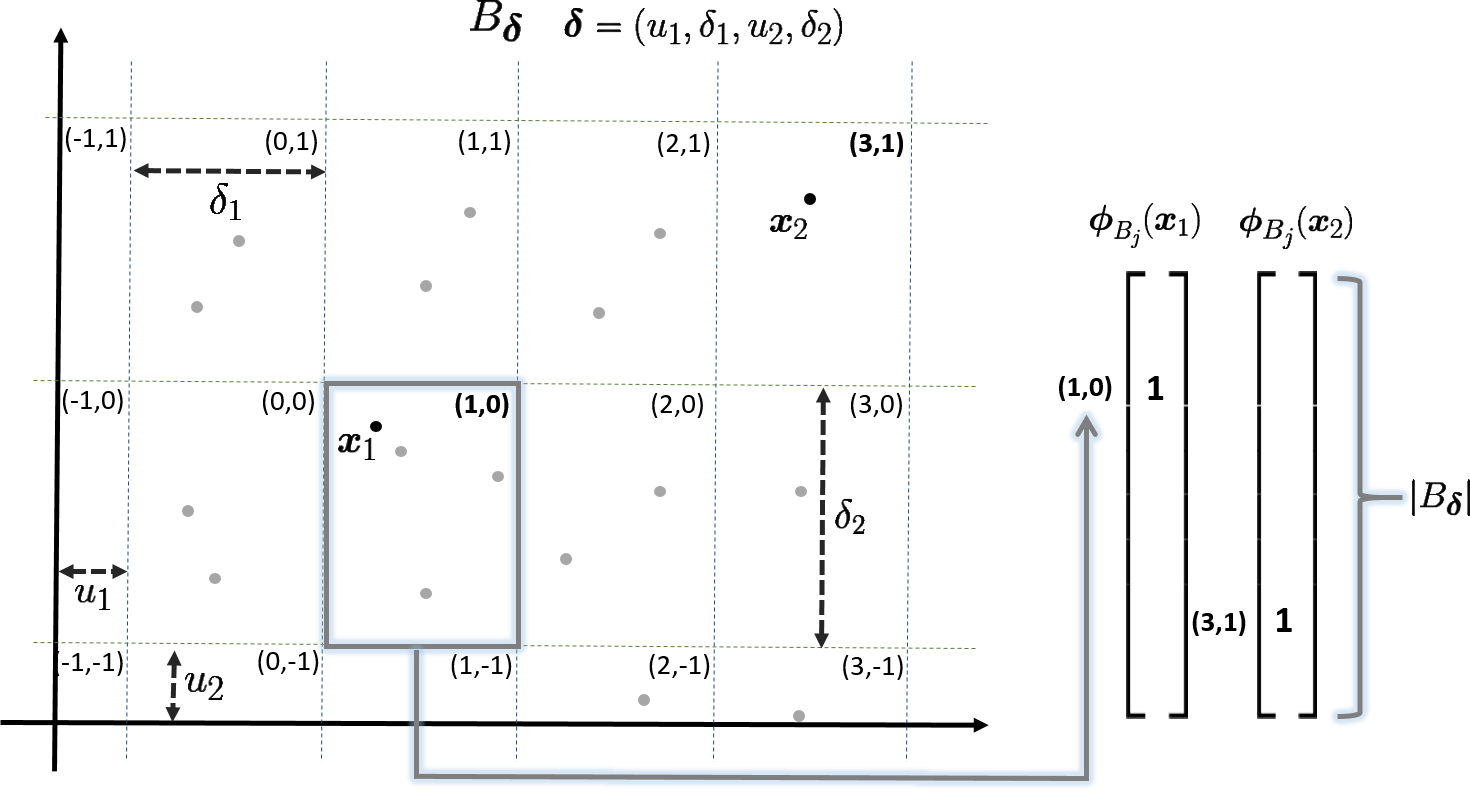}
    \caption{Generating process of RB features.}
    \label{fig:RB_gen}
\end{figure}

In this section, we describe the Random Binning (RB) feature map, which has decomposition of the form
\begin{equation}\label{RB_feature_map}
K(\x_1,\x_2)=\int_{\bdelta} p(\bdelta) \bphi_{B_{\bdelta}}(\x_1)^T\bphi_{B_{\bdelta}}(\x_2) \;d\bdelta
\end{equation}
where $B_{\bdelta}$ is a grid parameterized by $\bdelta=(\delta_1,u_1,...,\delta_d,u_d)$ that specifies the \emph{width} and \emph{bias} of the grid w.r.t. the $d$ dimensions, and $\bphi_{B_{\bdelta}}(\x)$ is a vector which has  
$$
\phi_{b}(\x)=1, \;\textit{if}\; b=(\lfloor \frac{x_{1}-u_1}{\delta_1}\rfloor, ..., \lfloor \frac{x_{d}-u_d}{\delta_d}\rfloor),
$$ 
 and $\phi_{b}(\x)=0$ otherwise for any $b\in B_{\bdelta}$. Note for each grid $B_{\bdelta}$, the number of bins $|B_{\bdelta}|$ is countably infinite, so $\bphi_{B_{\bdelta}}(\x)$ has infinite dimension but only $1$ non-zero entry (at the bin $\x$ lies in). Figure \ref{fig:RB_gen} illustrates an example when the raw dimension $d=2$. The kernel $K(\x_1,\x_2)$ is thus interpreted as the \emph{collision probability} that two data points $\x_1$, $\x_2$ fall in the same bin, when the grid is generated from distribution $p(\bdelta)$. In \cite{rahimi2007random}, it is pointed out for any kernel of form \eqref{shift_inv_kernel} with \emph{nonnegative second derivative} $k_j''(\delta)$, one can derive distribution $p(\bdelta)=\prod_{j=1}^d p_j(\delta_j)U(u_j;0,\delta_j)$, where $p_j(\delta_j)\propto \delta k_j''(\delta_j)$ and $U(\cdot,a,b)$ is uniform distribution in the range $[a,b]$.

To obtain a kernel approximation scheme from the feature map \eqref{RB_feature_map}, a simple Monte Carlo method can be used to approximate \eqref{RB_feature_map} by averaging over $R$ grids $\{B_{\bdelta_r}\}_{r=1}^R$ with each grid's parameter $\bdelta_r$ drawn from $p(\bdelta)$. The procedure for generating $R$ RB features from raw data $\{\x_n\}_{n=1}^N$ is given in Algorithm \ref{alg:RB}.

Using a Monte-Carlo analysis, one can show the approximation to \eqref{RB_feature_map} yields approximation error of order $O(1/\sqrt{R})$. From the Representer theorem, one can further bound error of the learned predictor
\begin{equation*}
\left|\w_{RF}^T\z(\x)-f^*(\x)\right|=\left|\sum_{n=1}^N \alpha^{RF}_n\z(\x_n)^T\z(\x) - \sum_{n=1}^N \alpha_n^* k(\x_n,\x)\right|
\end{equation*}
as shown in \cite{rahimi2007random} (appendix C). Unfortunately, the rate of convergence suggests that to achieve small approximation error $\eps$, one needs significant amount of random features proportional to $\Omega(1/\epsilon^2)$, and furthermore, the Monte-Carlo analysis does not explain why empirically RB feature achieves faster convergence than other random feature map like Fourier basis by orders of magnitude. 

\begin{algorithm}[t]
    \caption{ Random Binning Features }
    \label{alg:RB}
\begin{algorithmic} 
    \STATE Given a kernel function $k(\x_1,\x_2)=\prod_{j=1}^d k_j(|x_{1j}-x_{2j}|)$. Let $p_j(\delta) \propto \delta k_j''(\delta)$ be a distribution over $\delta$.
    \FOR {$r=1...R$}
        \STATE 1. Draw $\delta_{rj} \sim p_j(\delta)$, $\forall j\in [d]$. $u_{rj}\in [0,\delta_{rj}],\forall j\in[d]$. 
        \STATE 2. Compute feature $\z_{r}(\x_n)$ as the the indicator vector of bin index $(\lfloor \frac{x_{n1}-u_1}{\delta_1}\rfloor, ..., \lfloor \frac{x_{nd}-u_d}{\delta_d}\rfloor )$, for $\forall n\in[N]$.
    \ENDFOR.
    \STATE Return $\z(\x_n)=\frac{1}{\sqrt{D}}[\z_1(\x_n);...;\z_D(\x_n)]$ $\forall n\in[N]$ as the data with RB Features.
\end{algorithmic}
\end{algorithm}

\section{Faster Convergence of Random Binning} 
\label{section:faster_convergence}
In this section, we first illustrate the sparse structure of the feature matrix $Z$ of RB and discuss how to make efficient computation and storage format of $Z$. Then by interpreting RB features as Randomized Block Coordinate Descent in the infinite-dimensional space, we prove that RB has a faster convergence rate than other random features. We illustrate them accordingly in the following sections. 

\subsection{Sparse Feature Matrix \& Iterative Solvers} \label{section:iterative}
\begin{figure}
\centering
\includegraphics[width = 3.3in]{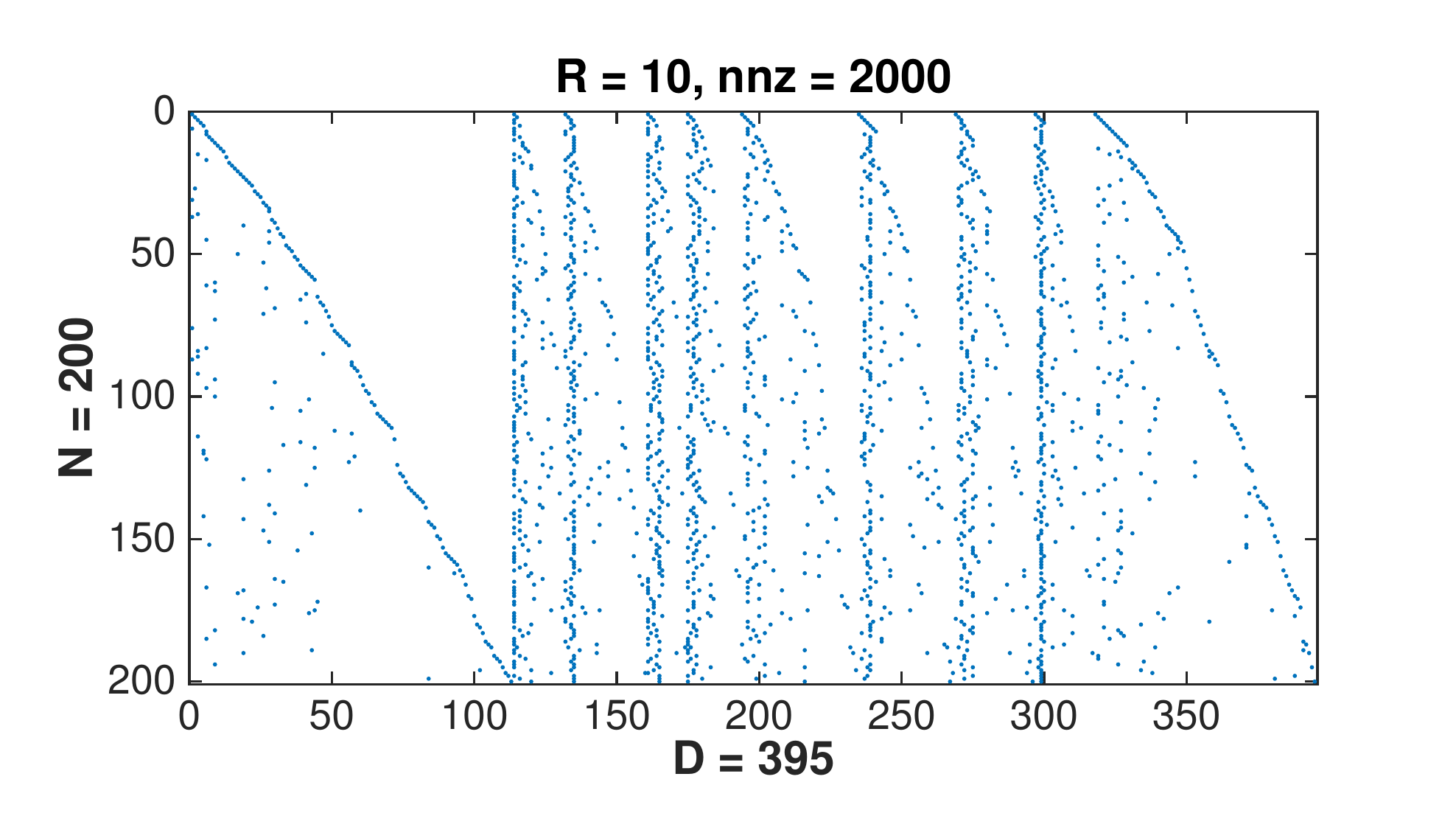}
\caption{Example of the sparse feature matrix $Z_{N \times D}$ generated by RB. In this special case,  $Z$ has the number of rows $N=200$ and columns $D=395$, respectively. The number of grids $R=10$ and the $nnz(Z)=2000$. Note that for $i$th row of $Z$, $nnz(Z(i,:))=R$ and $R \leq D \leq NR$. 
}
\label{fig:RB_matrix}
\end{figure}
 
A special characteristic of RB compared to other low-rank approximations is the fact that the feature matrix generated by RB is typically a large, sparse binary matrix $Z \in \R^{N \times D}$, where the value of $D$ is determined by both number of grids $R$ and the kernel width parameter (ex. $\sigma$ in the case of Laplacian Kernel). Different from other random features, $D$, rather than $R$, is the actual number of columns of $Z$. A direct connection between $D$ and $R$ is that the matrix has each row $i$ satisfying $nnz(Z(i,:))=R$ and therefore $R \leq D \leq NR$. Intuitively speaking, RB has more expressive power than RF since it generates a large yet sparse feature matrix to rapidly transform the data space to a very high dimension space, where data could become almost linearly separable by the classifiers. Fig. \ref{fig:RB_matrix} gives an example to illustrate the sparse structure of $Z$. 

In the case of \emph{Kernel Ridge Regression} (L2-regularization with square loss), if using RB feature to approximate the RKHS, one can solve \eqref{problem} directly in its primal form. The weighting vector is simply the solution of the linear system:
\begin{equation}
    (Z^TZ + \lambda I) \w_{RB} = Z^T y.
\end{equation}
Note since $Z$ is a large sparse matrix, there is no need to explicitly compute the covariance matrix $Z^TZ$, which is much denser than $Z$ itself. One can apply state-of-the-art sparse iterative solvers such as Conjugate Gradient (CG) and GMRES to directly operate on $Z$ \cite{Saad2003SparseLS}. The main computation in CG or GMRES is the sparse matrix-vector products. Let $m$ be the number of iterations, then the total computational complexity of iterative solver is  $O(m\, nnz(Z))=O(m NR)$. In addition, since most elements in $Z$ are zeros, the Compressed Sparse Row type matrix storage format should be employed for economically storing $Z$ \cite{Golub1996MatrixComp}, which gives computational cost and memory requirement as $O(mNR)$ and $O(NR)$ respectively, a similar cost to that of other low-rank approximations despite its much higher dimension. In testing phase, each point produces a sparse feature vector $z(x) \in \mathcal{R}^D$ based on the grids stored during training, yielding a sparse vector $z(x)$ with $nnz(z(x)))=R$ and computing the decision function $z(x)^T\w_{RB}$ only requires $O(dR + R)$. 

When the ERM is smooth but not quadratic, a \emph{Newton-CG} method that solves smooth problem via a series of local quadratic approximation gives the same complexity per CG iteration \cite{lin2008trust}, and note that most of state-of-the-art linear classification algorithms have complexity linear to $nnz(Z)$, the number of nonzeros of feature matrix \cite{fan2008liblinear}. In section \ref{section:parallel}, we further discuss cases of L1-regularized problem, where a Coordinate Descent algorithm of cost $O(nnz(Z))$ per iteration is discussed.

\subsection{Random Binning Features as Block Coordinate Descent}

In \cite{yen2014sparse}, a new approach of analysis was proposed, which interpreted Random Features as Randomized Coordinate Descent in the infinite dimensional space, and gives a better $O(1/R)$ rate in the convergence of objective function. In this section, we extend the approach of \cite{yen2014sparse} to show that, RB Feature can be interpreted as RBCD in the infinite-dimensional space, which by drawing \emph{a block of features} at a time, produces a number of features $D$ significantly more than the number of blocks $R$, resulting a provably faster convergence rate than other RF. While at the same time, by exploiting state-of-the-art iterative solvers introduced in section \ref{section:iterative}, the computational complexity of RB does not increase with number of features $D$ but only with the number of blocks $R$. Consequently, to achieve the same accuracy, RB requires significantly less training and prediction time compared to other RF.

A key quantity to our analysis is an upper bound on the \emph{collision probability} $\nu_{\bdelta}$ which specifies how unlikely data points will fall into the same bin, and its inverse $\kappa_{\bdelta}:=1/\nu_{\bdelta}$ which lower bounds the number of bins containing at least one data point. We define them as follows.
\begin{definition}\label{def:collision_prob}
Define collision probability of data $\D$ on bin $b\in B_{\bdelta}$ as 
\begin{equation}\label{collision_prob}
     \nu_{b}:=\frac{ |\{n\in[N] \;|\; \phi_{b}(\x_n)=1 \}| }{N}.
\end{equation}
Let $\nu_{\bdelta} := \max_{b\in B_{\bdelta}} \nu_b$ be an upper bound on \eqref{collision_prob}, and $\kappa_{\bdelta}:=1/\nu_{\bdelta}$ be a lower bound on the number of nonempty bins of grid $\bdelta$. 
\begin{equation}\label{expected_col_prob}
    \kappa:= E_{\bdelta}[\kappa_{\bdelta}]=E_{\bdelta}[1/\nu_{\bdelta}]
\end{equation}
is denoted as the lower bound on the expected number of (used) bins w.r.t the distribution $p(\bdelta)$.
\end{definition}
In the RB matrix, the empirical collision probability is simply the average number of non-zeros per column, divided by $N$, a number much smaller than $1$ as in the example of Fig. \ref{fig:RB_matrix}. Our analysis assumes a smooth loss function satisfying the following criteria.
\begin{assumption}\label{as:smooth}
The loss function $L(z,y)$ is smooth w.r.t. response $z$ so difference between function difference and its linear approximation can be bounded as
$$
L(z_2,.)-L(z_1,.) \leq \nabla L(z_1,.)(z_2-z_1) + \frac{\beta}{2}(z_2-z_2)^2.
$$
for some constant $0\leq\beta\leq \infty$.
\end{assumption}
This assumption is satisfied for a wide range of loss such as square loss $(\beta=1)$, logistic loss $(\beta=1/4)$ and L2-hinge loss ($\beta=1$).

We interpret RB as a \emph{Fully Corrective Randomized Block Coordinate Descent (FC-RBCD)} on the objective function
\begin{equation}\label{optimized_obj}
\begin{aligned}
    &\min_{\wbarb} && F(\wbarb):=\cR(\wbarb) + Loss(\wbarb;\bphi)\\
\end{aligned}
\end{equation}
where $Loss(\wbarb;\phi)=\frac{1}{N}\sum_{n=1}^N L(\langle \wbarb, \bphi(\x_n) \rangle, y_n)$ and 
$$
\bphib:=\sqrt{p}\circ \bphi=(\sqrt{p(\bdelta)}\bphi_{B_{\bdelta}}(.))_{\bdelta\in H}
$$
with "$\circ$" denoting the component-wise product. The goal is to show that, by performing $R$ steps of FC-RBCD on \eqref{optimized_obj}, one can obtain a $\wbarbR$ with comparable regularized loss to that from optimal solution of \eqref{problem}. Note one advantage of analysis from this optimization perspective is: it does not rely on Representer theorem, and thus $R(\w)$ can be L2 regularizer $\frac{\lambda}{2}\|\w\|^2$ or L1 regularizer $\lambda\|\w\|_1$, where the latter has advantage of giving sparse predictor of faster prediction \cite{yen2014sparse}. The FC-RBCD algorithm maintains an active set of blocks $\A^{(r)}$ which is expanded for $R$ iterations. At each iteration $r$, the FC-RBCD does the following:
\begin{itemize}
    \item[1.] Draw $\bdelta$ from $p(\bdelta)$ ( derived from the kernel $k(.,.)$ ).
    \item[2.] Expand active set $\A^{(r+1)}:= \A^{(r)} \cup B_{\bdelta}$.
    \item[3.] Minimize \eqref{optimized_obj} subject to a limited support $supp(\wbarb)\subseteq \A^{(r+1)}$.
\end{itemize}

Note this algorithm is only used for analysis. In practice, one can draw $R$ blocks of features at a time, and solve \eqref{optimized_obj} by any optimization algorithm such as those mentioned in section \ref{section:iterative} or the CD method we introduce in section \ref{section:parallel}.

Due to space limit, here we prove the case when $\cR(.)$ is the non-smooth L1 regularizer $\lambda\|\wbarb\|_1$. The smooth case for $\cR(\w)=\frac{\lambda}{2}\|\wbarb\|^2$ can be shown in a similar way. Note the objective function \eqref{optimized_obj} can be written as 
\begin{equation}\label{surrogate}
\bar{F}(\w):=F(\sqrt{\p}\circ\w)=\cR(\sqrt{\p}\circ \w) + Loss(\w,\bar{\bphi}).
\end{equation}
by a scaling of variable $\wbarb=\sqrt{p}\circ\w$.

The below theorem states that, running FC-RBCD for $R$ iterations, it generates a solution $\wbarb^{R}$ close to any reference solution $\w^*$ in terms of objective \eqref{surrogate} with their difference bounded by $O(\frac{1}{\kappa R})$.

\begin{theorem}\label{thm:RBconverge}
Let $R$ be the number of blocks (grids) generated by FC-RBCD, and $\w^*$ be any reference solution, we have
\begin{equation}\label{RBconverge}
E[\bar F(\w^{(R)})] - \bar F(\w^*) \leq \frac{\beta\|\w^*\|^2}{\kappa R'}
\end{equation}
for $R':=R-c>0$, where $c=\lceil\frac{2\kappa(\bar F(\0)-\bar F(\w^*))}{\beta\|\w^*\|^2}\rceil$.
\end{theorem}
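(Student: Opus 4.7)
The plan is to argue that FC--RBCD on the infinite-dimensional surrogate $\bar F(\w)$ contracts its expected optimality gap by a factor that scales like $1/\kappa$ per iteration, leading to an $O(1/(\kappa R))$ rate after $R$ rounds. The master tool is the fully corrective property
$$\bar F(\w^{(r+1)})\le \bar F\bigl(\w^{(r)}+\eta\,\bd_{\bdelta}\bigr)\quad\text{for every }\eta\in\R\text{ and every }\bd_{\bdelta}\text{ with }\mathrm{supp}(\bd_{\bdelta})\subseteq B_{\bdelta},$$
which holds because $B_{\bdelta}\subseteq\A^{(r+1)}$ and $\w^{(r+1)}$ is the global minimizer on that active set. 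I would choose the probing direction $\bd_{\bdelta}:=\1_{B_{\bdelta}}\!\odot(\w^{*}-\w^{(r)})$, i.e.\ the restriction of the gap to the newly sampled block; this particular choice is designed so that averaging it against $p(\bdelta)$ reconstructs the full descent direction $\w^{*}-\w^{(r)}$.

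\textbf{Per-iteration expansion and expectation.} Decompose $\bar F = \cR(\sqrt{\p}\circ\w) + \frac{1}{N}\sum_n L(\langle \w,\bar\bphi(\x_n)\rangle,y_n)$. Convexity of the $\ell_1$ norm gives
$$\cR(\sqrt{\p}\circ(\w^{(r)}+\eta\bd_{\bdelta}))\le \cR(\sqrt{\p}\circ\w^{(r)}) + \eta\bigl[\cR(\sqrt{\p}\circ\w^{*}|_{B_{\bdelta}}) - \cR(\sqrt{\p}\circ\w^{(r)}|_{B_{\bdelta}})\bigr],$$
and Assumption~1 applied summand-wise gives the quadratic upper bound $\eta\,\langle\nabla\mathrm{Loss}(\w^{(r)}),\bd_{\bdelta}\rangle + \frac{\beta\eta^{2}}{2N}\sum_n\langle \bd_{\bdelta},\bar\bphi(\x_n)\rangle^{2}$ for the loss piece. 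Taking $E_{\bdelta\sim p}$ on both sides, the linear-plus-regularizer contributions integrate (using the definition of $\bar\bphi=\sqrt{\p}\circ\bphi$ and of $\cR$ on the $\sqrt{\p}$-rescaled coordinates) into $\eta\bigl[\bar F(\w^{*})-\bar F(\w^{(r)})\bigr]$ by convexity of $\bar F$, exactly as in the standard RBCD descent lemma; the block probability weights $p(\bdelta)$ match the feature weights, so no bias remains.

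\textbf{The $\kappa$ factor, recursion, and main obstacle.} The heart of the proof is the quadratic term. For a fixed grid, each data point lies in exactly one bin of $B_{\bdelta}$ and by definition of $\nu_{\bdelta}$ no bin contains more than $N\nu_{\bdelta}$ points, so
$$\frac{1}{N}\sum_{n=1}^{N}\langle\bd_{\bdelta},\bar\bphi(\x_n)\rangle^{2}\le \nu_{\bdelta}\,p(\bdelta)\,\|\bd_{\bdelta}\|^{2}= \frac{p(\bdelta)}{\kappa_{\bdelta}}\,\|(\w^{*}-\w^{(r)})|_{B_{\bdelta}}\|^{2}.$$
Integrating against $p(\bdelta)$ and invoking $\kappa=E_{\bdelta}[1/\nu_{\bdelta}]$ yields an expected quadratic of at most $\|\w^{*}-\w^{(r)}\|^{2}/\kappa\le \|\w^{*}\|^{2}/\kappa$, which combined with the linear piece produces the contractive recursion
$$E[\bar F(\w^{(r+1)})]-\bar F(\w^{*})\le (1-\eta)\bigl(E[\bar F(\w^{(r)})]-\bar F(\w^{*})\bigr)+\frac{\beta\eta^{2}\|\w^{*}\|^{2}}{2\kappa}\qquad\forall\eta\in[0,1].$$
Applying this with the diminishing schedule $\eta_r=2/(r+2)$ and using induction gives an $O(\beta\|\w^{*}\|^{2}/(\kappa r))$ tail bound once the initial deficit $\bar F(\0)-\bar F(\w^{*})$ has been amortized, i.e.\ after the $c=\lceil 2\kappa(\bar F(\0)-\bar F(\w^{*}))/(\beta\|\w^{*}\|^{2})\rceil$ warm-up iterations, which is precisely the offset in the statement. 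I expect the most delicate step to be the $\kappa$-bound: the collision inequality must be derived \emph{pointwise} in $\bdelta$ (giving $1/\kappa_{\bdelta}$) so that the Jensen-type averaging via $\kappa=E[1/\nu_{\bdelta}]$ can be applied cleanly, and one has to verify that the $\sqrt{p(\bdelta)}$ scaling of $\bar\bphi$ lines up with the sampling measure so no stray $\bdelta$-dependent factors leak into either the regularizer-convexity step (which is non-smooth in the L1 case) or the quadratic bound.
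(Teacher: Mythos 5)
Your overall architecture matches the paper's proof quite closely: both use the fully-corrective domination over a direction supported on the freshly drawn block, both obtain the curvature constant $\nu_{\bdelta}$ from the fact that each sample occupies exactly one bin so that at most $N\nu_{\bdelta}$ samples share any coordinate, both compare against $\alpha(\w^*-\w^{(r)})$ via convexity, and both finish with a Frank--Wolfe-type recursion whose warm-up phase produces the offset $c$. The two genuine differences are (i) your probe direction is the restriction of $\w^*-\w^{(r)}$ to $B_{\bdelta}$, whereas the paper first takes the \emph{block-optimal} proximal step (whose descent is $-\frac{1}{2\beta\nu_{\bdelta}}\|\brho_B+\g_B\|^2$) and only afterwards compares to $\alpha(\w^*-\w^{(r)})$; and (ii) you close with the fixed schedule $\eta_r=2/(r+2)$, which by the standard induction gives $2\beta\|\w^*\|^2/(\kappa(r+2))$ rather than the stated constant and does not naturally generate the two-case analysis that defines $c$ — though since your recursion holds for every $\eta\in[0,1]$ you could simply minimize over $\eta$ and recover the paper's exact endgame.

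Difference (i), however, hides a real gap at the one step you yourself flag as delicate. After your pointwise bound the expected quadratic term carries the factor $E_{\bdelta}[\nu_{\bdelta}\,\|(\w^*-\w^{(r)})|_{B_{\bdelta}}\|^2]$, and you need to replace $\nu_{\bdelta}$ by $1/\kappa$ with $\kappa:=E_{\bdelta}[1/\nu_{\bdelta}]$. This goes the wrong way through Jensen: $E[\nu_{\bdelta}]\ge 1/E[1/\nu_{\bdelta}]=1/\kappa$, so even if $\nu_{\bdelta}$ were independent of the block norms you would only get an upper bound with $E[\nu_{\bdelta}]\ge 1/\kappa$ in place of $1/\kappa$; to get the claimed bound you would need a negative correlation strong enough to overcome the Jensen gap, which you neither assume nor prove. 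The paper avoids this by optimizing the block step first, so that the per-block descent is proportional to $1/\nu_{\bdelta}$ and the expectation produces $E[\frac{1}{\nu_{\bdelta}}\|\brho_B+\g_B\|^2]\ge E[\frac{1}{\nu_{\bdelta}}]\,E[\|\brho_B+\g_B\|^2]=\kappa\,E[\|\brho_B+\g_B\|^2]$ under a (merely asserted) \emph{non-negative} correlation between $1/\nu_{\bdelta}$ and the block's gradient energy — an assumption under which independence already suffices. To repair your route you would either have to restate the theorem with $\kappa$ replaced by $1/E[\nu_{\bdelta}]$ (a weaker constant) together with an independence-type assumption, or restructure as the paper does so that $1/\nu_{\bdelta}$, not $\nu_{\bdelta}$, is the random multiplier that gets averaged.
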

\begin{proof}
Firstly, we obtain an expression for the progress made by each iteration of FC-RBCD. Let $B:=B_{\bdelta^{(r)}}$ be the block drawn at step 1 of FC-RBCD, and $\wbarb^{(r+1)}$ be the minimizer of \eqref{optimized_obj} subject to support $supp(\wbarb)\subseteq \A^{(r+1)}$ given by the step 3. Since $B \subseteq\A^{(r+1)}$, we have
\begin{equation}\label{FC}
F(\wbarb^{(r+1)})-F(\wbarb^{(r)}) \leq F(\wbarb^{(r)}+\etab_{B})-F(\wbarb^{(r)})
\end{equation}
for any $\etab_{B}:supp(\etab)\subseteq B$. Then denote $b_i$ as the bin $\x_i$ falling in and $L'_i = \nabla L(\wbarb^{(r)T}\bphi(\x_i),y_i)$, by smoothness of the loss (Assumption \ref{as:smooth}), we have 
\begin{equation}\label{loss_upper_bound}
\begin{aligned}
&Loss(\wbarb^{(r)}+\etab_{B}) - Loss(\wbarb^{(r)}) \leq \frac{1}{N}\sum_{i=1}^N  L'_i \phi_{b_i} \eta +\frac{\beta}{2}(\eta_{b_i}\phi_{b_i})^2  \\
&\leq \langle \g_{B},\etab_{B}\rangle + \frac{\beta\nu_{\bdelta^{(r)}} }{2}\|\etab_{B}\|^2 
\end{aligned}
\end{equation}
where the second inequality uses the fact $\phi_{b_i}=1$ and
$$
\g_{B}:=\nabla_{B} Loss(\wbarb^{(r)},\bphi).
$$ 
Now consider the regularization term, note since block $B$ is drawn from an inifinite-dimensional space, the probability that $B$ is in active set is $0$. Therefore, we have $B\cap \A^{(r)}=\emptyset$, $\wbarb^{(r)}_{B}=\0$ and $\cR_{B}(\wbarb^{(r)}_{B})=0$. As a result,
\begin{equation}\label{tmp1}
\begin{aligned}
&F(\wbarb^{(r)}+\etab_{B})-F(\wbarb^{(r)}) \\
&\leq \cR_{B}(\etab_{B})+\langle \g_{B},\etab_{B}\rangle + \frac{\beta\nu_{\bdelta^{(r)}} }{2}\|\etab_{B}\|^2 
\end{aligned}
\end{equation}
Let $\etab_{B}$ be the minimizer of RHS of \eqref{tmp1}. It satisfies $\brho_{B}+\g_{B}+\beta v_{\bdelta^{(r)}} \etab_{B}=\0$ for some $\brho_{B}\in \partial\cR(\etab_B)$, and thus,
\begin{equation}\label{tmp4}
\begin{aligned}
&F(\wbarb^{(r)}+\etab_{B})-F(\wbarb^{(r)}) \\
&\leq \langle\brho_B,\etab_B\rangle + \langle \g_{B},\etab_{B}\rangle + \frac{\beta\nu_{\bdelta^{(r)}} }{2}\|\etab_{B}\|^2\\
&=  -\frac{1}{2\beta\nu_{\bdelta^{(r)}} }\|\brho_B+\g_B\|^2
\end{aligned}
\end{equation}
Now taking expectation w.r.t. $p(\bdelta)$ on both sides of \eqref{tmp4}, we have
\begin{equation}\label{tmp5}
\begin{aligned}
E[F(\wbarb^{(r)}+\etab_{B})]-F(\wbarb^{(r)}) &\leq -\frac{1}{2\beta} E\left[\frac{1}{\nu_{\bdelta^{(r)}}}\|\brho_B+\g_B\|^2 \right] \\
                                             &\leq -\frac{1}{2\beta} E\left[\frac{1}{\nu_{\bdelta^{(r)}}}\right]E\left[\|\brho_B+\g_B\|^2 \right]\\
                                             &\leq -\frac{\kappa}{2\beta} \|\brhob_B+\gbarb_B\|^2
\end{aligned}
\end{equation}
where $\brhob:=\sqrt{\p}\circ\brho$, $\;\gbarb:=\sqrt{\p} \circ \g$, and the second inequality uses the fact that the number of used bins $\kappa_{\bdelta^{(r)}}=1/\nu_{\bdelta^{(r)}}$ has non-negative correlation with the discriminative power of block $B$ measured by the magnitude of gradient with soft-thresholding $\|\brhob_B+\gbarb_B\|$ (i.e. fewer collisions on grid $B$ implies $B$ to be a better block of features ). 

The result of \eqref{tmp5} expresses descent amount in terms of the proximal gradient of the reparameterized objective \eqref{surrogate}. Note for $B:B\cap\A^{(r)}=\emptyset$, we have $\w_B^{(r)}=\0$, and $\cR_B(\etabarb)-\cR_B(\0)= \langle\brhob, \etabarb\rangle$; on the other hand, for $B\subseteq \A^{(r)}$, we have
\begin{equation*}
\0\in arg\min_{\etabarb_{B}}\;\cR_{B}(\sqrt{p_B}\w_B+\etabarb_B)+\langle \gbarb_B, \sqrt{p_B}\w_{B}+\etabarb_B\rangle
\end{equation*}
since they are solved to optimality in the previous iteration. Then
\begin{equation}\label{tmp2}
\begin{aligned}
&E[F(\wbarb^{(r)}+\etab)]-F(\wbarb^{(r)}) \\
&\leq -\frac{\kappa}{2\beta} \|\brhob_B+\gbarb_B\|^2 = \langle\brhob,\etabarb\rangle + \langle \gbarb,\etabarb \rangle + \frac{\beta}{2\kappa}\|\etabarb_{\bar{\A}^{(r)}}\|^2\\
&= \cR(\sqrt{\p} \circ (\w^{(r)}+\etabarb)  )-\cR(\sqrt{\p}\circ \w^{(r)}) + \langle \gbarb,\etabarb \rangle + \frac{\beta}{2\kappa}\|\etabarb_{\bar{\A}^{(r)}}\|^2 
\end{aligned}
\end{equation}
where $\etabarb_{\bar{\A}^{(r)}}:=(\etabarb_B)_{B:B\cap\A^{(r)}=\emptyset }$ and $\etabarb:=\sqrt{\p}\circ\etab$. Thus the final step is to show the descent amount given by RHS of \eqref{tmp2} decreases the suboptimality $\bar{F}(\w^{(r)})-\bar{F}(\w^*)$ significantly. This can be achieved by considering $\etabarb$ of the form $\alpha(\w^{*}-\w^{(r)})$ for some $\alpha\in[0,1]$ as follows:
\begin{equation}\label{tmp3}
\begin{aligned}
&E[\bar{F}(\w^{(r)}+\etabarb)]-\bar{F}(\w^{(r)}) \\
&\leq \min_{\etabarb} \cR(\sqrt{\p} \circ (\w^{(r)}+\etabarb)  )-\cR(\sqrt{\p}\circ \w^{(r)})+\langle \gbarb ,\etabarb \rangle + \frac{\beta }{2\kappa}\|\etabarb_{\bar{\A}^{(r)}}\|^2 \\
&\leq \min_{\etabarb} \bar{F}(\w^{(r)}+\etabarb)-\bar{F}(\w^{(r)}) + \frac{\beta }{2\kappa}\|\etabarb_{\bar{\A}^{(r)}}\|^2  \\
&\leq \min_{\alpha\in[0,1]} \bar{F}((1-\alpha)\w^{(r)}+\alpha\w^*)-\bar{F}(\w^{(r)}) + \frac{\beta \alpha^2}{2\kappa}\|\w^*\|^2\\
&\leq \min_{\alpha\in[0,1]} -\alpha(\bar{F}(\w^{(r)})-\bar{F}(\w^*)) + \frac{\beta \alpha^2}{2\kappa}\|\w^*\|^2,
\end{aligned}
\end{equation}
where the second and fourth inequalities are from convexity of $\bar{F}(.)$. The $\alpha$ minimizing \eqref{tmp3} is $\alpha^*:=\min(\frac{\kappa(\bar{F}(\w^{(r)})-\bar{F}(\w^*))}{\beta\|\w^*\|^2},1)$, which leads to
\begin{equation}\label{tmp6}
\begin{aligned}
&E[\bar{F}(\w^{(r)}+\etabarb)]-\bar{F}(\w^{(r)}) \leq -\frac{\kappa(\bar{F}(\w^{(r)})-\bar{F}(\w^*))^2}{2\beta\|\w^*\|^2}
\end{aligned}
\end{equation}
if $\bar{F}(\w^{(r)})-\bar{F}(\w^*)\leq \frac{\beta}{\kappa}\|\w^*\|^2$; otherwise, we have $E[\bar{F}(\w^{(r)}+\etabarb)]-\bar{F}(\w^{(r)})\leq -\frac{\beta}{2\kappa}\|\w^*\|^2$. Note the latter case cannot happen more than $c=\lceil\frac{2\kappa(\bar F(\0)-\bar F(\w^*))}{\beta\|\w^*\|^2}\rceil$ times since FC-RBCD is a descent method. Therefore, for $r':=r-c>0$, solving the recursion \eqref{tmp6} leads to the conclusion. \qedsymbol
\end{proof}
Note we have $\|\sqrt{\p}\circ \w^*\|_1$ $\leq$ $\|\sqrt{\p}\|\|\w^*\|$ $=\|\w^*\|$ in the L1-regularized case, and thus the FC-RBCD guarantees convergence of the L1-norm objective to the (non-square) L2-norm objective. The convergence result of Theorem \ref{thm:RBconverge} is of the same form to the rate proved in \cite{yen2014sparse} for other random features, however, with an additional multiplicative factor $\kappa\geq 1$ that speeds up the rate by $\kappa$ times. Recall that $\kappa$ is the lower bound on the expected number of bins being used by data samples for each block of features $B_{\bdelta}$, which in practice is a factor much larger than $1$, as shown in the Figure \ref{fig:RB_matrix} and also in our experiments. In particular, in case each grid $B_{\bdelta}$ has similar number of bins being used, we have $D\approx \kappa R$, and thus obtain a rate of the form
\begin{equation}\label{D_rate}
    E[\bar F(\w^{(R)})] - \bar F(\w^*) \lesssim \frac{\beta\|\w^*\|^2}{D}.
\end{equation}
Note for a fixed $R$, the total number of features $D$ is increasing with kernel parameter $1/\sigma$ in the case of Laplacian Kernel, which means the less smooth the kernel, the faster convergence of RB. A simple extreme case is when $\sigma\rightarrow 0$, where one achieves $0$ training loss, and the RB, by putting each sample in a separate bin, converges to $0$ loss with $R=1$, $D=N$. On the other hand, other random features, such as Fourier, still require large $R$ for convergence to $0$ loss. In practice, there are many data that require a small kernel bandwidth $\sigma$ to avoid underfitting, for which RB has dramatically faster convergence than other RF.


\section{Strong Parallelizability of Random Binning Features} \label{section:parallel}
\label{section:parallel}

In this section, we study another strength of RB Features in the context of \emph{Sparse Random Feature} \cite{yen2014sparse}, where one aims to train a sparse nonlinear predictor that has faster prediction and more compact representation through an L1-regularized objective. In this case, the CD method is known as state-of-the-art solver \cite{yuan2010comparison,richtarik2014iteration}, and we aim to show that the structure of RB allows CD to be parallelized with much more speedup than that of other random features.

\subsection{Coordinate Descent Method}

Given the $N\times D$ data matrix produced by the RB Algorithm \ref{alg:RB}, a RCD Method solves 
\begin{equation}\label{L1_obj}
\min_{\w\in\R^D} \;\lambda\|\w\|_1 + \frac{1}{N}\sum_{n=1}^N L(\w^T\z_i, y_i)
\end{equation}
by minimizing \eqref{L1_obj} w.r.t. a single coordinate $j$
\begin{equation}\label{cd_obj}
    \min_{d_j} \; \lambda |w_j+d_j| + g_jd_j + \frac{M_j}{2}d_j^2
\end{equation}
at a time, where 
\begin{equation}\label{cd_grad}
    g_j:=\frac{1}{N}\sum_{n=1}^N (\nabla_j L(\w^T\z_i,y_i)) z_{ij}
\end{equation}
is the gradient of loss term in \eqref{L1_obj} w.r.t. the $j$-th coordinate, and $M_j:=\beta\frac{1}{N}\sum_{i=1}^N z_{ij}^2$ is an upper bound on $\nabla_{jj}L(.)$. Note, by focusing on single coordinate, \eqref{cd_obj} has a tighter quadratic upper bound than other algorithms such as Proximal Gradient Method, and allows simple closed-form solution 
\begin{equation}\label{cd_sol}
    d_j^* := \prox_{R/M_j}( w_j-\frac{g_j}{M_j} )-w_j
\end{equation}
where 
\begin{equation*}
\prox_{R}(v_j):=\left\{\begin{array}{ll}
0  ,         & |v_j| \leq \lambda \\
v_j-\lambda, & v_j > \lambda \\
v_j+\lambda, & v_j < \lambda 
\end{array}\right. .
\end{equation*}
To have efficient evaluation of the gradient \eqref{cd_grad}, a practical implementation maintain the responses
\begin{equation}\label{response}
    \yh_i := \w^Tz_i
\end{equation}
after each update $\w^{t+1}:= \w^{t} + d^*_j\e_j$, so the cost for each coordinate-wise minimization takes $O(nnz(\z_j))$ time for both gradient evaluation and maintenance of \eqref{response}, where $\z_j:=(z_{ij})_{i\in[N]}$. The algorithm is summarized in Alg. \ref{alg:PRCD-RB}, which just like the iterative solver introduced in section \ref{section:iterative}, has cost $O(nnz(Z))$ for one pass of all variables $j\in[D]$.

\begin{algorithm}[t]
    \caption{Sparse Random Binning Features via Parallel RCD}
    \label{alg:PRCD-RB}
\begin{algorithmic} 
    \STATE 0. Generate RB feature matrix $Z$ by Algorithm \ref{alg:RB}
    \STATE 1. $\z^{1}=\0$, $\w^{1}=\0$.
    \FOR {t=1......T (with $\tau$ threads in parallel)}
        \STATE 2. Draw $j$ from $[D]$ uniformly at random.
        \STATE 3. Compute $d^*_j$ by \eqref{cd_sol}.
        \STATE 4. $\w^{t+1}:= \w^{t} + d^*_j\e_j$.
        \STATE 5. Maintain $\yh_i,\forall i\in[N]$ to satisfy \eqref{response}.
    \ENDFOR
\end{algorithmic}
\end{algorithm}

\subsection{Parallel Randomized Coordinate Descend on Random Binning Features}

The RCD, however, is hard to parallelize \cite{richtarik2014iteration}. It is known that simultaneous updates of two coordinates $j_1$, $j_2$ could lead to divergence, and although one can enforce convergence by shortening the step size $\frac{1}{M_{p}}\ll \frac{1}{M_j}$, the convergence rate will not be improved with parallelization without additional assumption \cite{bradley2011parallel,richtarik2015parallel}.

On the other hand, in \cite{richtarik2015parallel}, it is shown that a function with \emph{partially separable} smooth term plus a separable non-smooth term 
\begin{equation}\label{partial_separable}
\min_{\w\in \R^{D}} \; F(\w):=\Omega(\w) + \sum_{i=1}^N f_{i}(\w)
\end{equation}
can be parallelized with guaranteed speedup in terms of overall complexity, where $\Omega(\w)$ is a non-smooth separable function and each function $f_i(\w)$ is a smooth depends only on at most $\omega$ number of variables. The form \eqref{partial_separable}, fortunately, fits our objective \eqref{L1_obj} with features $\z_i$ generated by RB. In particular, the generating process of RB guarantees that, for each block of feature $B_{\bdelta}$, the $i$-th sample can fall in exactly one bin $b=(\lfloor \frac{x_{n1}-u_1}{\delta_1}\rfloor, ..., \lfloor \frac{x_{nd}-u_d}{\delta_d}\rfloor)$, therefore each sample inolves at most $R$ features out of $D$. Specifically, let $\Omega(\w):=\lambda\|\w\|_1$ and
$$
f_{i}(\w):=\frac{1}{N}L(\w^T\z_i,y_i),
$$
we have $\omega=R$. Then by Theorem 19 of \cite{richtarik2015parallel}, a parallel RCD of $\tau$ threads that selects coordinate $j$ uniformly at random achieves a speed-up (i.e. time-of-sequential/time-of-parallel) of
\begin{equation}\label{parallel_rate_1}
\textit{speedup-ratio}=\frac{\tau}{1+\frac{(R-1)(\tau-1)}{D-1}}.
\end{equation}
When $D, R\gg 1$, and $\tau=a\bar\kappa+1$ where $\bar\kappa:=D/R$, \eqref{parallel_rate_1} becomes 
\begin{equation}\label{parallel_rate_2}
\textit{speedup-ratio}=\frac{a\bar\kappa+1}{1+a},
\end{equation}
which equals $(\bar\kappa+1)/2$ when $a=1$ and approaches $\bar\kappa$ when $a\rightarrow \infty$. Therefore, it is guaranteed in theory that parallelization can speedup RCD significantly as long as $\bar\kappa=D/R\gg 1$. We give our sparse RB Features algorithm based on parallel RCD in Alg. \ref{alg:PRCD-RB}. Note for other Random Features, there is no speedup guaranteed and our experiment shows that Parallel RCD performed on Random Fourier features could even have no speedup.

Note that the speedup achieved in this section is orthogonal to the faster convergence rate achieved in section \ref{section:faster_convergence}, so by increasing $\kappa$, the advantage of RB over other Random Features is super-linearly increasing if a parallel RCD is used. Note also that the results \eqref{parallel_rate_1}, \eqref{parallel_rate_2} also apply to algorithms that utilize Coordinate Descent as subproblem solvers such as Proximal (Quasi) Newton Method \cite{lee2012proximal,zhong2014proximal}. Those methods are typically employed for computationally expensive loss functions.

\section{Experiments}
\label{section:experiments}

In this section, we present extensive sets of experiments to demonstrate the efficiency and effectiveness of RB. The datasets are chosen to overlap with those in other papers in the literature, where the details are shown in the table \ref{tb: info of datasets}. All sets except census are available at LIBSVM data set \cite{LIBSVM}. All computations are carried out on a DELL dual socket with Intel Xeon processors at 2.93GHz for a total of 16 cores and 250 GB of memory running the SUSE Linux operating system. We implemented all methods in C++ and all dense matrix operations are performed by using the optimized BLAS and LAPACK routines provided in the OpenBLAS library. Due to the limited space, we only choose subsets of our results to present in each subsection. However, these results are objective and unbiased. 

\begin{table}[htbp]
\centering
\caption{Properties of the datasets.} 
\label{tb: info of datasets}
\small
\begin{center}
    \begin{tabular}{ |c|c|c|c|c|}
    \hline
    Name 		& $C$: Classes & $d$: Features& $N$: Train & $M$: Test \\ \hline 
    cadata 		& 1	 & 8   & 16,512   & 4,128      \\ \hline
    census 		& 1  & 119  & 18,186  & 2,273    \\ \hline
    ijcnn1 		& 2  & 22  & 35,000  & 91,701   \\ \hline
    cod\_rna    & 2  & 8  & 49,437  & 271,617  \\ \hline
    covtype     & 2  & 54 & 464,809 & 116,203  \\ \hline
    SUSY        & 2  & 18 & 4,000,000 & 1,000,000    \\ \hline
    mnist       & 10 & 780 & 60,000 & 10,000   \\ \hline
    acoustic    & 3  & 50 & 78,823 & 19,705    \\ \hline
    letter      & 26 & 16 & 10,500 & 5,000   \\ \hline
    \end{tabular}
\end{center}
\end{table}

\subsection{Effects of $\sigma$ and $R$ on Random Binning}
We perform experiments to investigate the characteristics of RB by varying the kernel parameter $\lambda$ and the rank $R$, respectively. We use a regularization $\lambda = 0.01$ to make sure the reasonable performance of RB and other low-rank kernels, although we found that RB is not sensitive to this parameter. We increase the $\sigma$ in the large interval from $1e-2$ to $1e2$ so that the optimal $\sigma$ locates within the interval. We apply CG iterative solver to operate on $Z$ directly. In order to make fair runtime comparison in each run, we set the $tol = 1e-15$ to force similar CG iterations with different $\sigma$. 

We evaluate the training and testing performance of regression and classification, when varying $\sigma$ with fixed $R$. In \cite{rahimi2007random}, it does not consider the effect of $\sigma$ in their analysis, which however has a large impact on the performance since $D$ depends on the number of bins which is controlled by $\sigma$. Fig. \ref{fig:TrTe_perf_time_lambda} shows that the training and testing performance coincidentally decrease (increase) before they diverge when $D$ grows by increasing $\sigma$. This confirms with our analysis in Theorem \ref{thm:RBconverge} that the larger $\kappa$, the faster convergence of RB Feature (recall that the convergence rate is $O(1/(\kappa R))$). 

Second, one should not be surprised that the empirical training time increases with $D$. The operations involving the weighting vector $w_{RB}$ could become as expensive as a sparse matrix-vector operation in an iterative solver. However, the total computational costs are still bounded by $O(NR)$ but the constant factor may vary with different datasets. Fortunately, in most of cases, the training time corresponding to the peak performance is just slightly higher than the smallest one. In practice, there are several ways to improve the computation costs by exploiting more advanced sparse matrix techniques such as preconditioning and efficient storage scheme, which is out scope of this paper and left for future study. 

Finally, we evaluate the training and testing performance when varying $R$ with fixed $\sigma$. Fig.\ref{fig:TrTe_perf_R} shows that the training and testing performance converge almost linearly with $D$, which again confirms our analysis in Theorem \ref{thm:RBconverge}. In addition, we observe that RB has strong overfit ability which turns out to be a strong attribute, especially when the hypothesis space has not yet saturated. 

\begin{figure*}[!htb]
\centering
\subfigure[cadata]{\includegraphics[width = 1.55in]{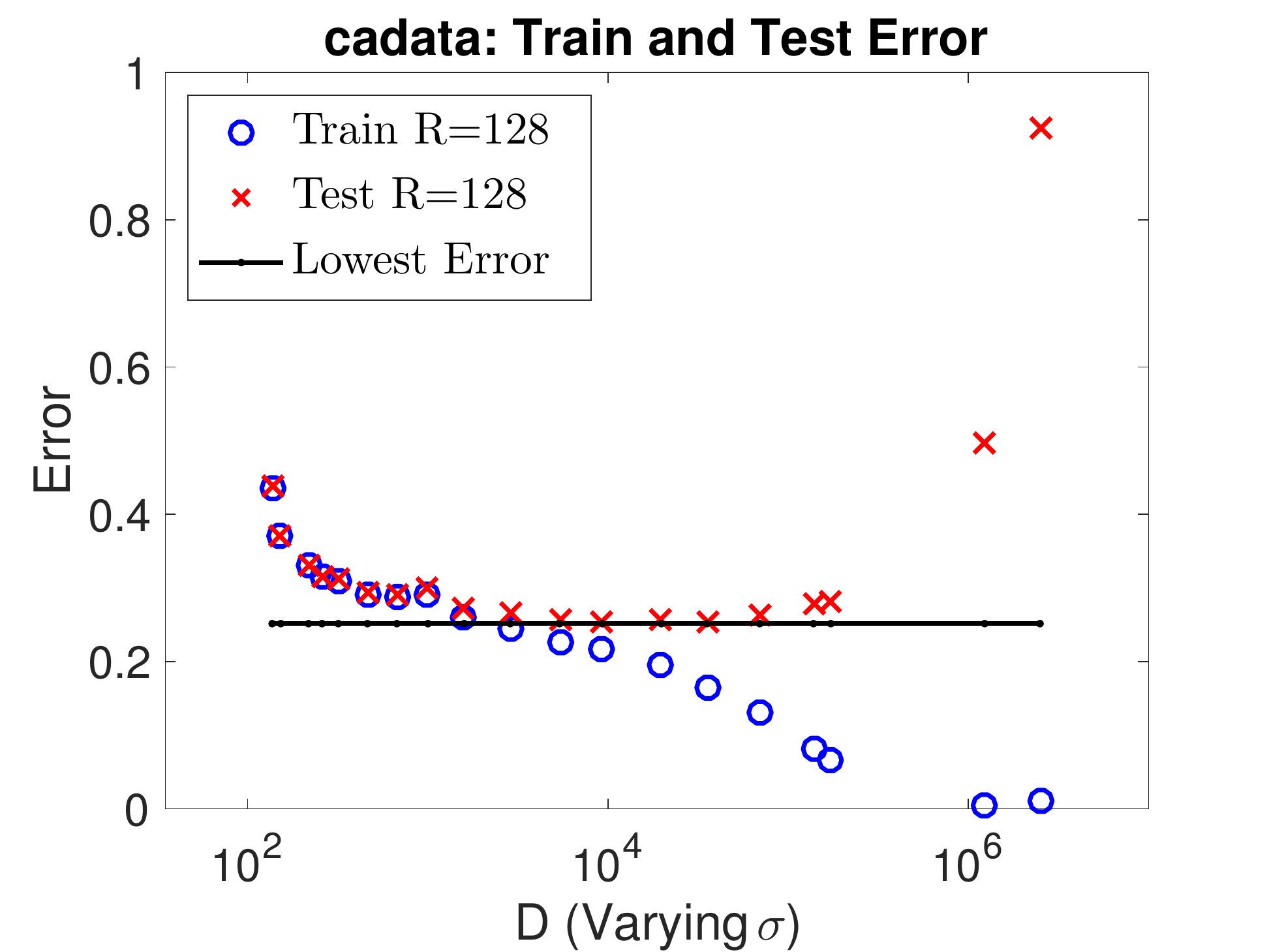}}
\subfigure[cadata]{\includegraphics[width = 1.55in]{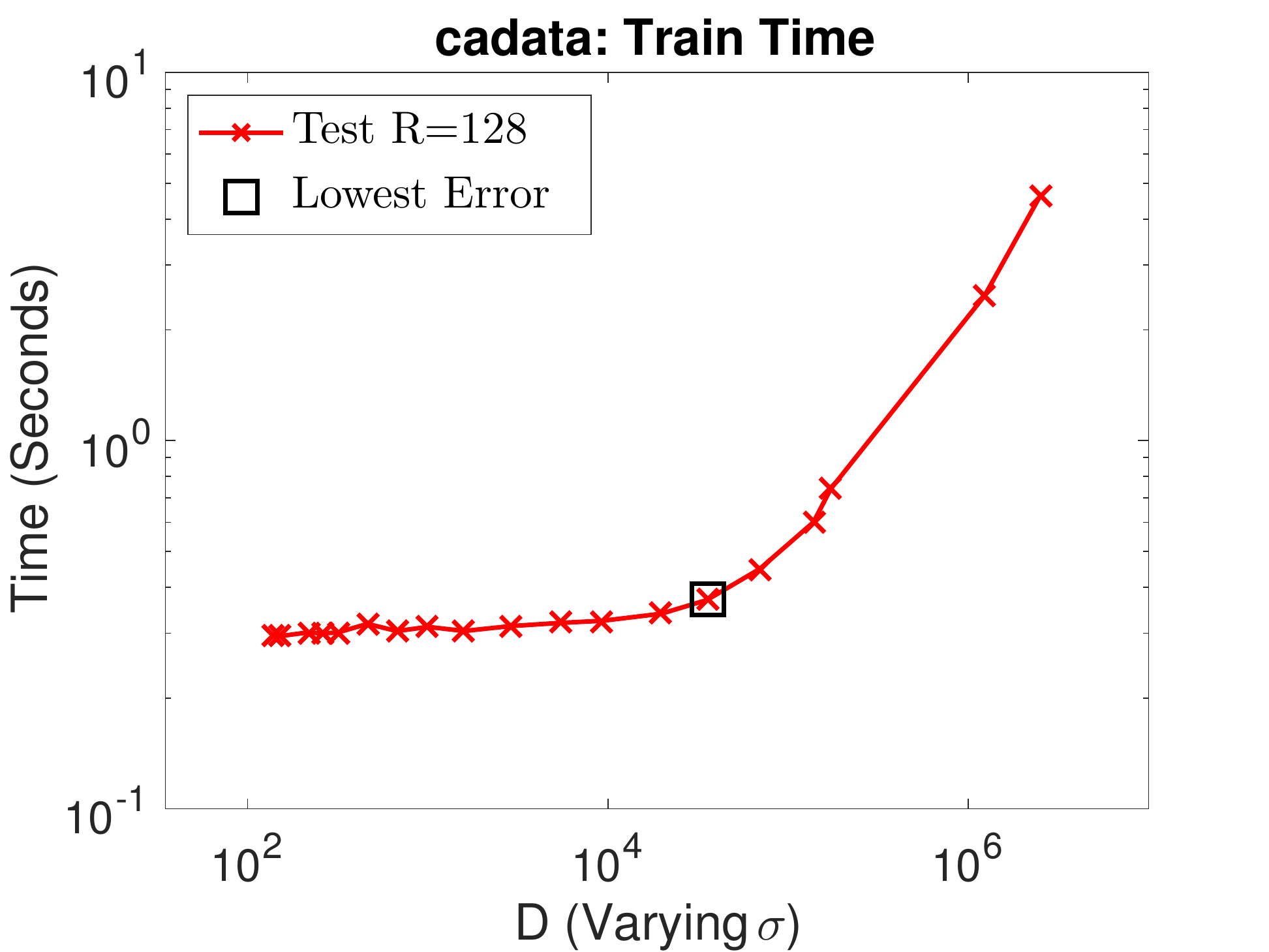}}
\subfigure[census]{\includegraphics[width = 1.55in]{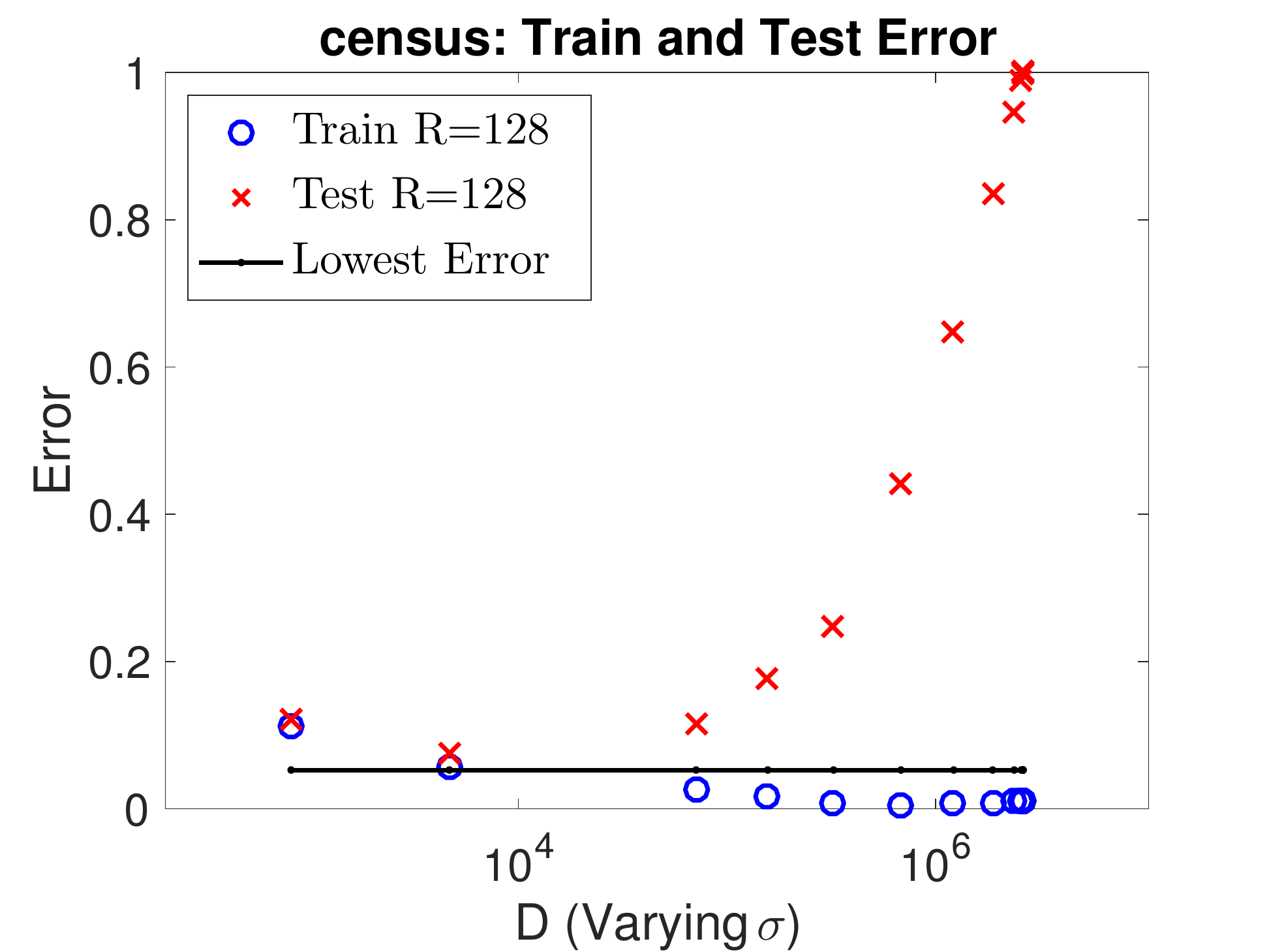}}
\subfigure[census]{\includegraphics[width = 1.55in]{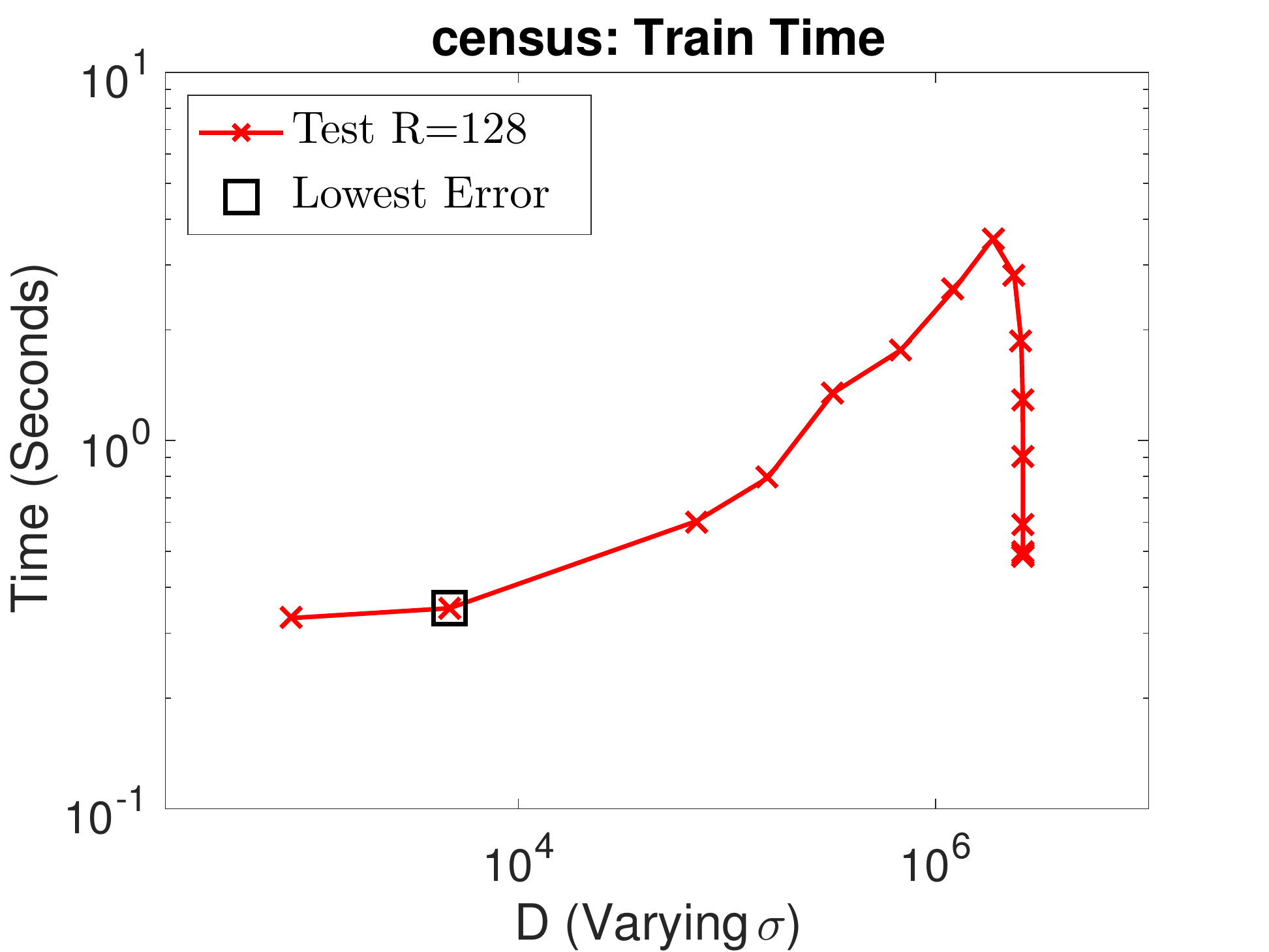}}
\subfigure[ijcnn1]{\includegraphics[width = 1.55in]{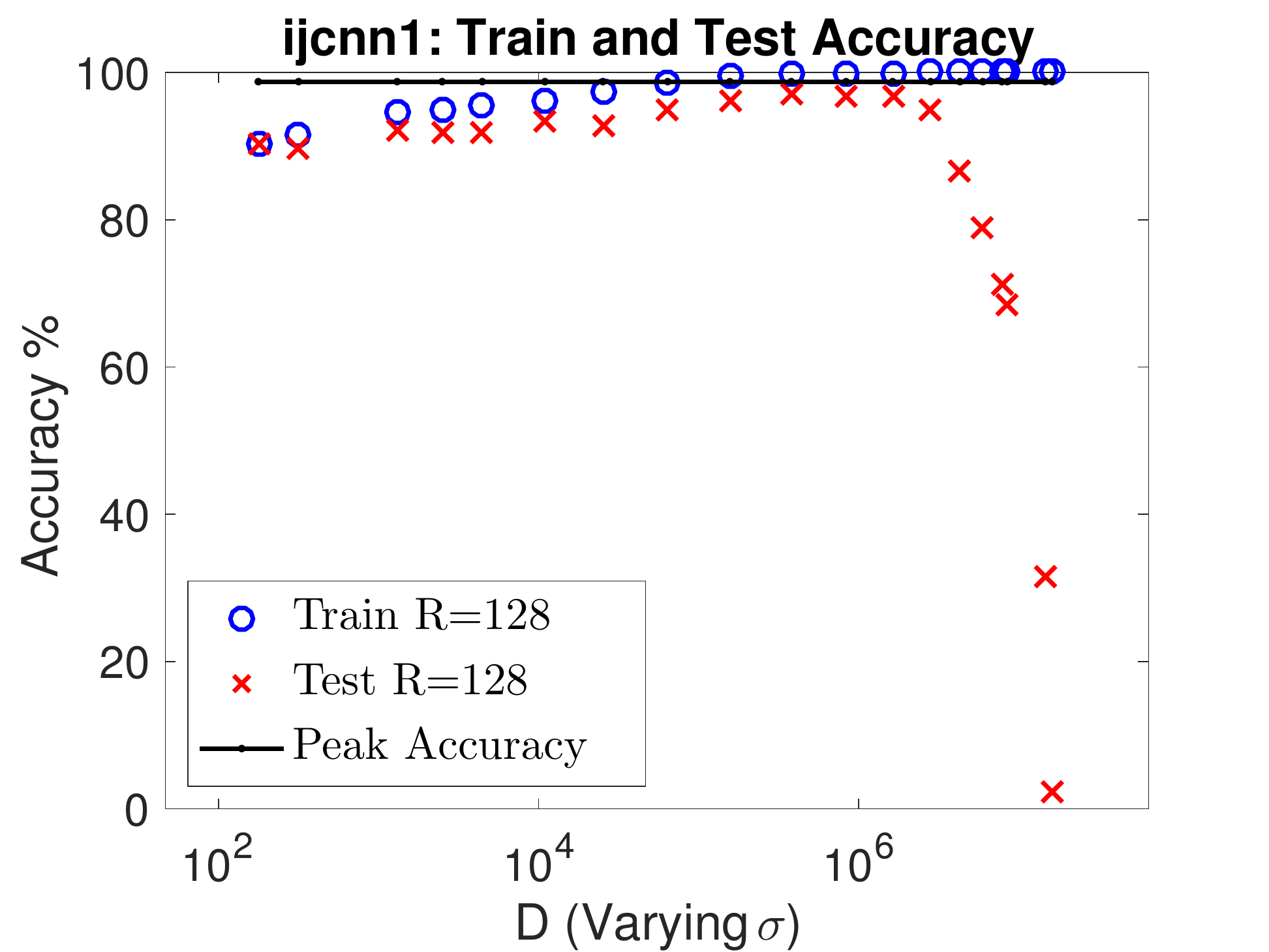}}
\subfigure[ijcnn1]{\includegraphics[width = 1.55in]{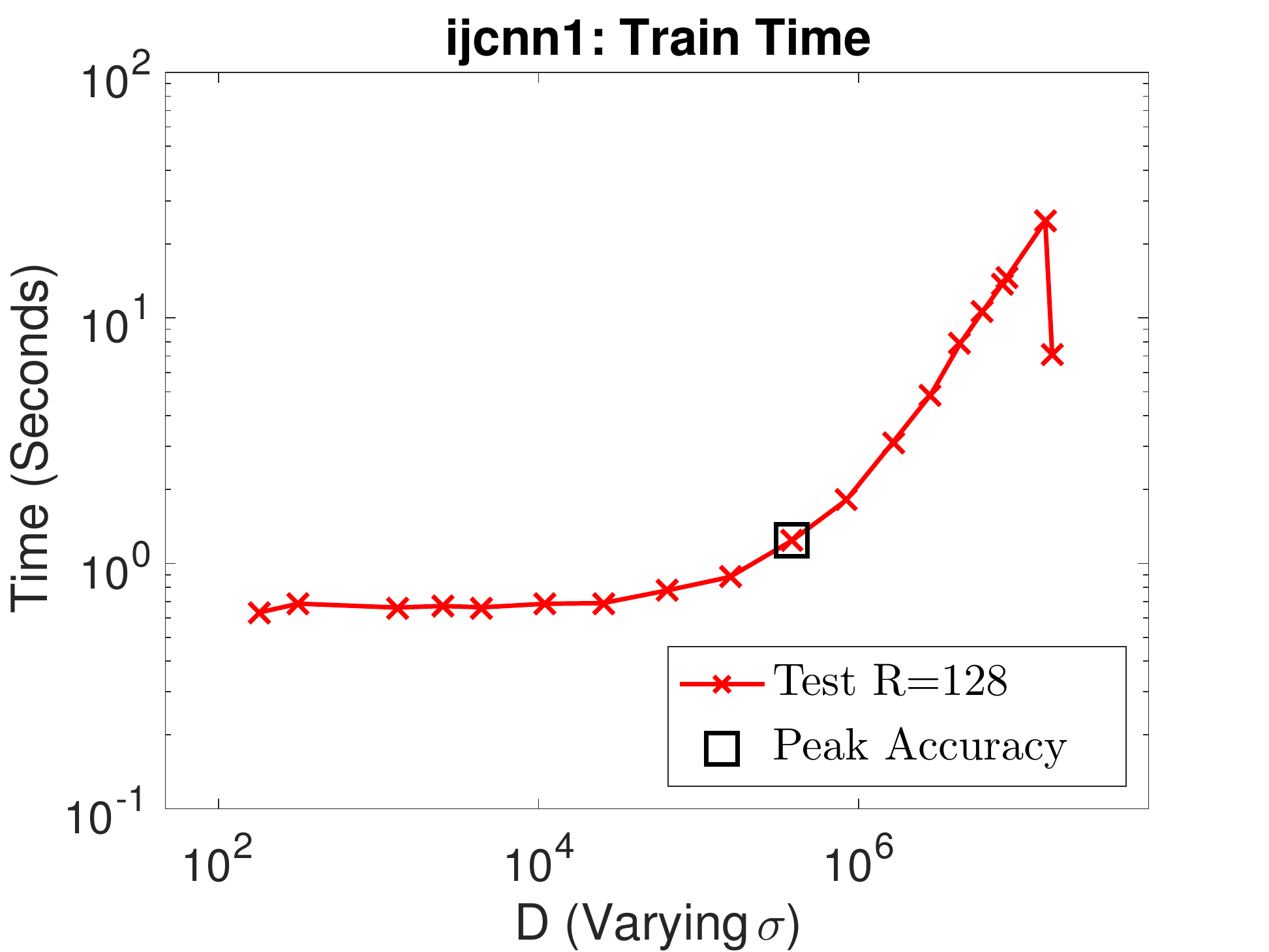}}
\subfigure[covtype]{\includegraphics[width = 1.55in]{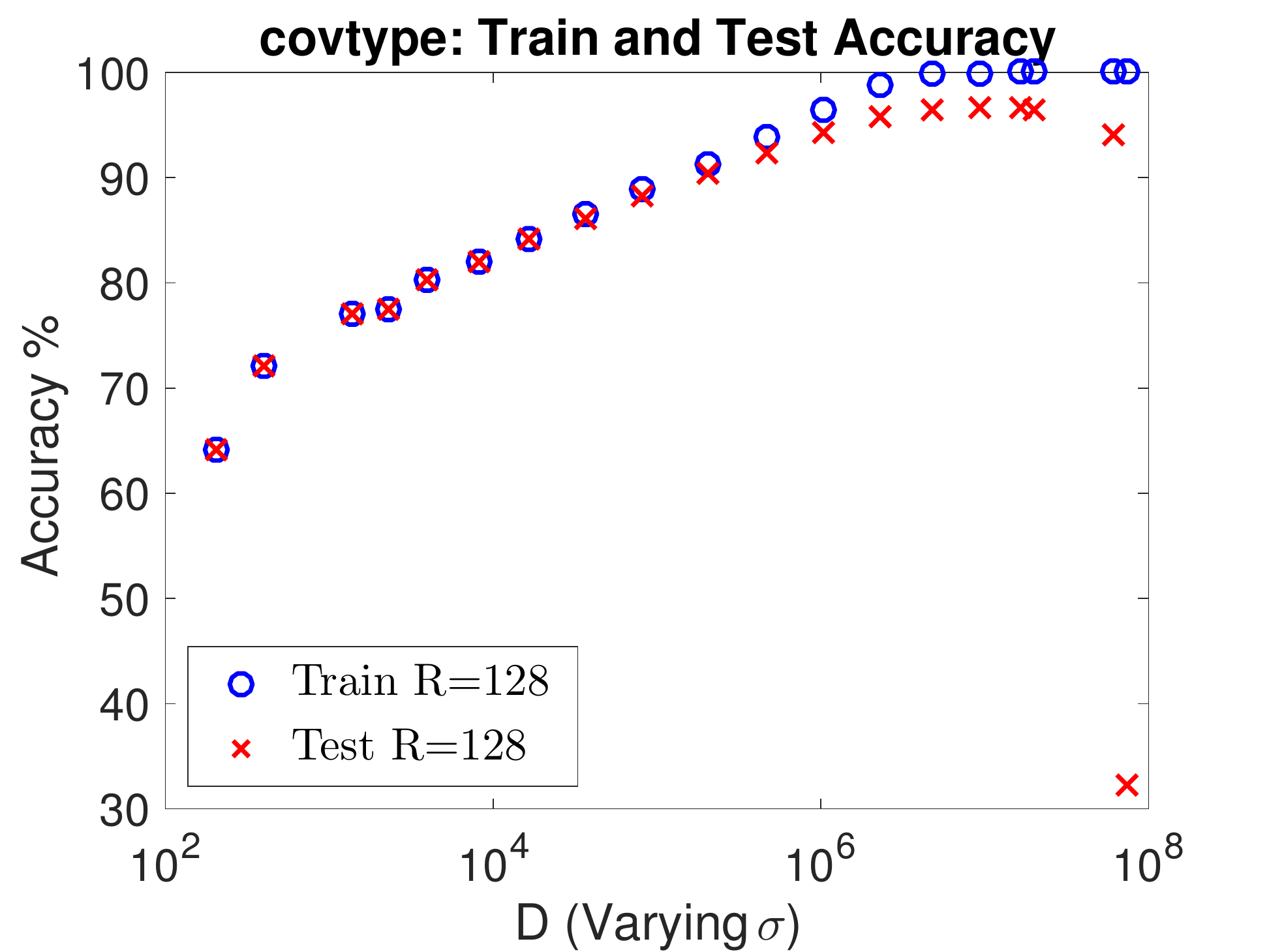}}
\subfigure[covtype]{\includegraphics[width = 1.55in]{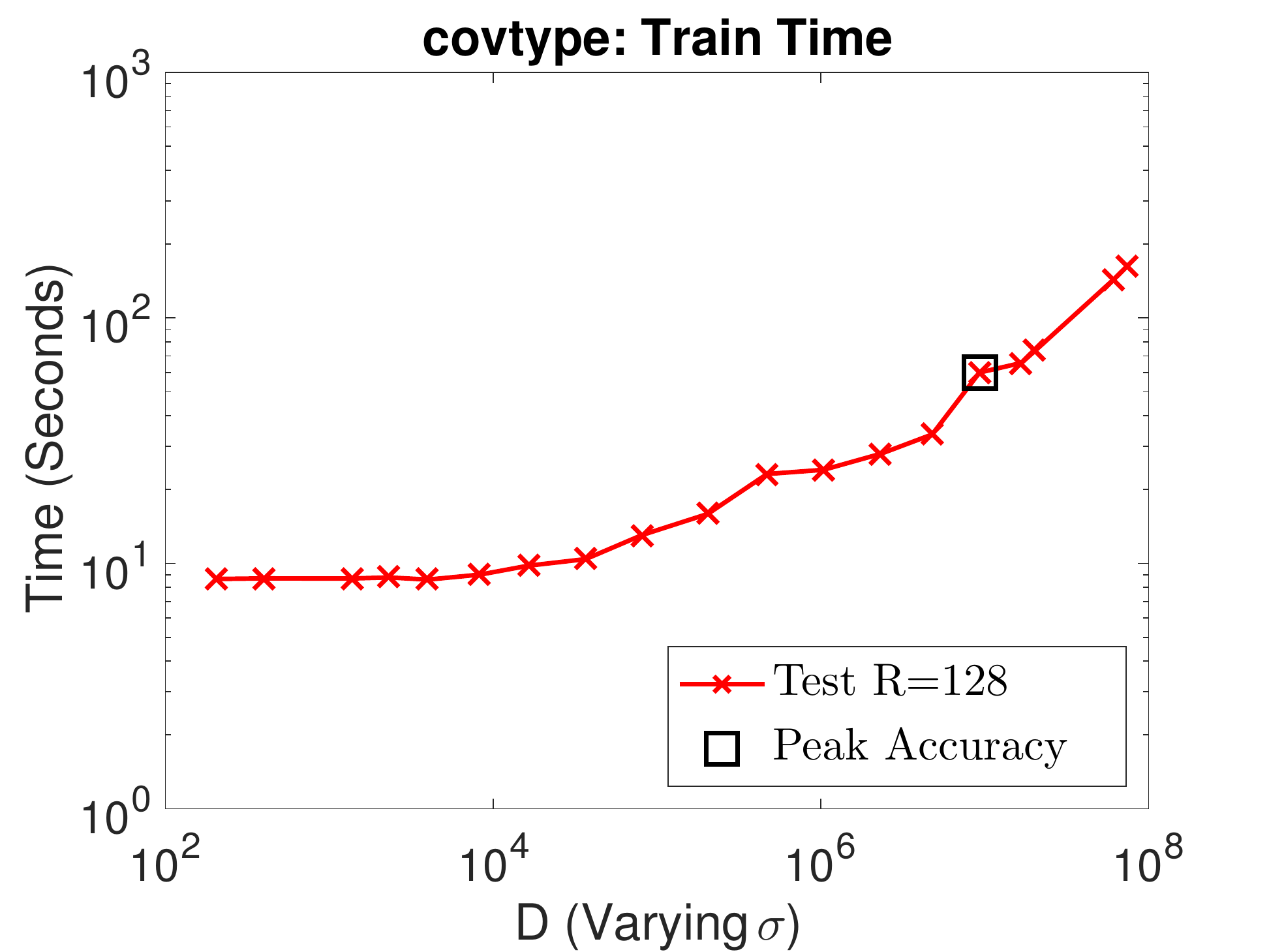}}
\subfigure[SUSY]{\includegraphics[width = 1.55in]{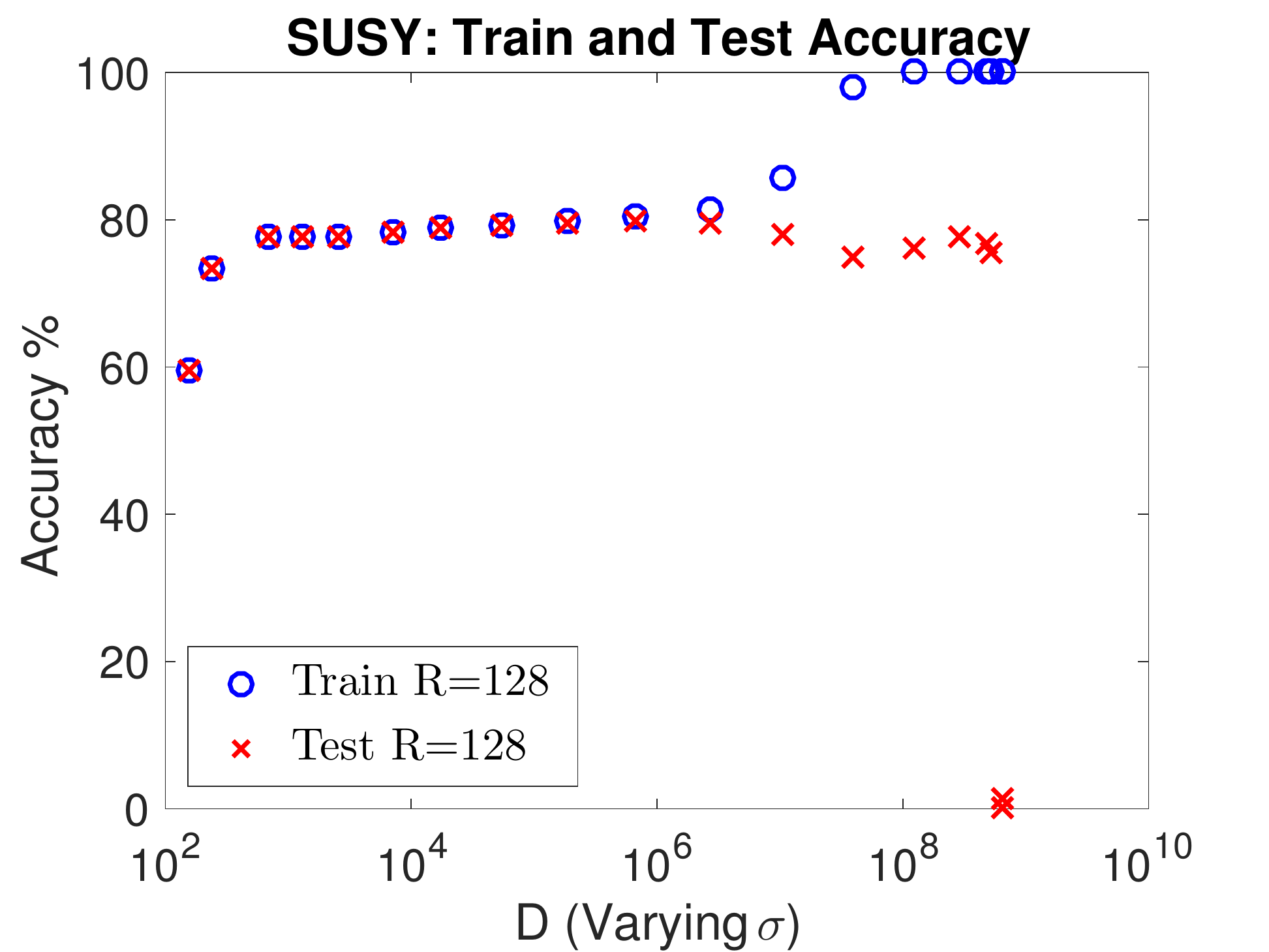}}
\subfigure[SUSY]{\includegraphics[width = 1.55in]{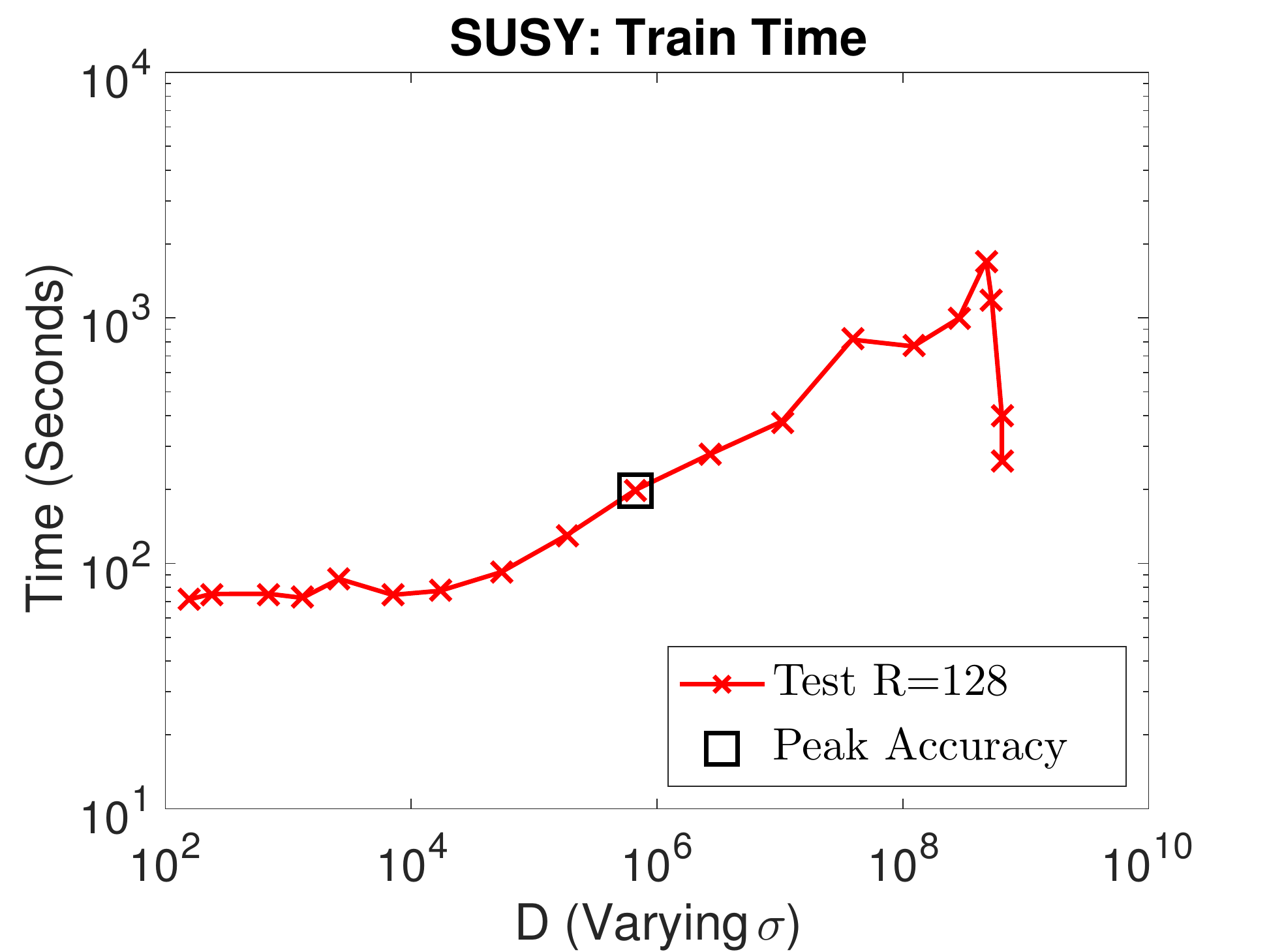}}
\subfigure[mnist]{\includegraphics[width = 1.55in]{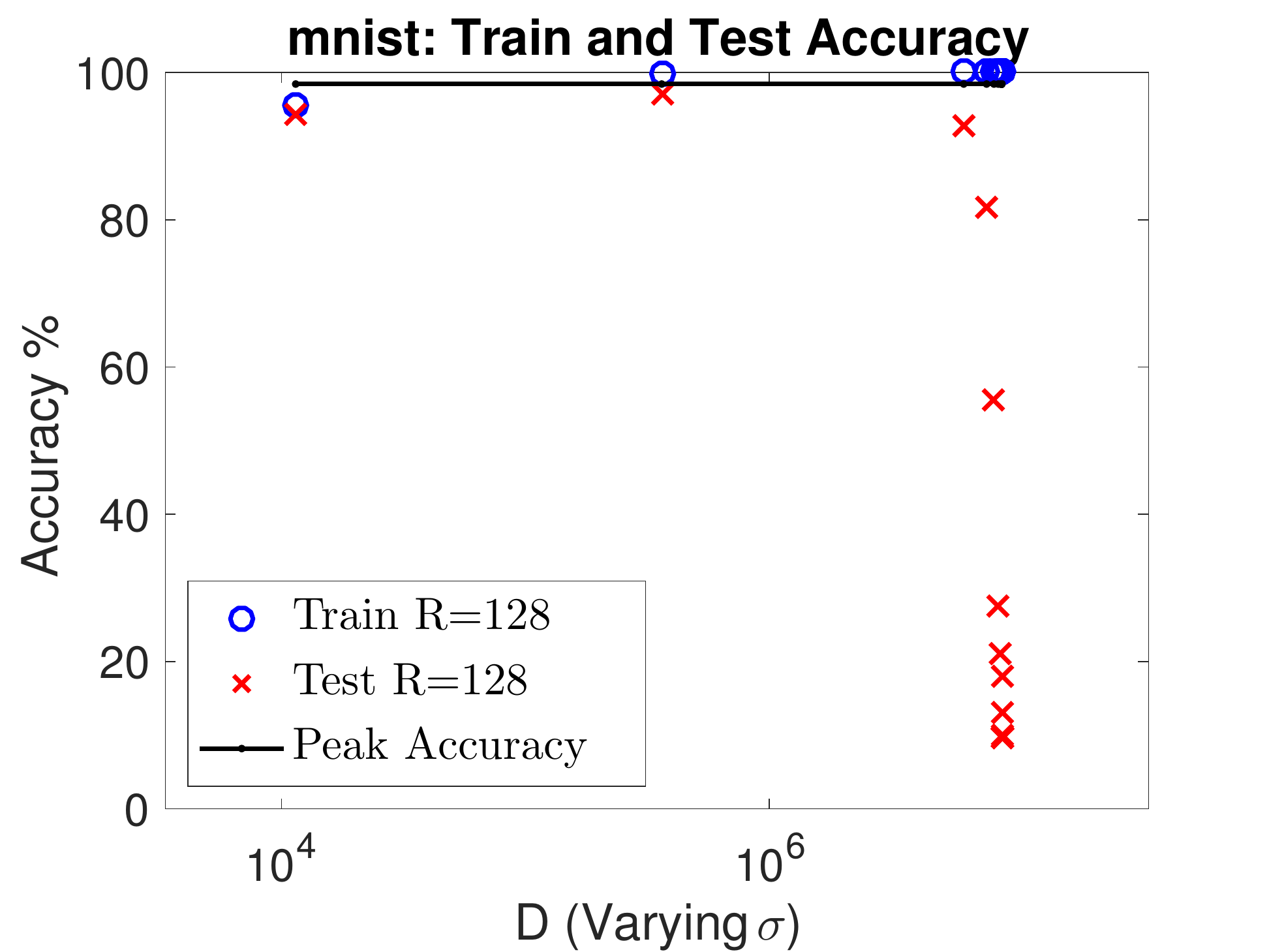}}
\subfigure[mnist]{\includegraphics[width = 1.55in]{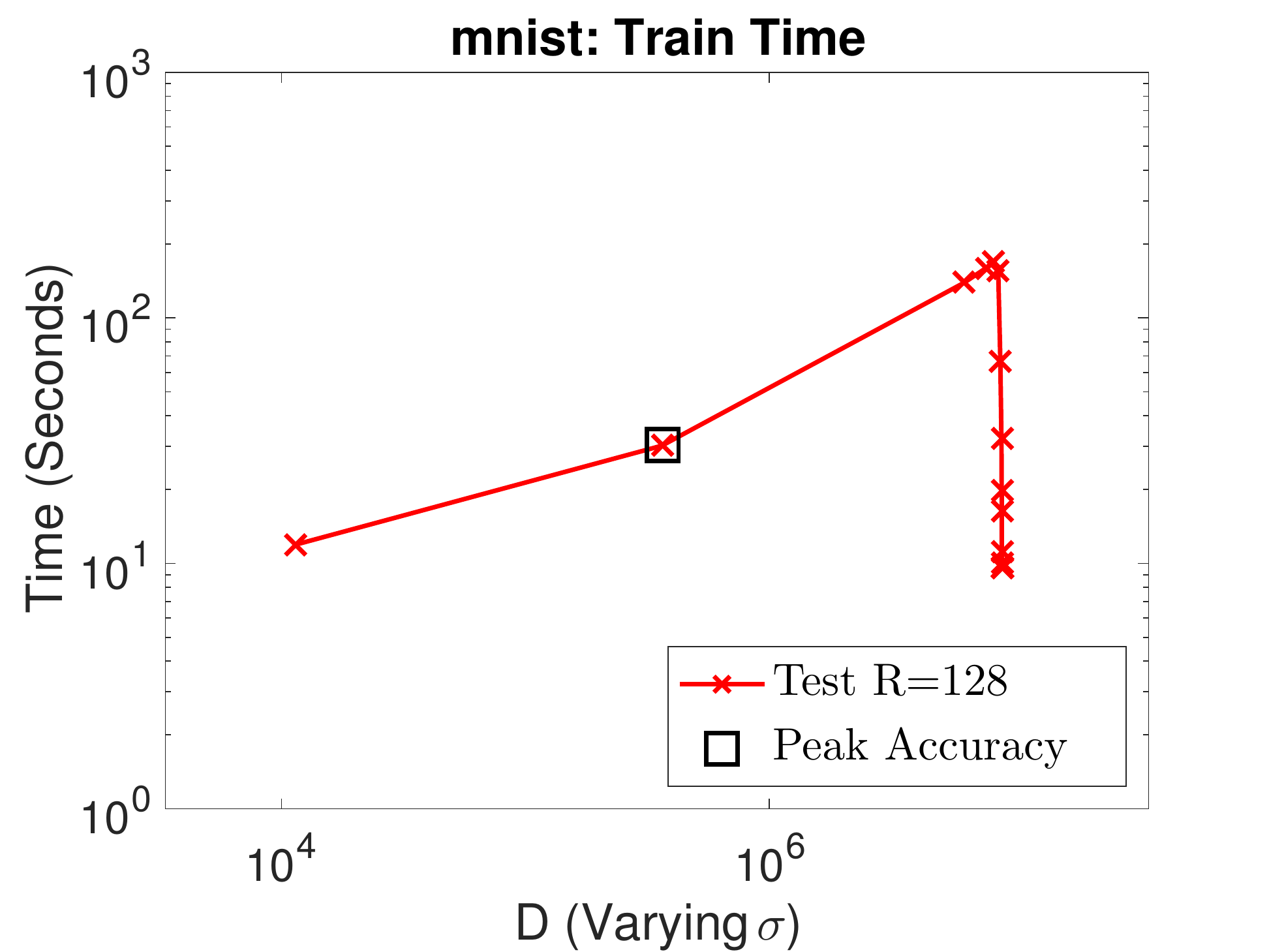}}
\subfigure[acoustic]{\includegraphics[width = 1.55in]{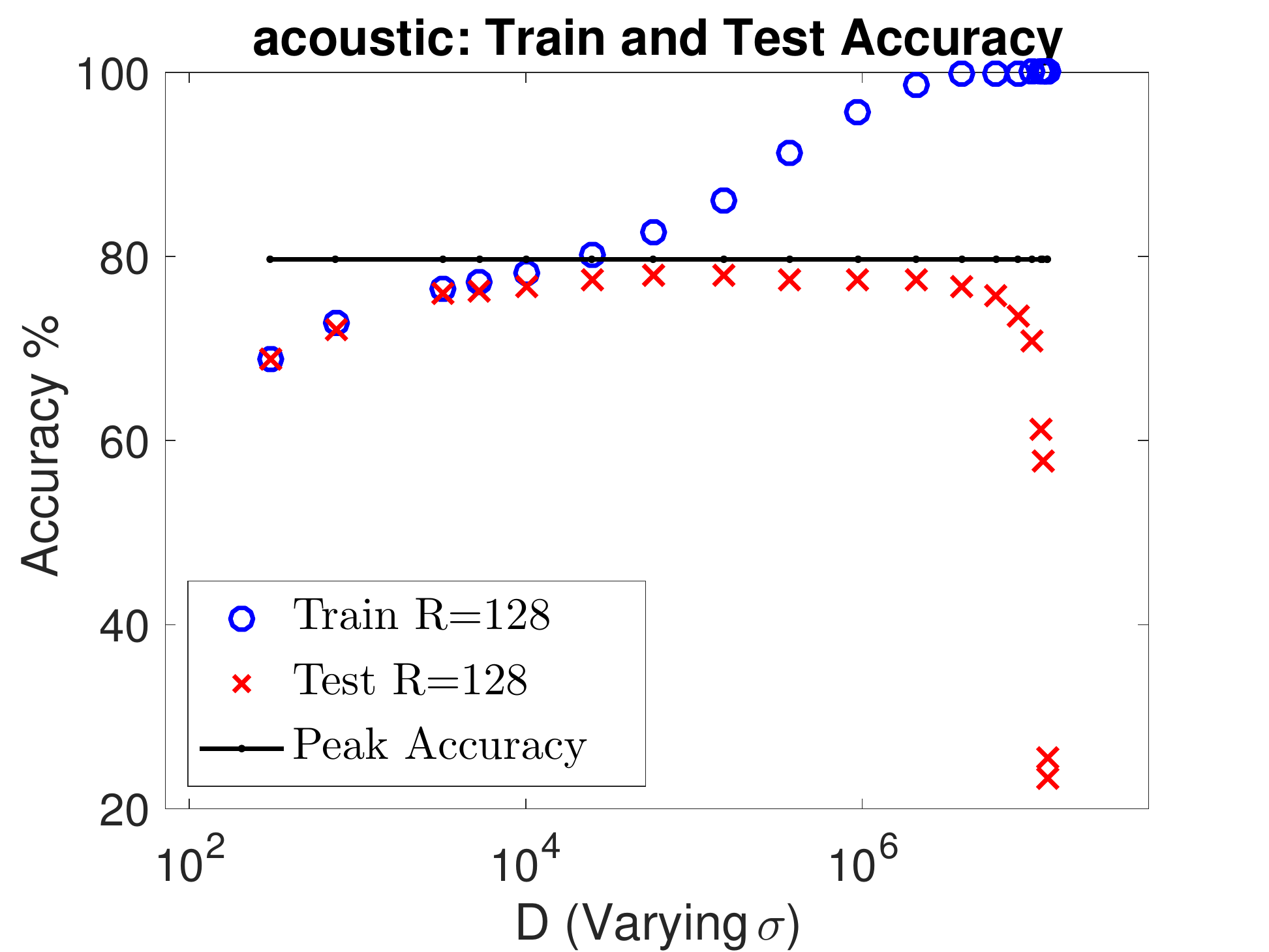}}
\subfigure[acoustic]{\includegraphics[width = 1.55in]{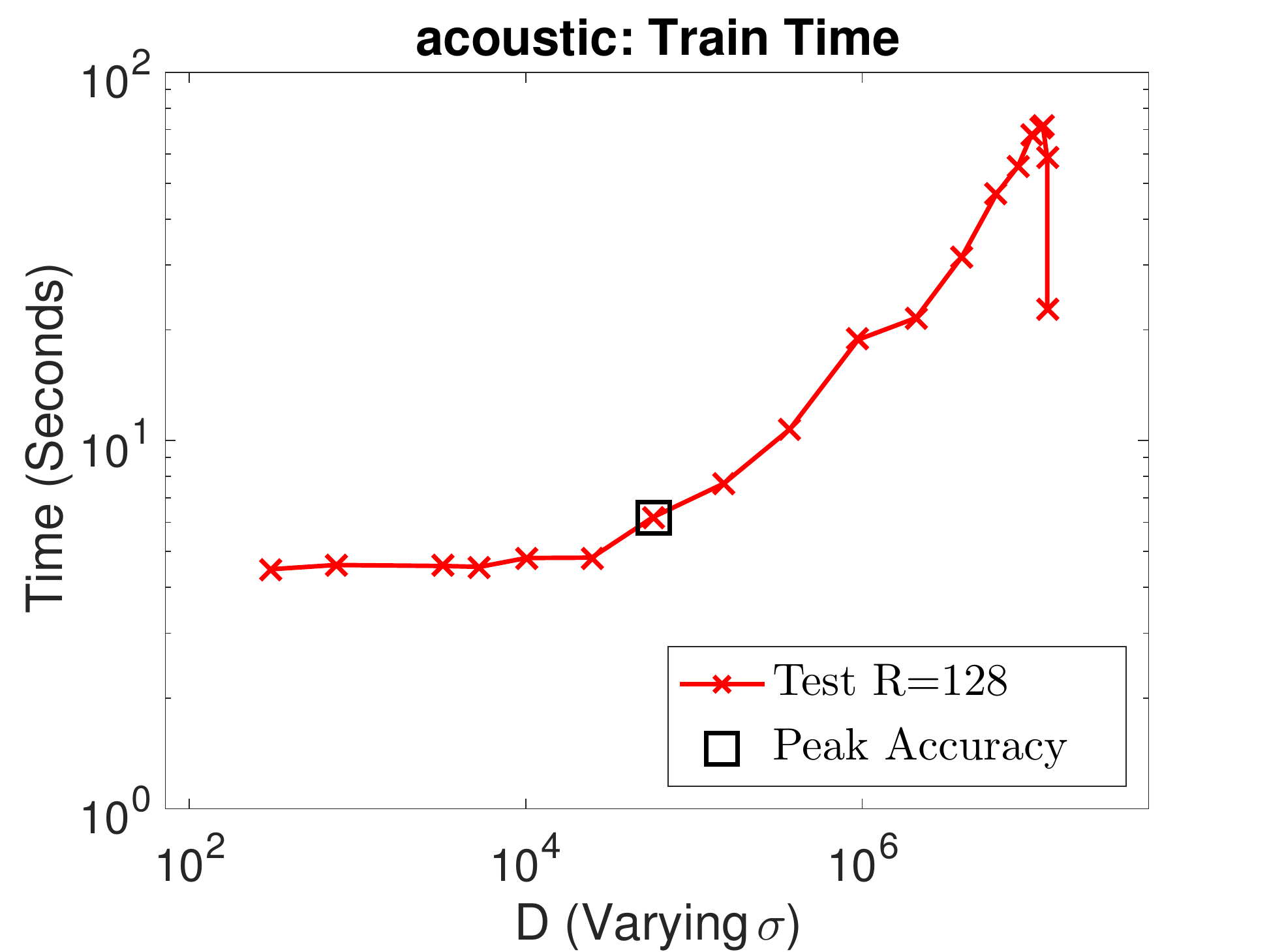}}
\subfigure[letter]{\includegraphics[width = 1.55in]{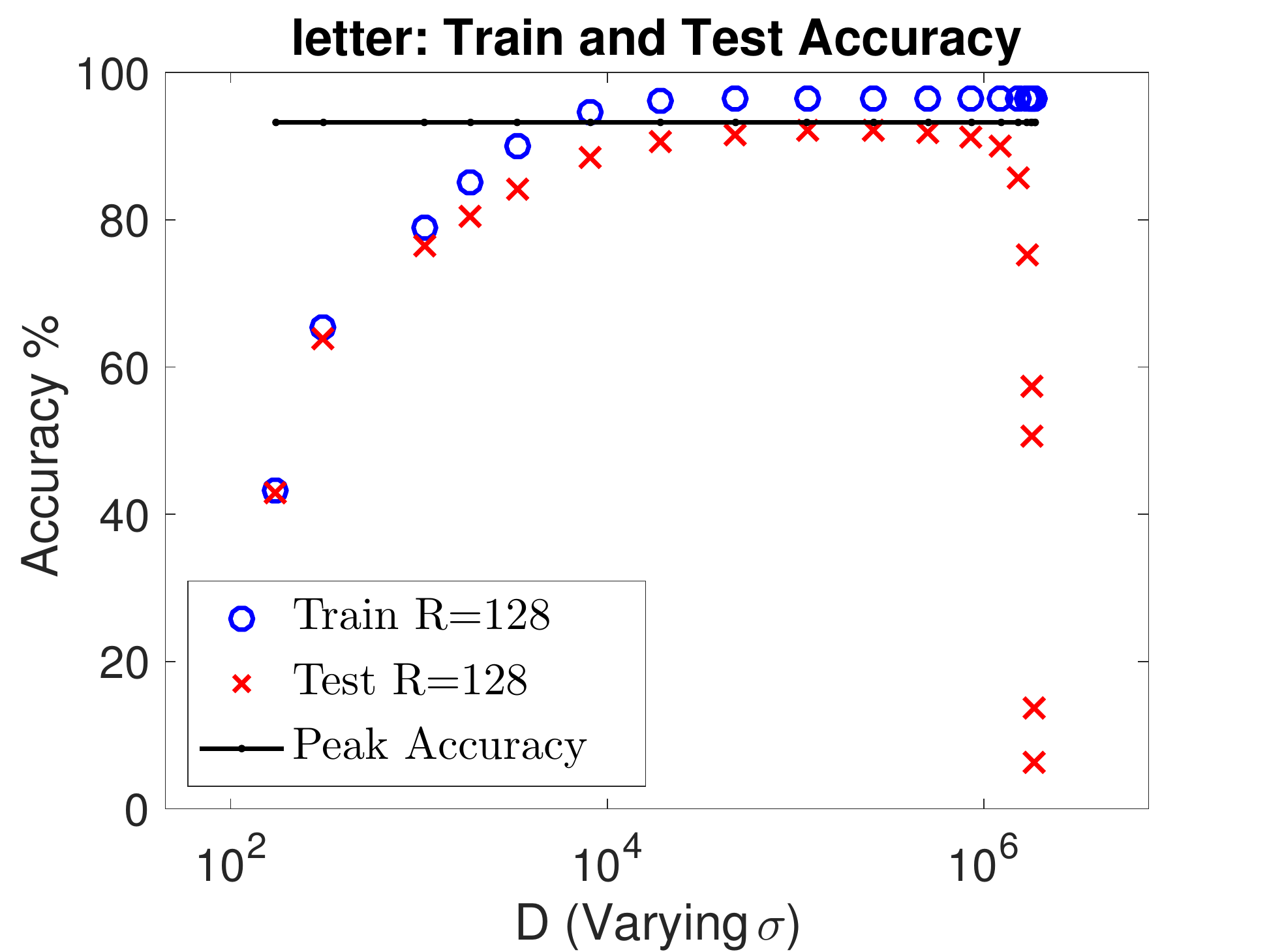}}
\subfigure[letter]{\includegraphics[width = 1.55in]{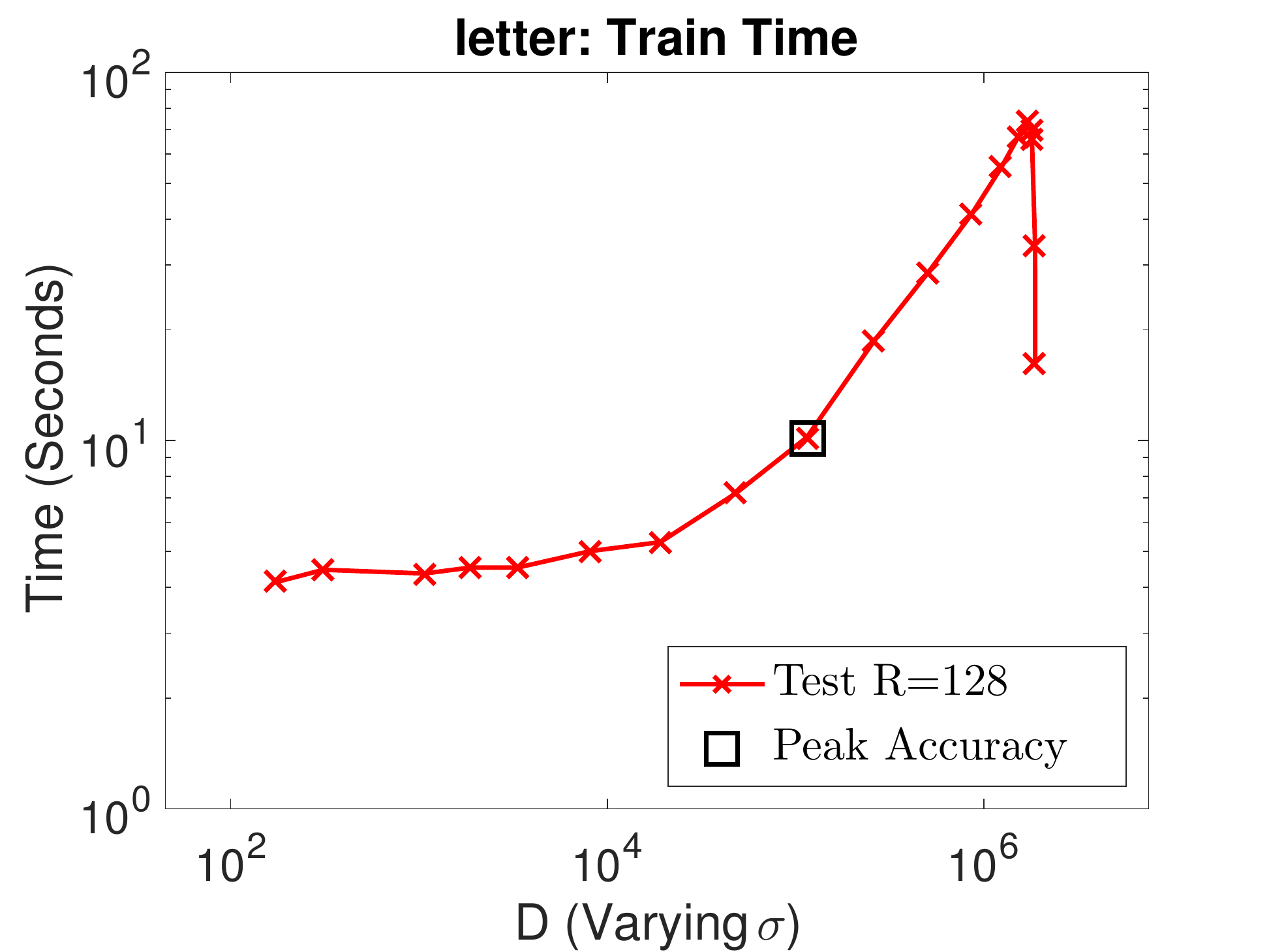}}
\caption{Train and test performance, and train time when varying $\sigma$ with fixed $R$. The black line and square box represent the best test performance of the exact kernel and RB respectively.}
\label{fig:TrTe_perf_time_lambda}
\end{figure*}

\begin{figure*}[!htb]
\centering
\subfigure[cadata]{\includegraphics[width = 1.55in]{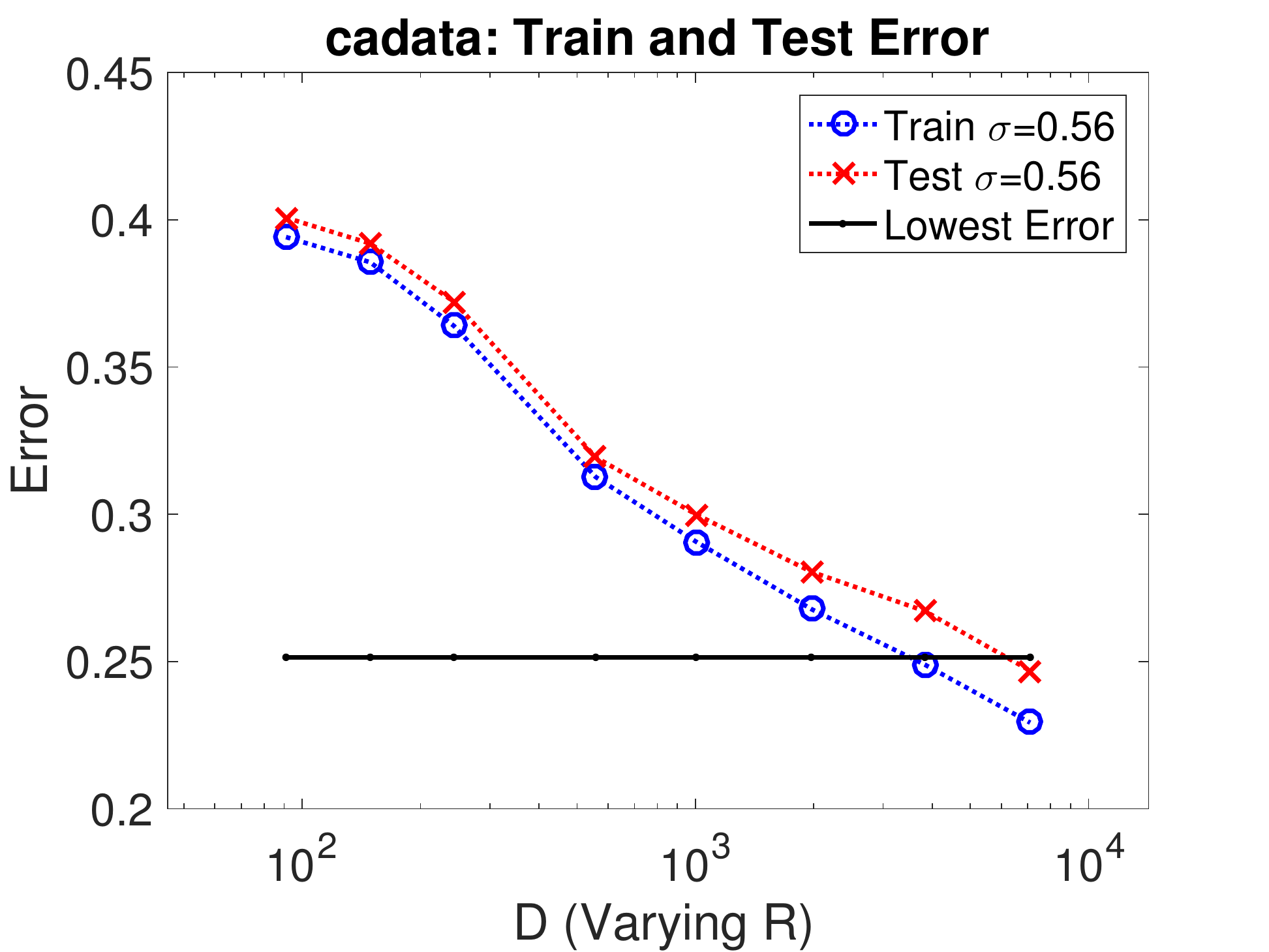}}
\subfigure[ijcnn1]{\includegraphics[width = 1.55in]{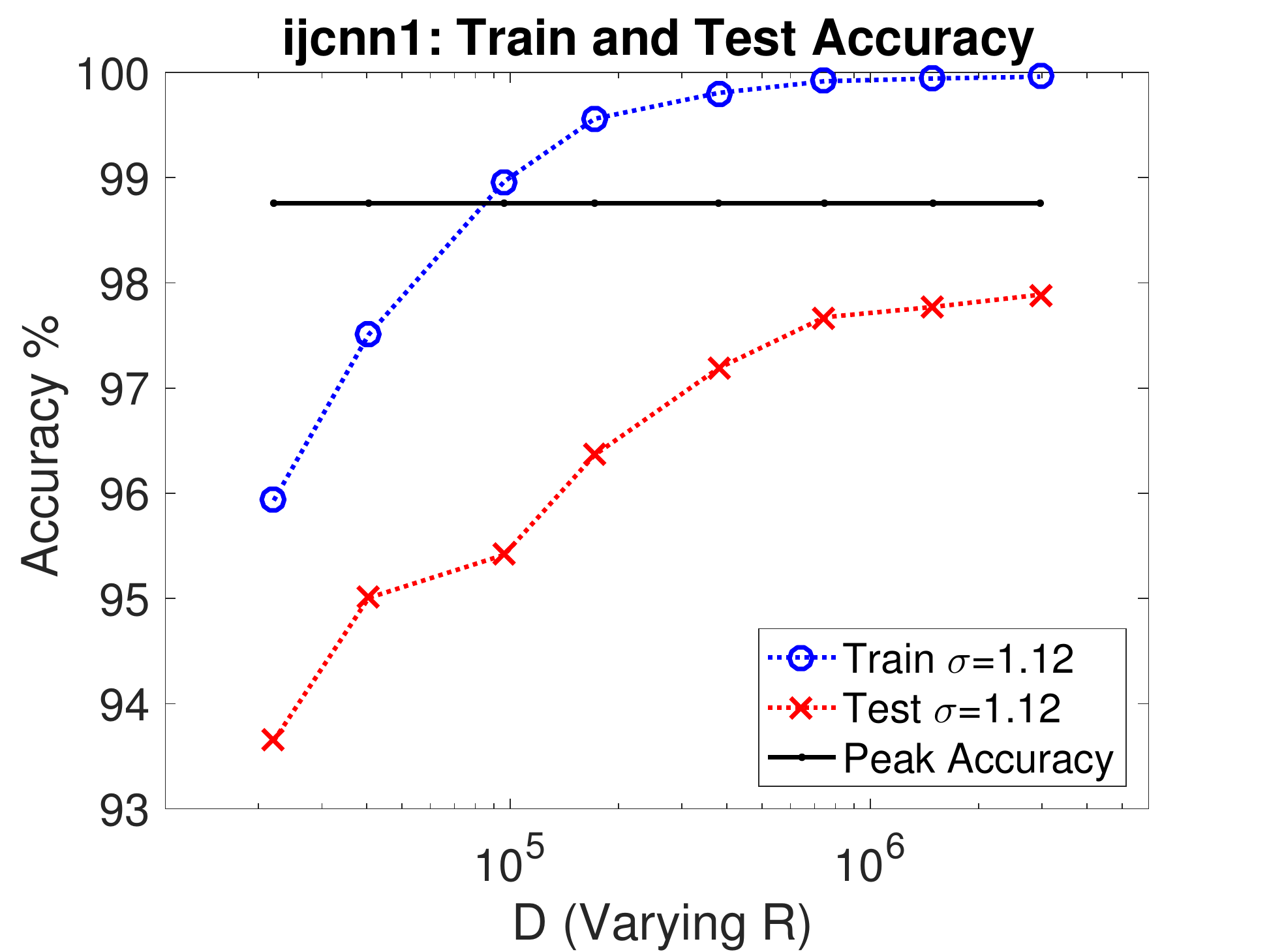}}
\subfigure[acoustic]{\includegraphics[width = 1.55in]{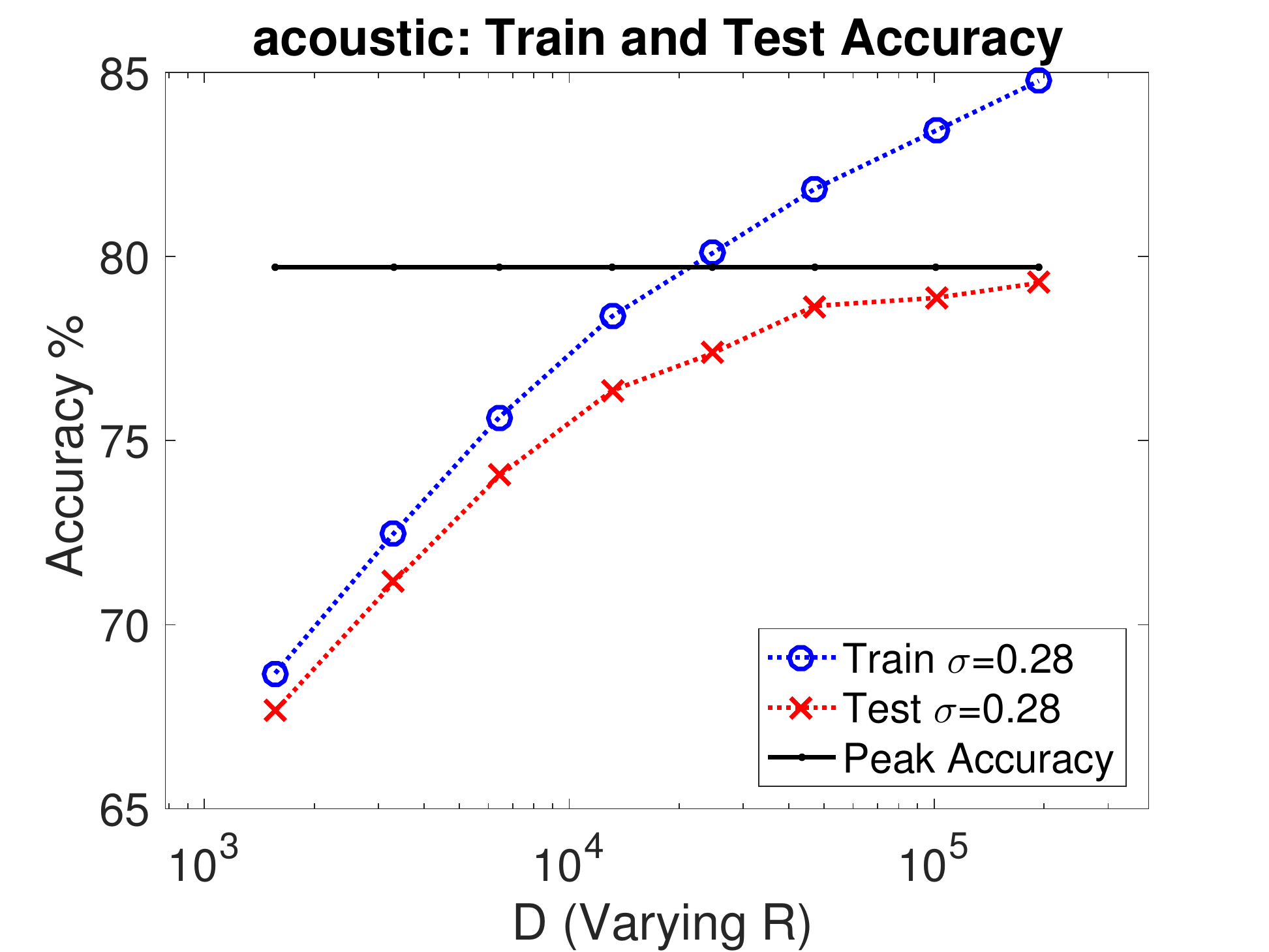}}
\subfigure[letter]{\includegraphics[width = 1.55in]{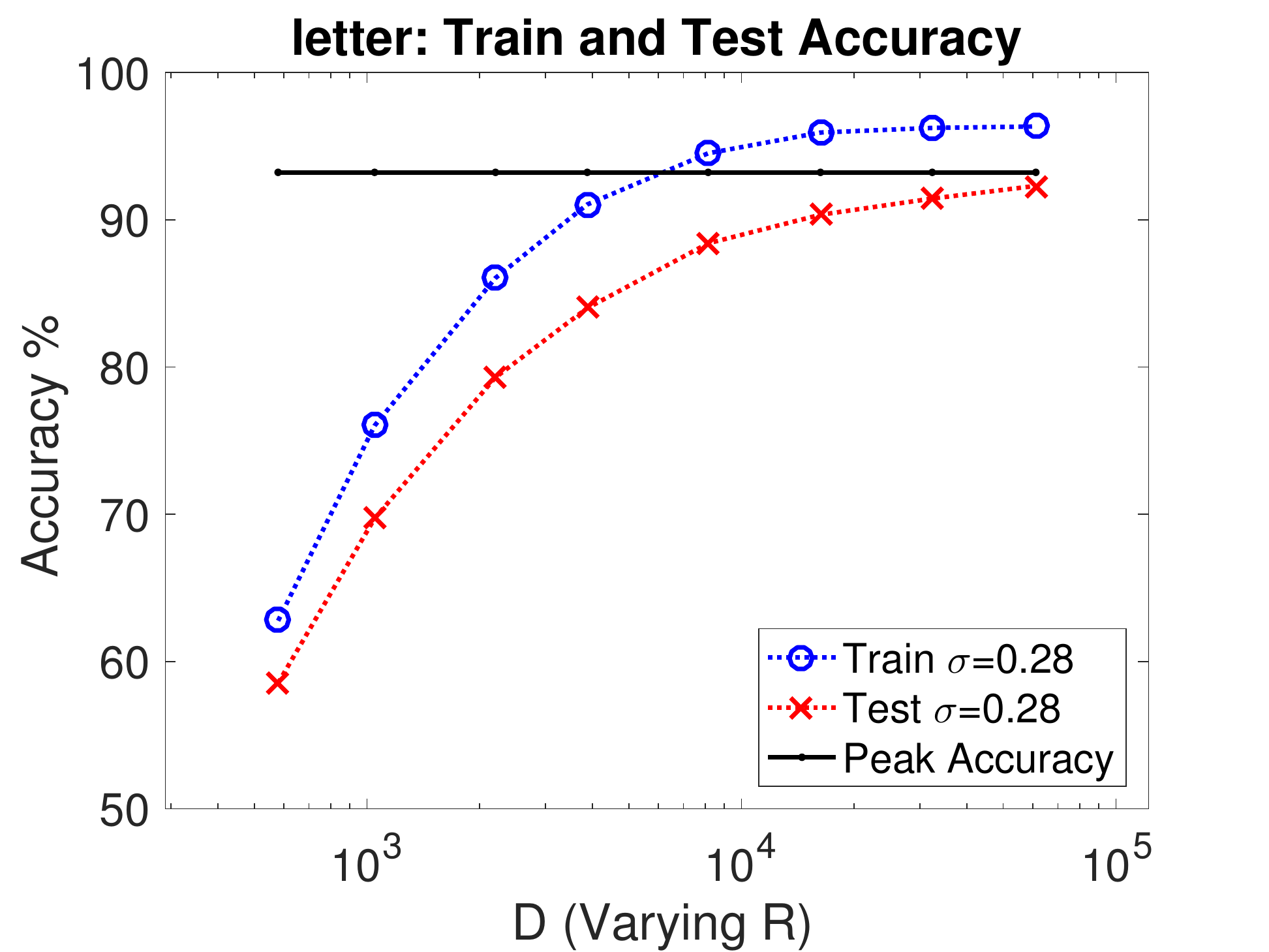}}
\caption{Train and test performance when varying $R$ with fixed $\sigma$.}
\label{fig:TrTe_perf_R}
\end{figure*}

\subsection{Performance Comparisons of All Methods}
We present a large sets of experiments to compare RB with other most popular low-rank kernel approximations, including RF \cite{rahimi2007random}, Nystr\"{o}m \cite{Seeger2000Nystrom}, and recently proposed independent block approximation \cite{Stein2014BlockDiag}. We also compare all methods with the exact kernel as a benchmark \cite{LearnKernels2001}. We do not report the results of the vanilla kernel on covtype and SUSY since the programs run out of memory. To make a fair comparison, we also apply CG on RB and Nystr\"{o}m directly on $Z$ to admit similar computational costs. Since the independent block kernel approximation approximates the kernel matrix directly, we employ direct solver of dense matrix for this method. In practice, the CG iterative solver has no need to solve in high precision \cite{JCLW2016ICCASP}, which has also been observed in our experiments. Thus, we set the tolerance to $1e-3$. 

Fig.\ref{fig:perf_time_R_mem_time_perf_group1} clearly demonstrates the superiority of RB compared to other low-rank kernels. For example, in the first column, RB significantly outperforms other methods in testing performance on all of these datasets, especially when $R$ is relatively small. This is because RB enjoys much faster convergence rate to the optimal function than other methods. The advantage generally diminishes when $R$ increases to reasonably large. However, for some large datasets such as covtype and SUSY, increasing number of random features or $R$ boosts the performance extremely slow. This is consistent with our analysis that RB enjoys its fast convergence rate of $O(1/(\kappa R))$ while other methods has slow convergence rates $O(1/\sqrt{R})$. The third and fourth columns further promote the insights about how many number of random features or how large rank $R$ that is needed for achieving similar performance of RB. In particular, RB is often between one and three orders of magnitude faster and less memory consumptions than other methods. 

In the second column, we also observe that the training time of all low-rank kernels are linear with $R$, which is expected since all these methods has computational complexity of $O(kNR)$. The difference in training time between these low-rank kernels is only within some constant factors. However, we point out that the computations of RF, Nystr\"{o}m and independent block approximation are mainly carried out by the high-optimized BLAS library since they are dense matrices. In contrast, the computations of RB are most involved in sparse matrix operations, which are self-implemented and not yet optimized. In addition, more advanced sparse matrix techniques such as preconditioning can be explored to significantly accelerate the computation, which we leave it as future work.

\begin{figure*}[!htb]
\centering
\subfigure[cadata]{\includegraphics[width = 1.55in]{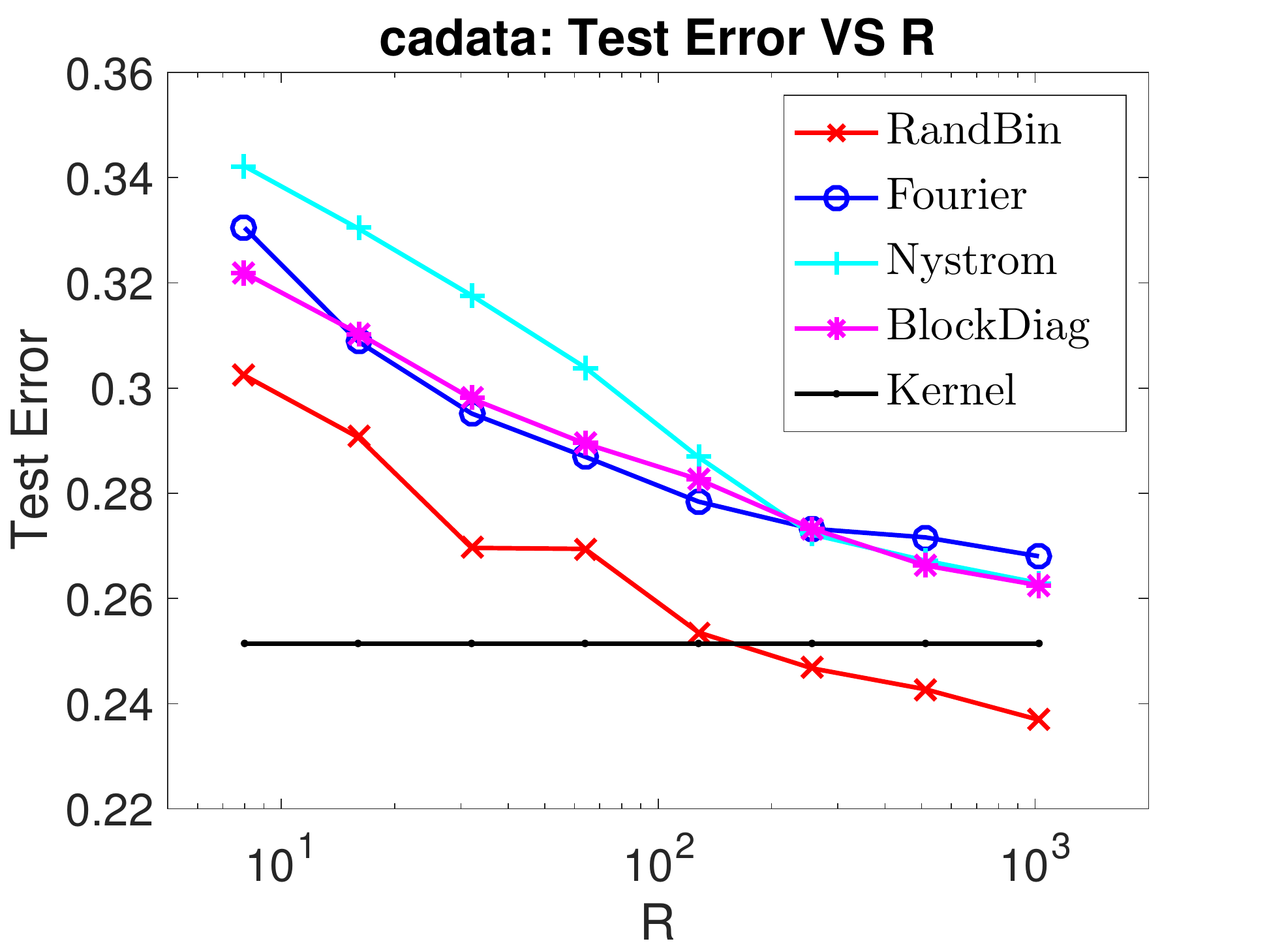}}
\subfigure[cadata]{\includegraphics[width = 1.55in]{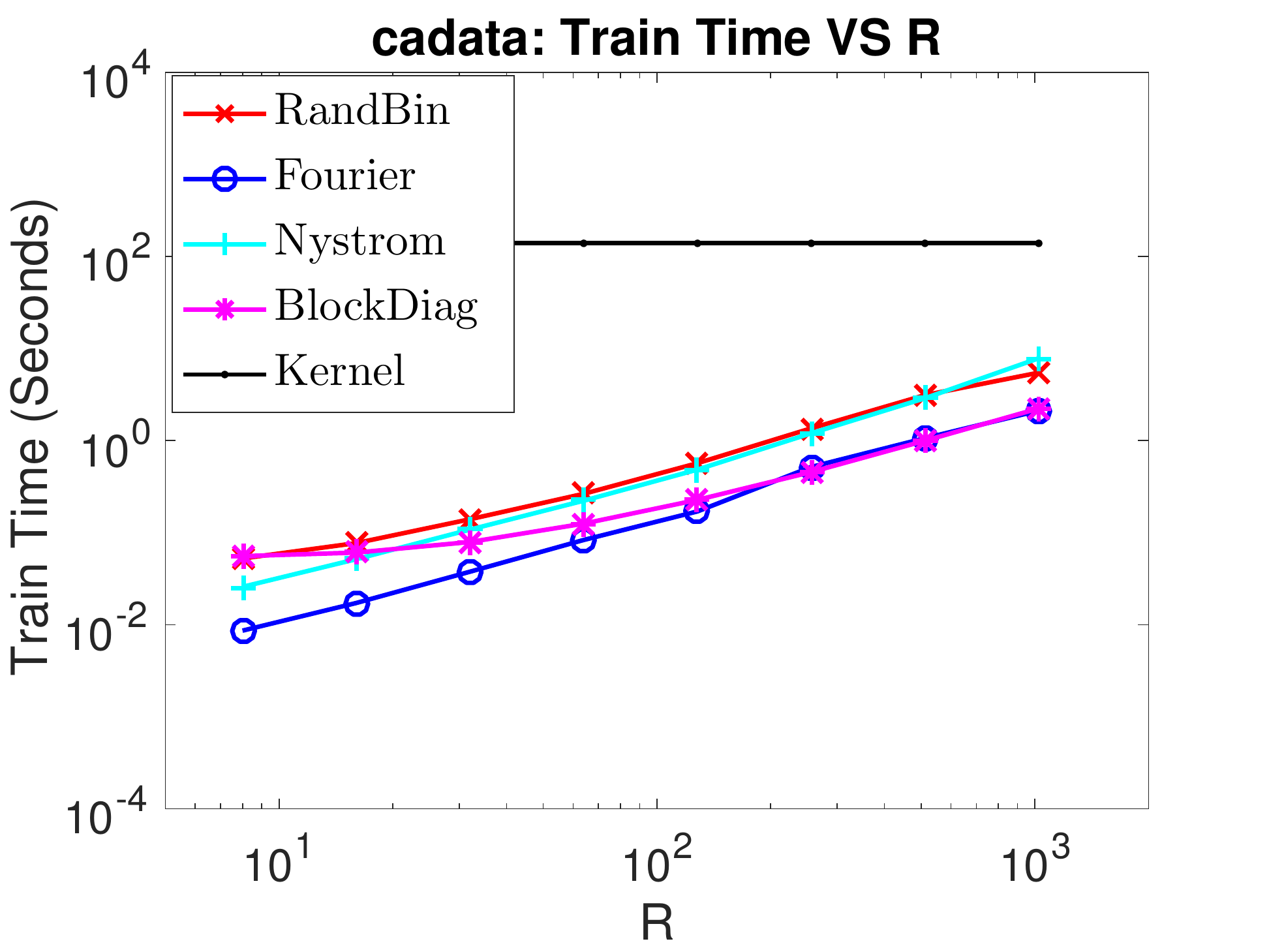}}
\subfigure[cadata]{\includegraphics[width = 1.55in]{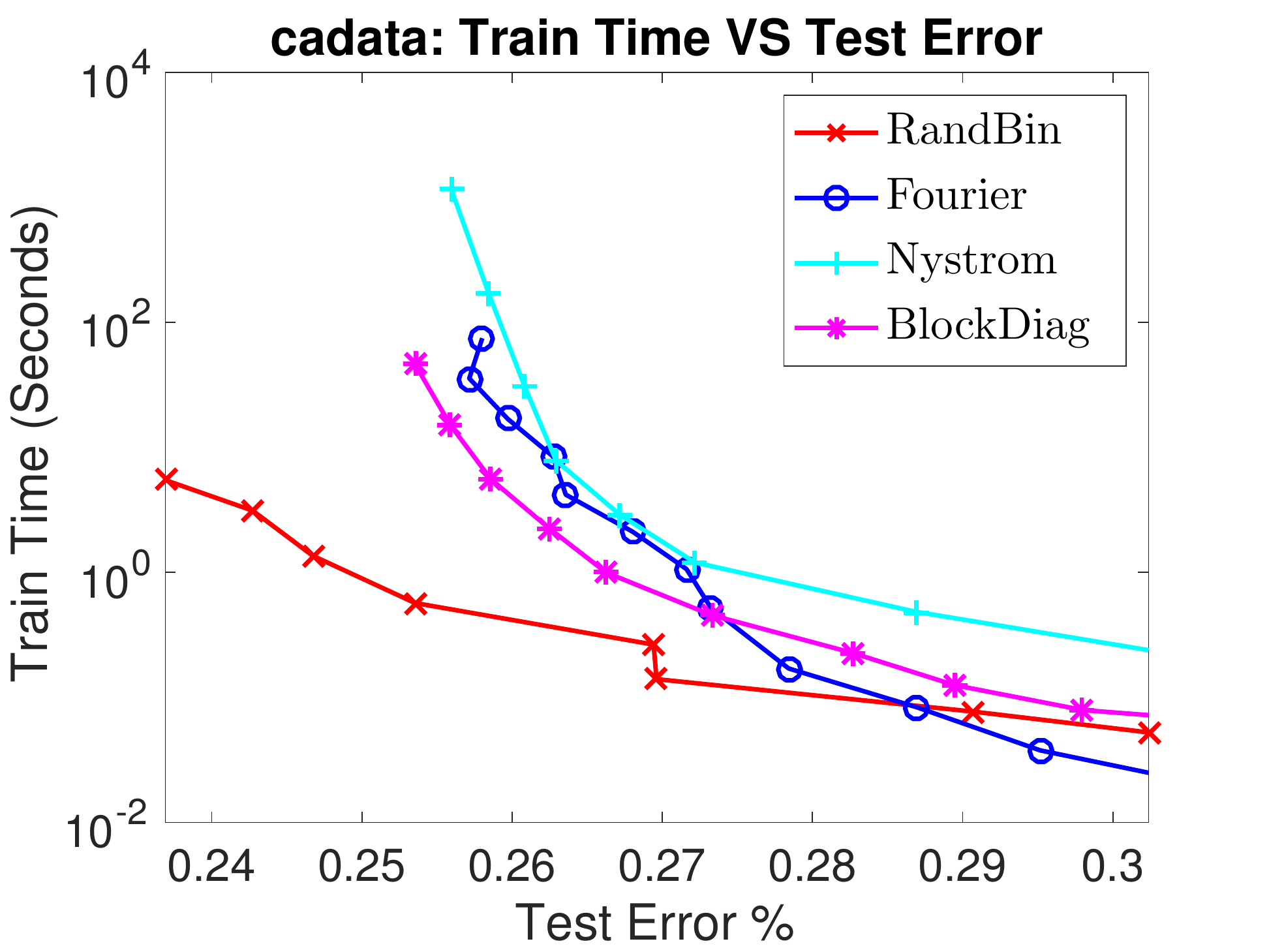}}
\subfigure[cadata]{\includegraphics[width = 1.55in]{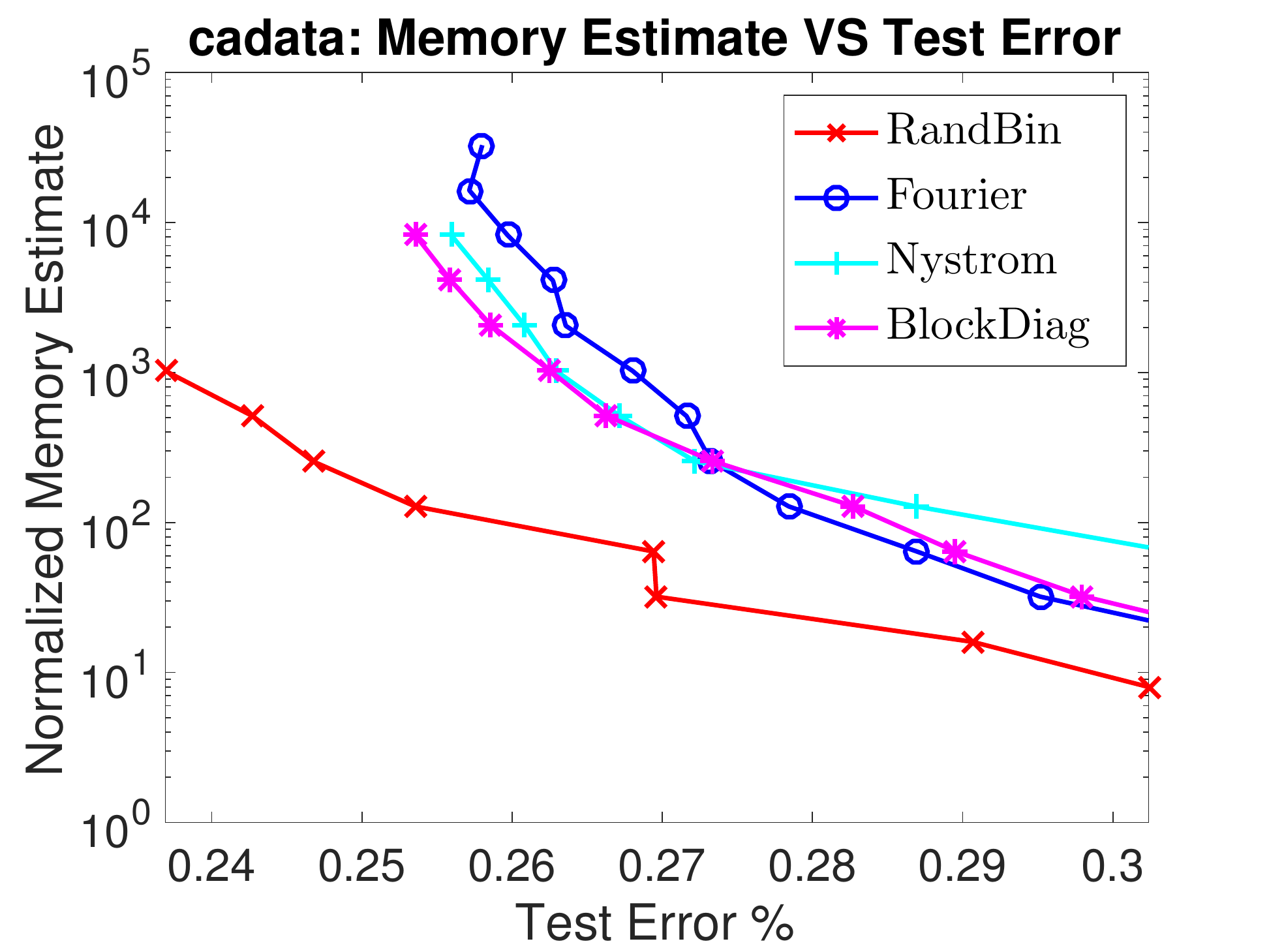}}
\subfigure[ijcnn1]{\includegraphics[width = 1.55in]{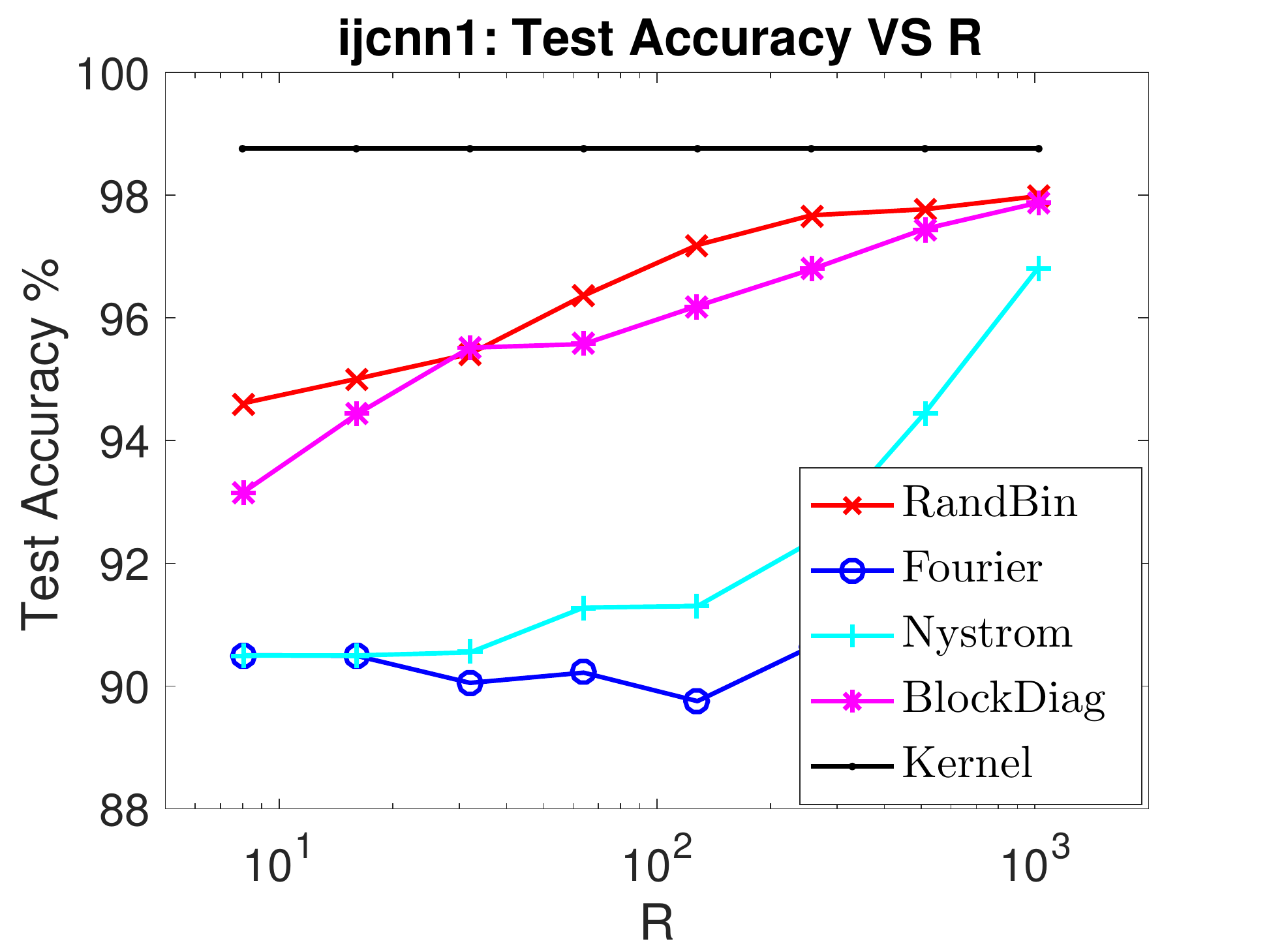}}
\subfigure[ijcnn1]{\includegraphics[width = 1.55in]{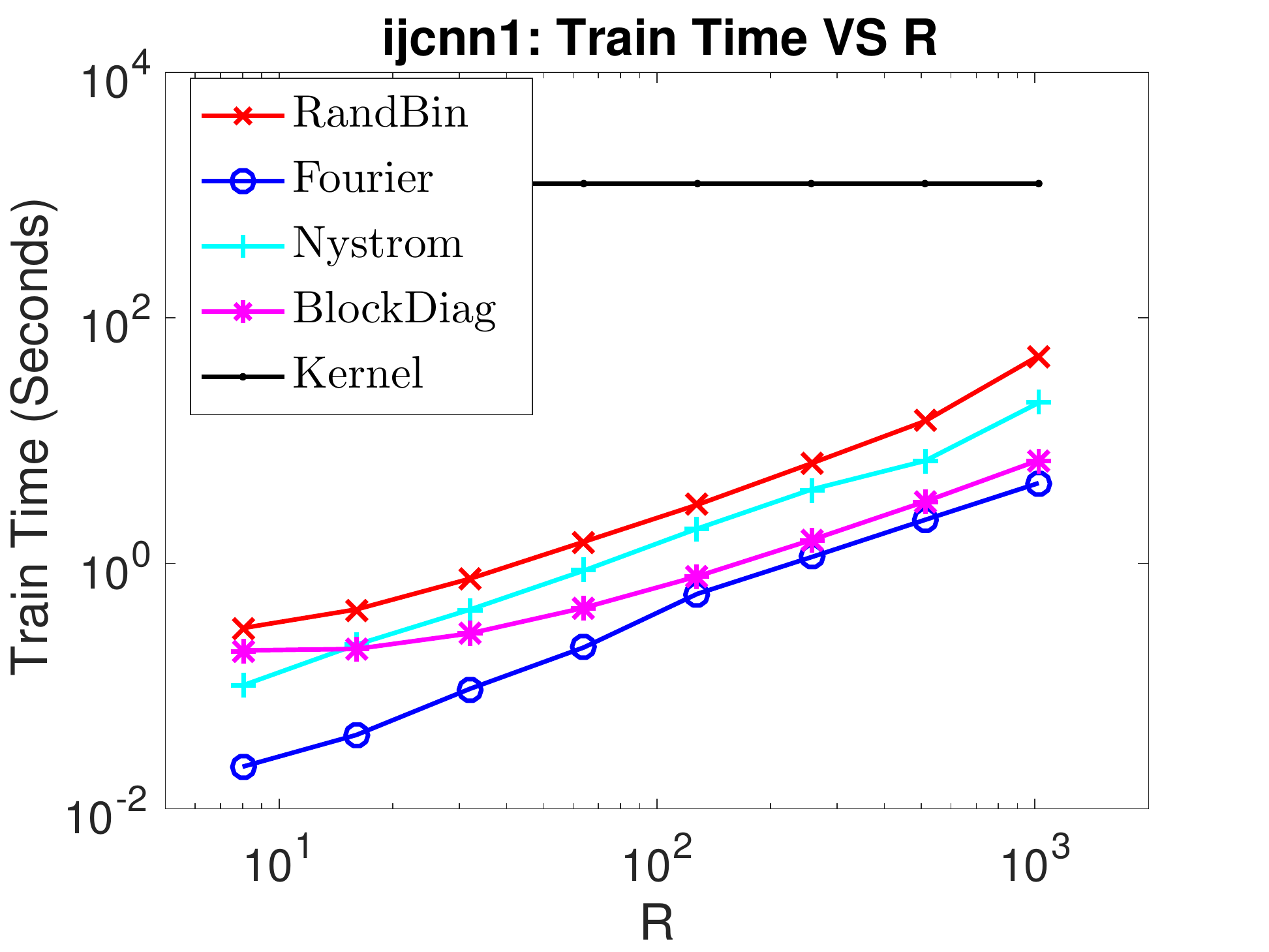}}
\subfigure[ijcnn1]{\includegraphics[width = 1.55in]{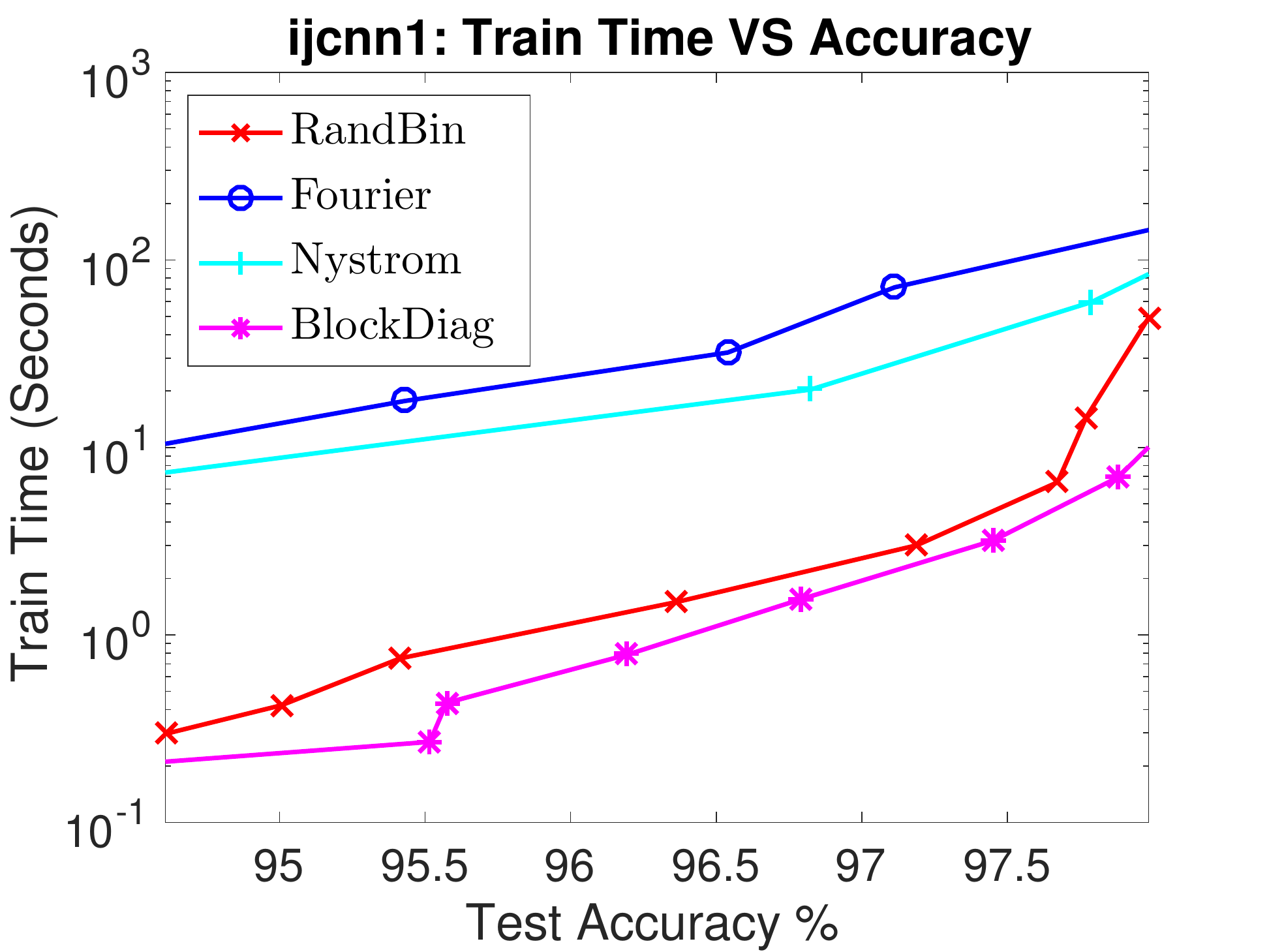}}
\subfigure[ijcnn1]{\includegraphics[width = 1.55in]{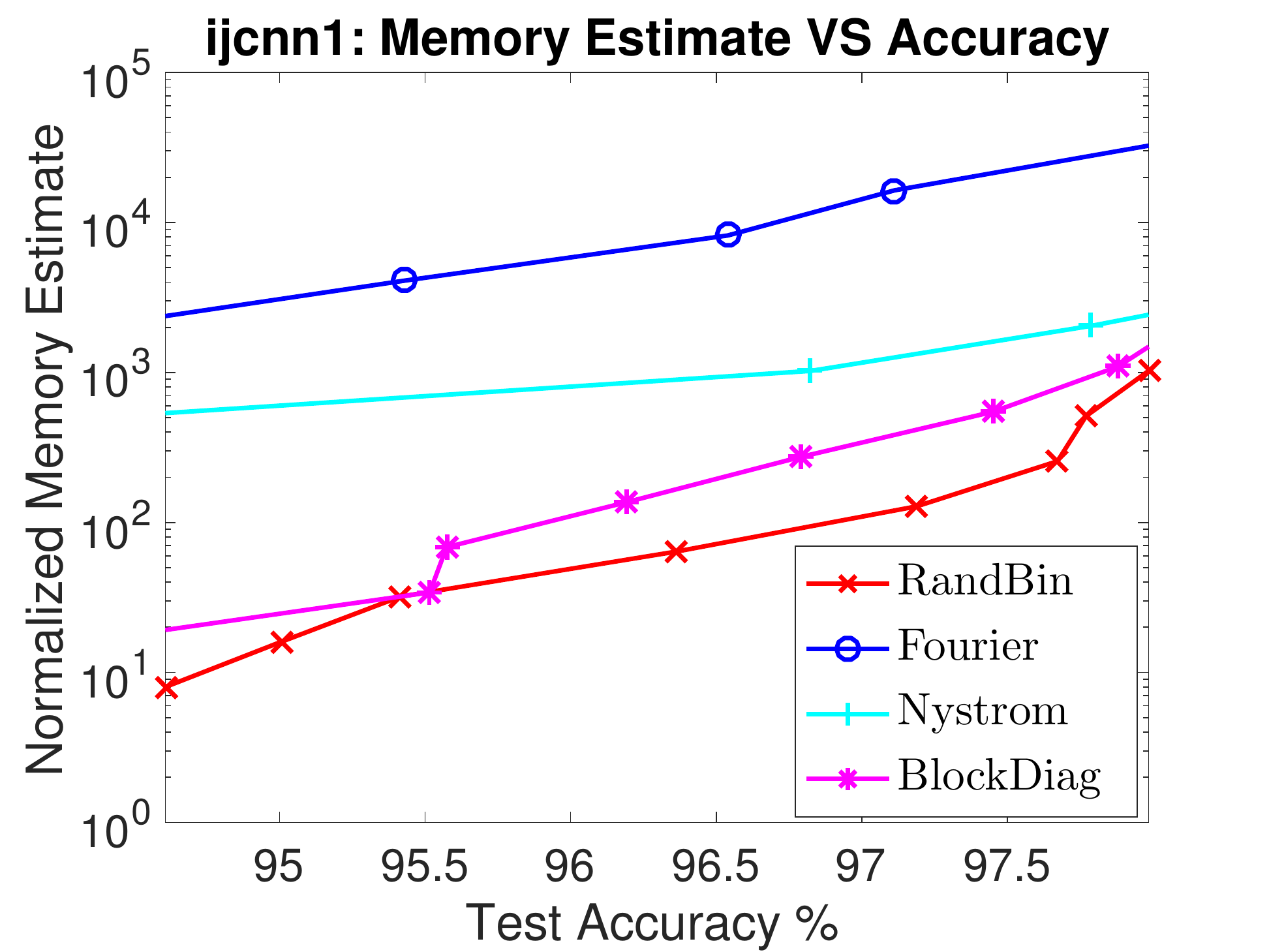}}
\subfigure[covtype]{\includegraphics[width = 1.55in]{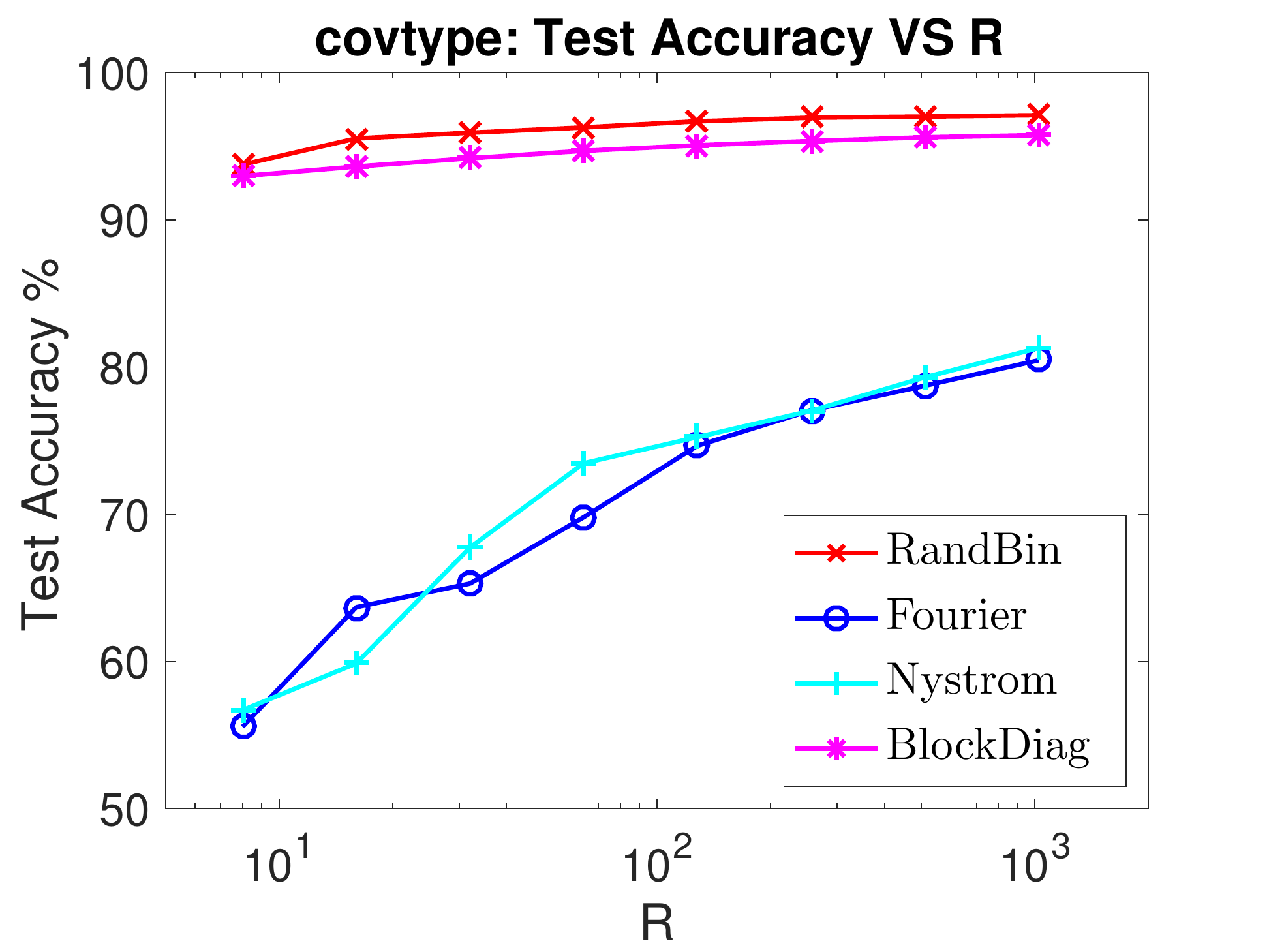}}
\subfigure[covtype]{\includegraphics[width = 1.55in]{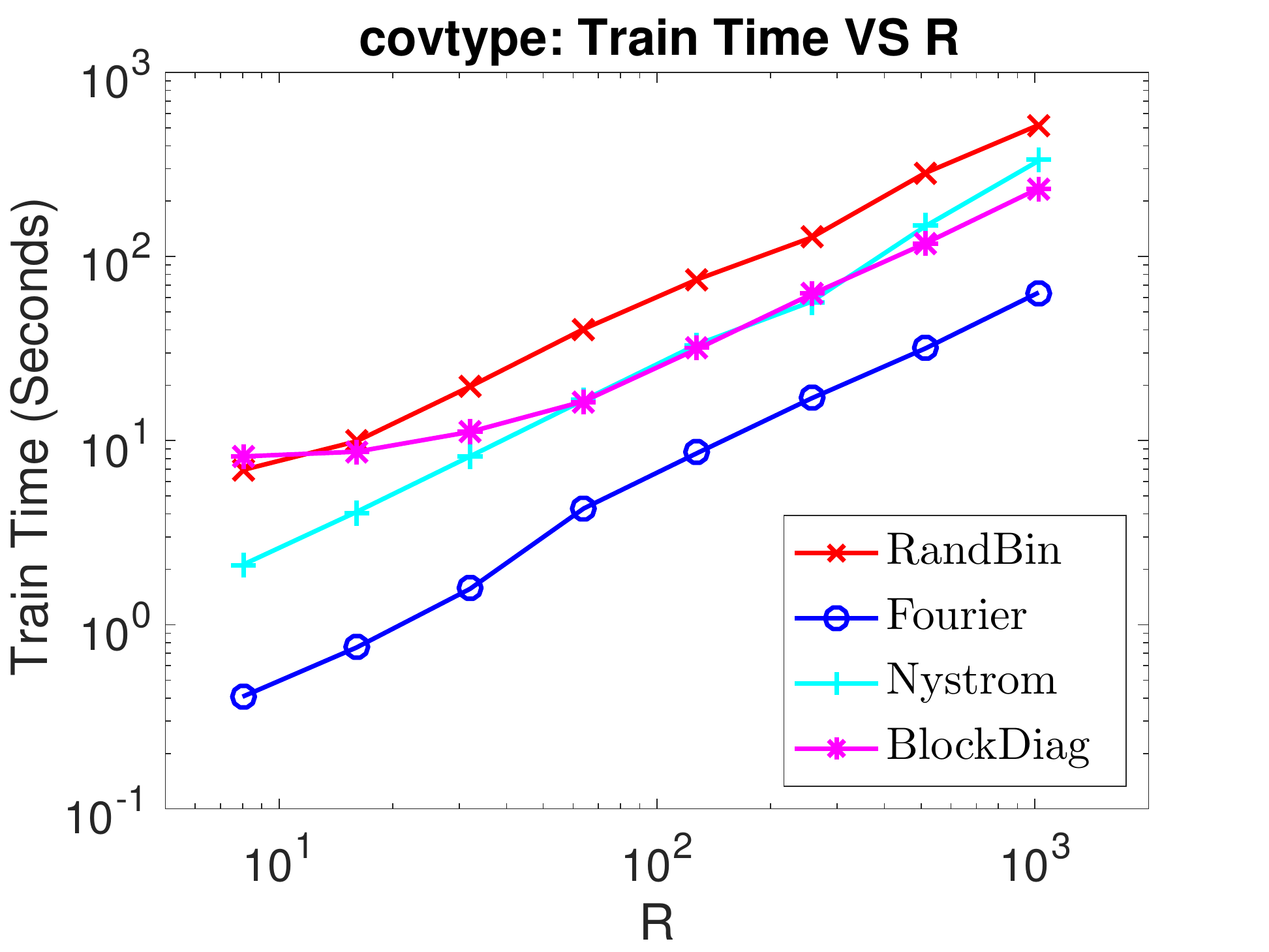}}
\subfigure[covtype]{\includegraphics[width = 1.55in]{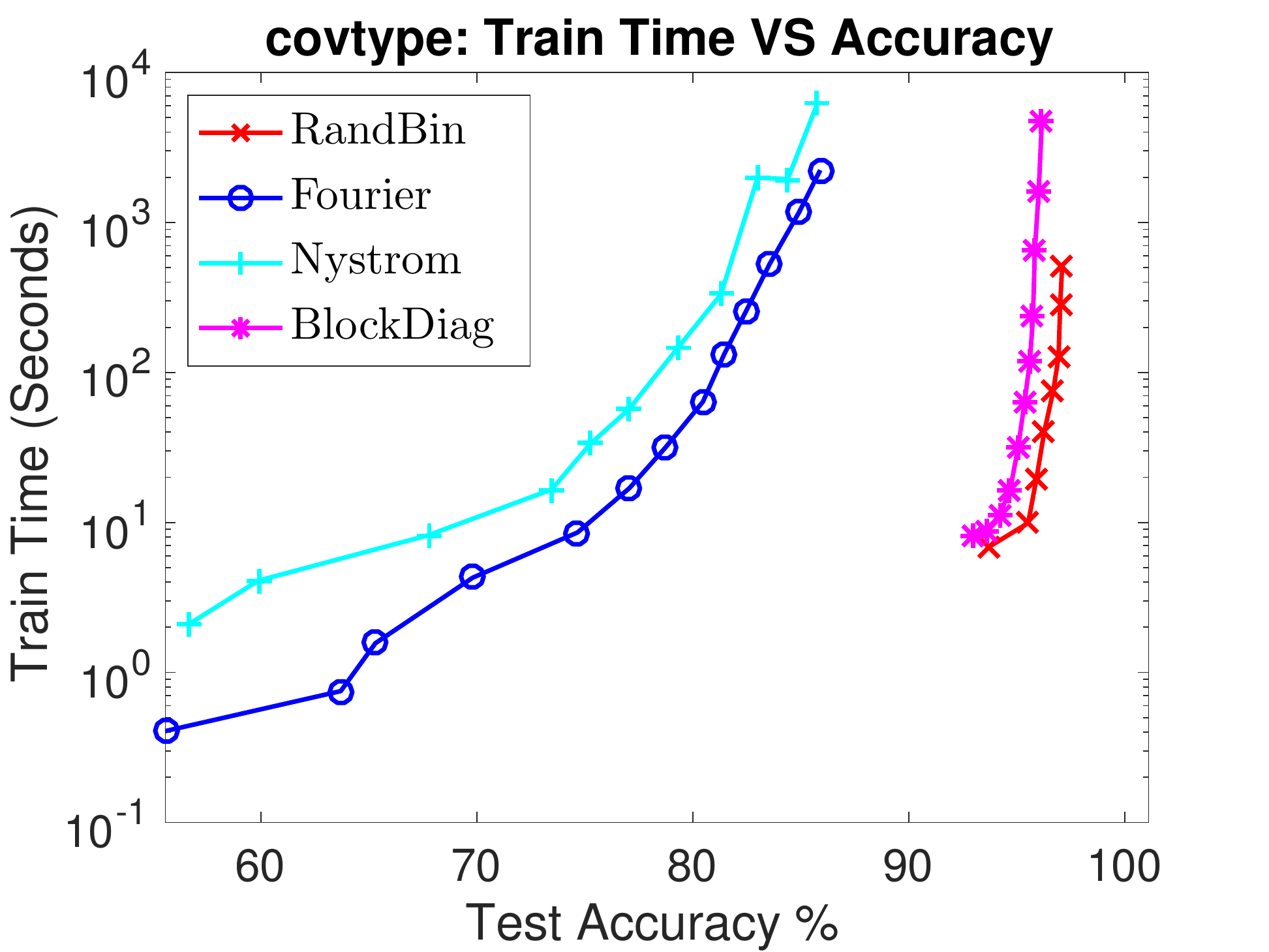}}
\subfigure[covtype]{\includegraphics[width = 1.55in]{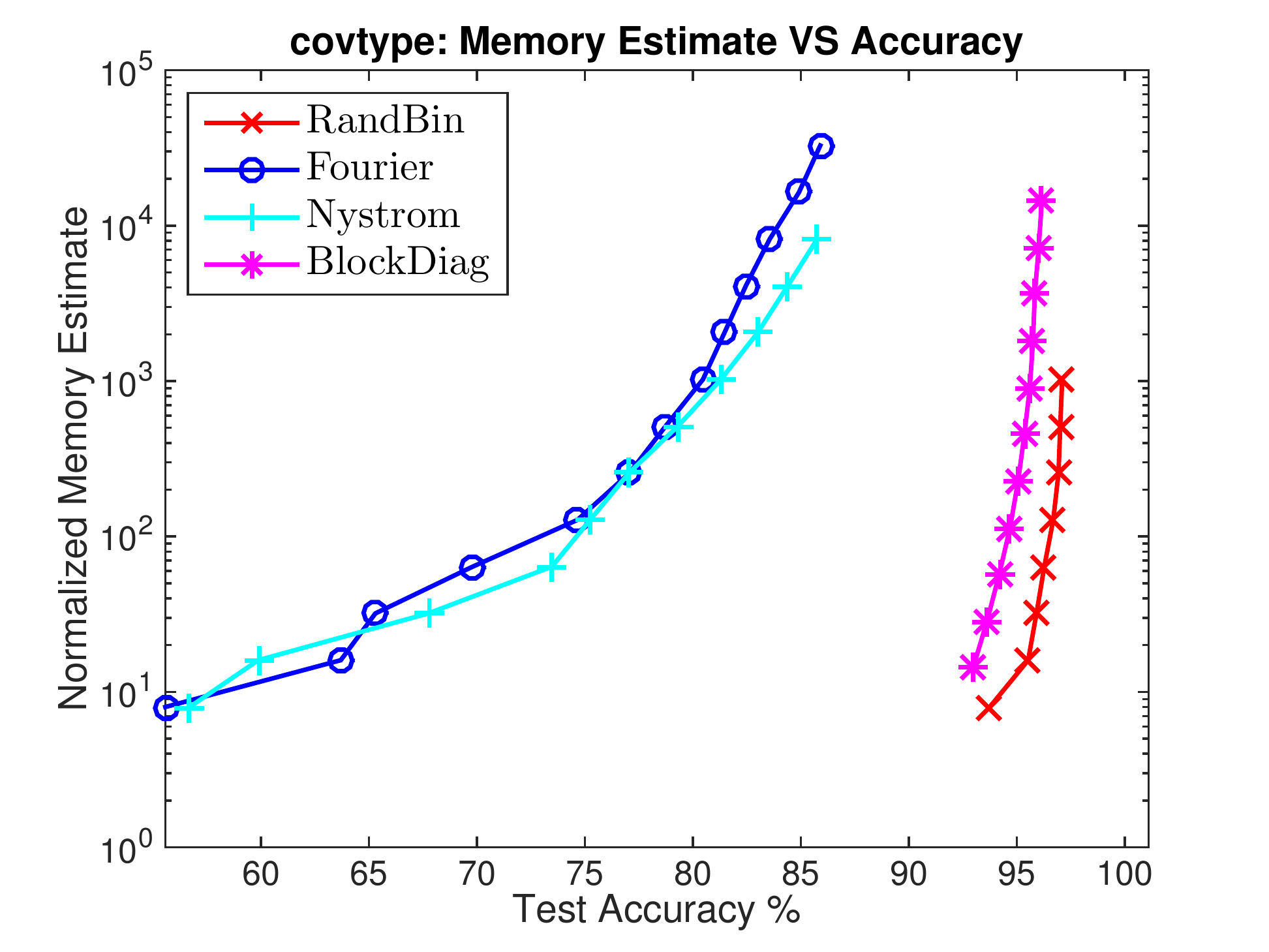}}
\subfigure[SUSY]{\includegraphics[width = 1.55in]{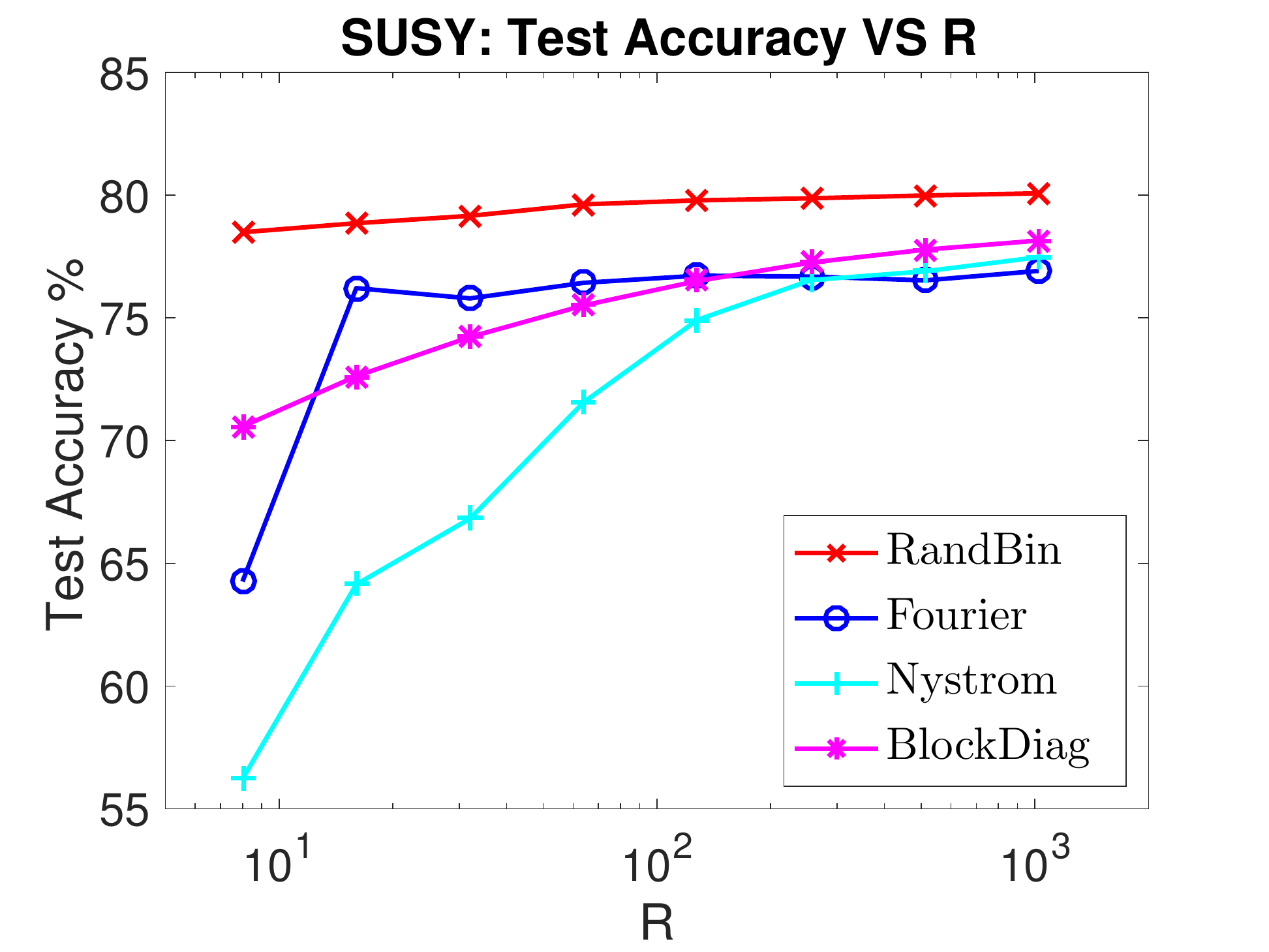}}
\subfigure[SUSY]{\includegraphics[width = 1.55in]{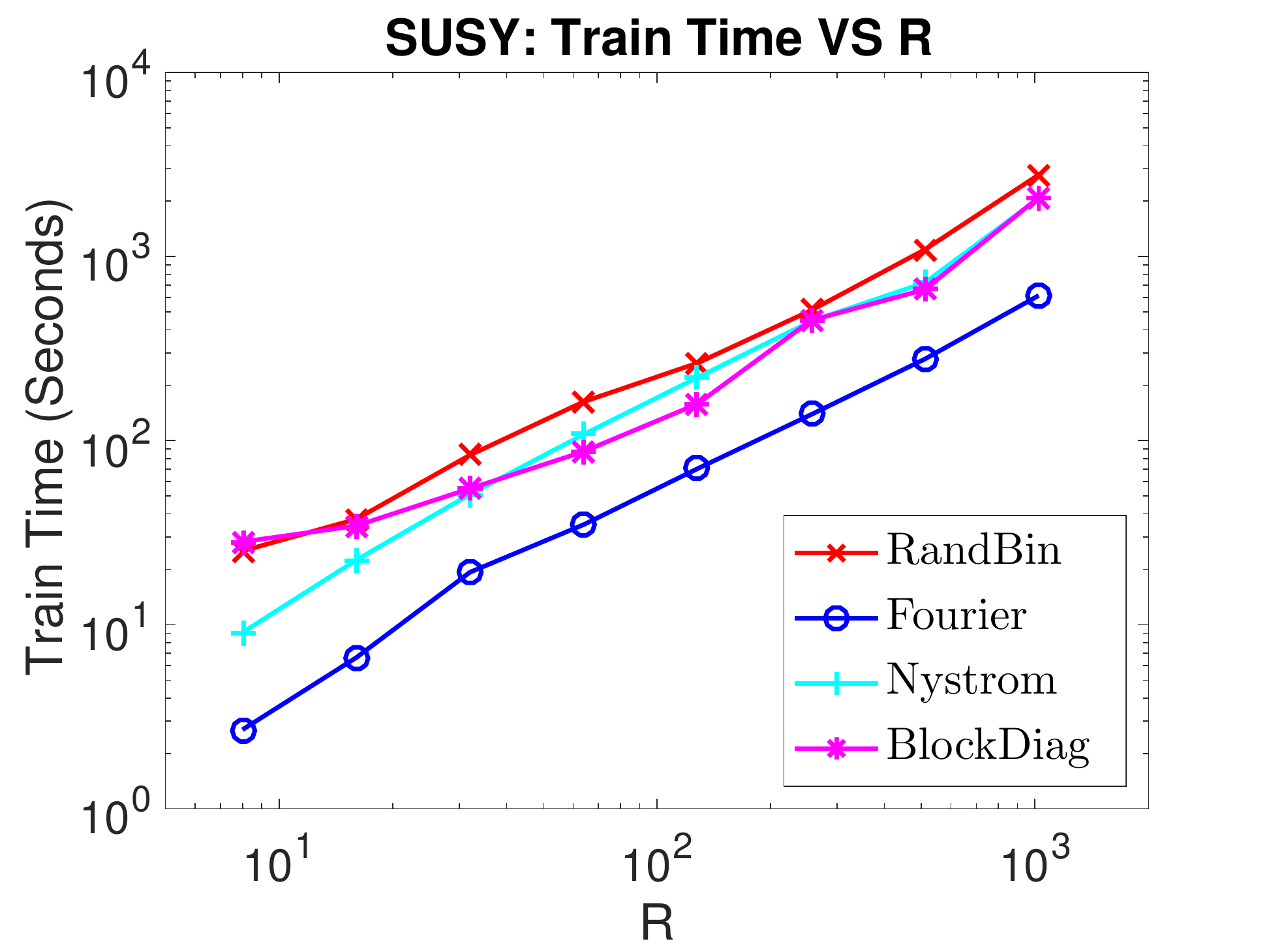}}
\subfigure[SUSY]{\includegraphics[width = 1.55in]{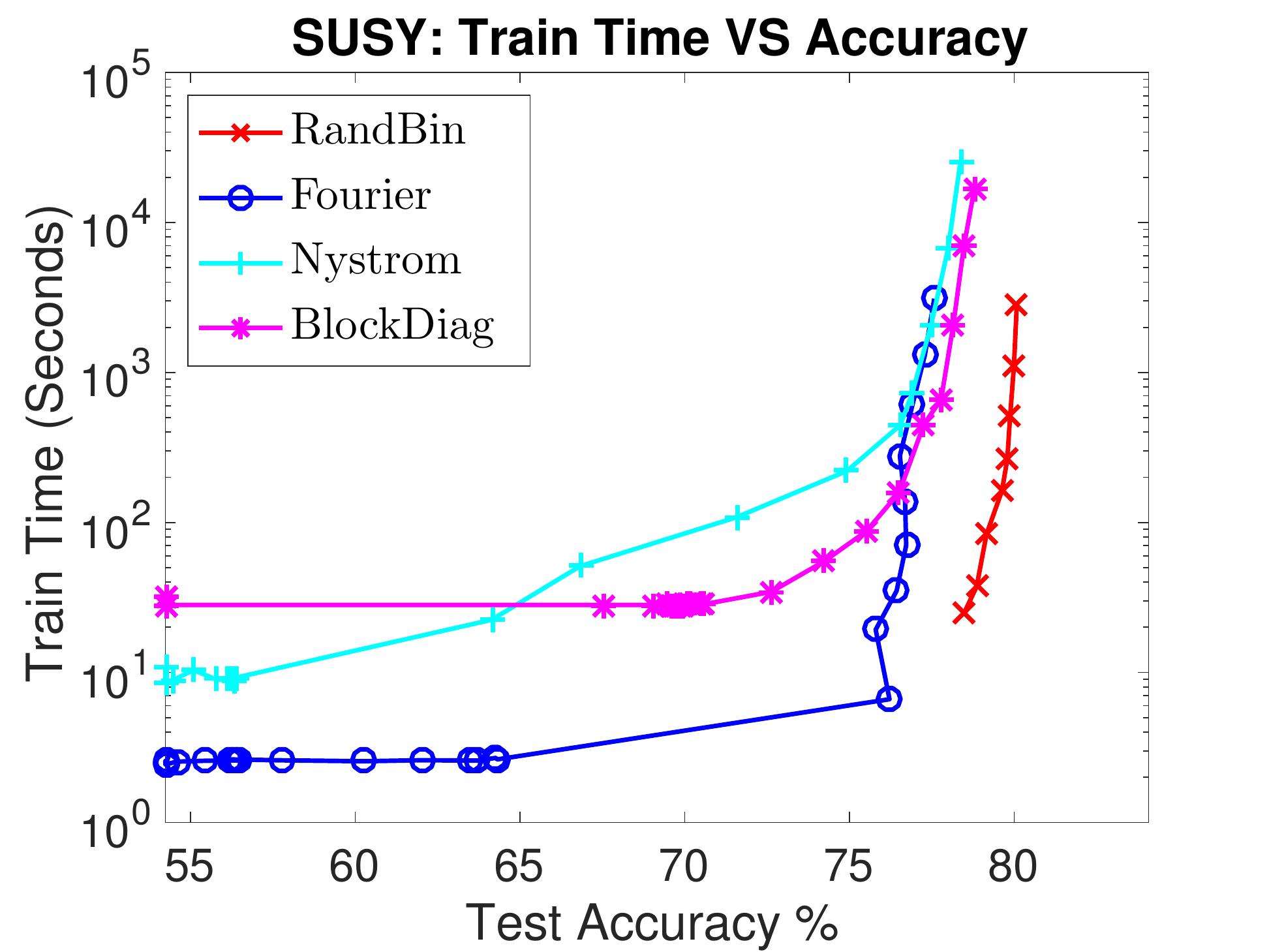}}
\subfigure[SUSY]{\includegraphics[width = 1.55in]{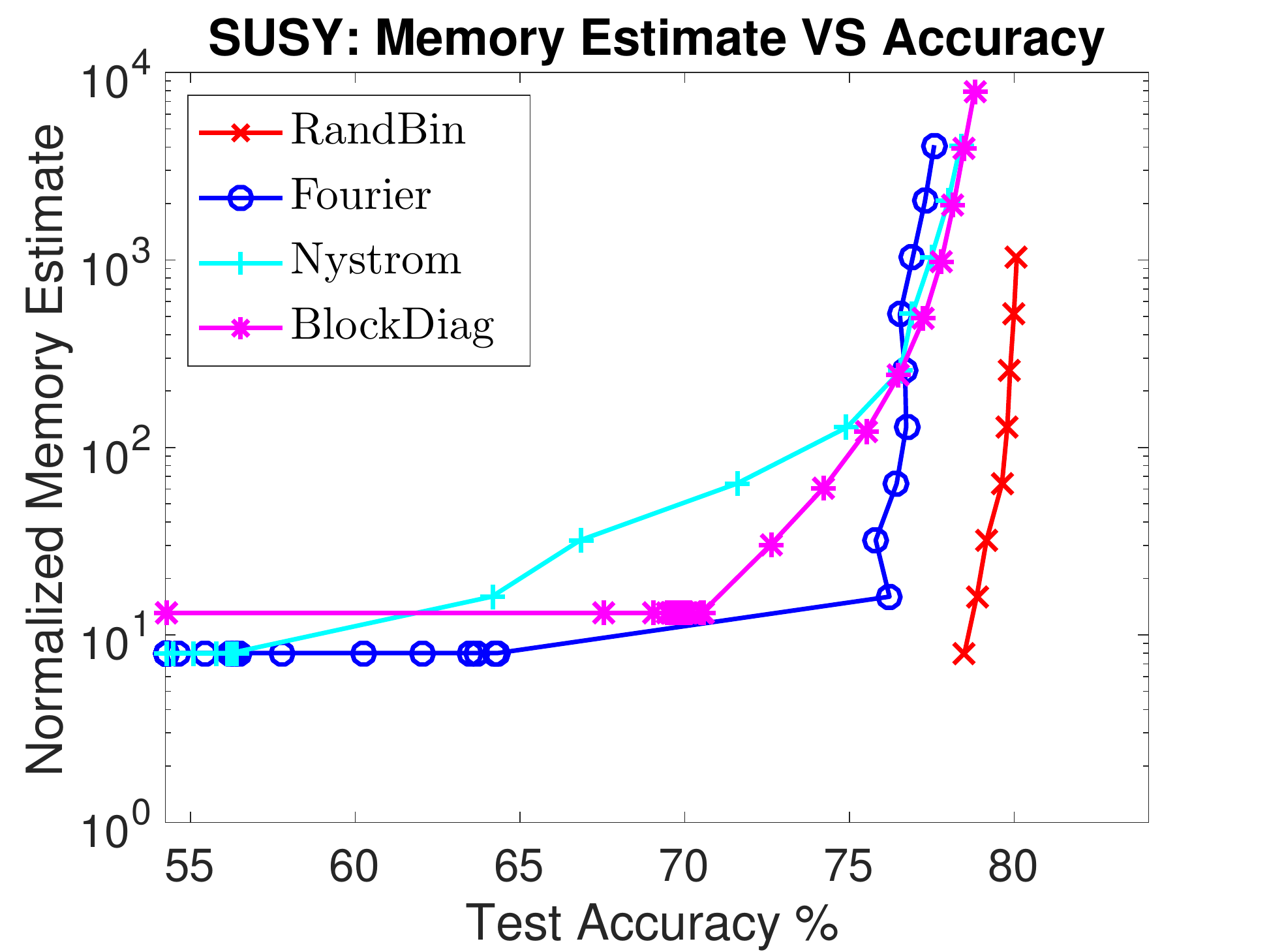}}
\subfigure[mnist]{\includegraphics[width = 1.55in]{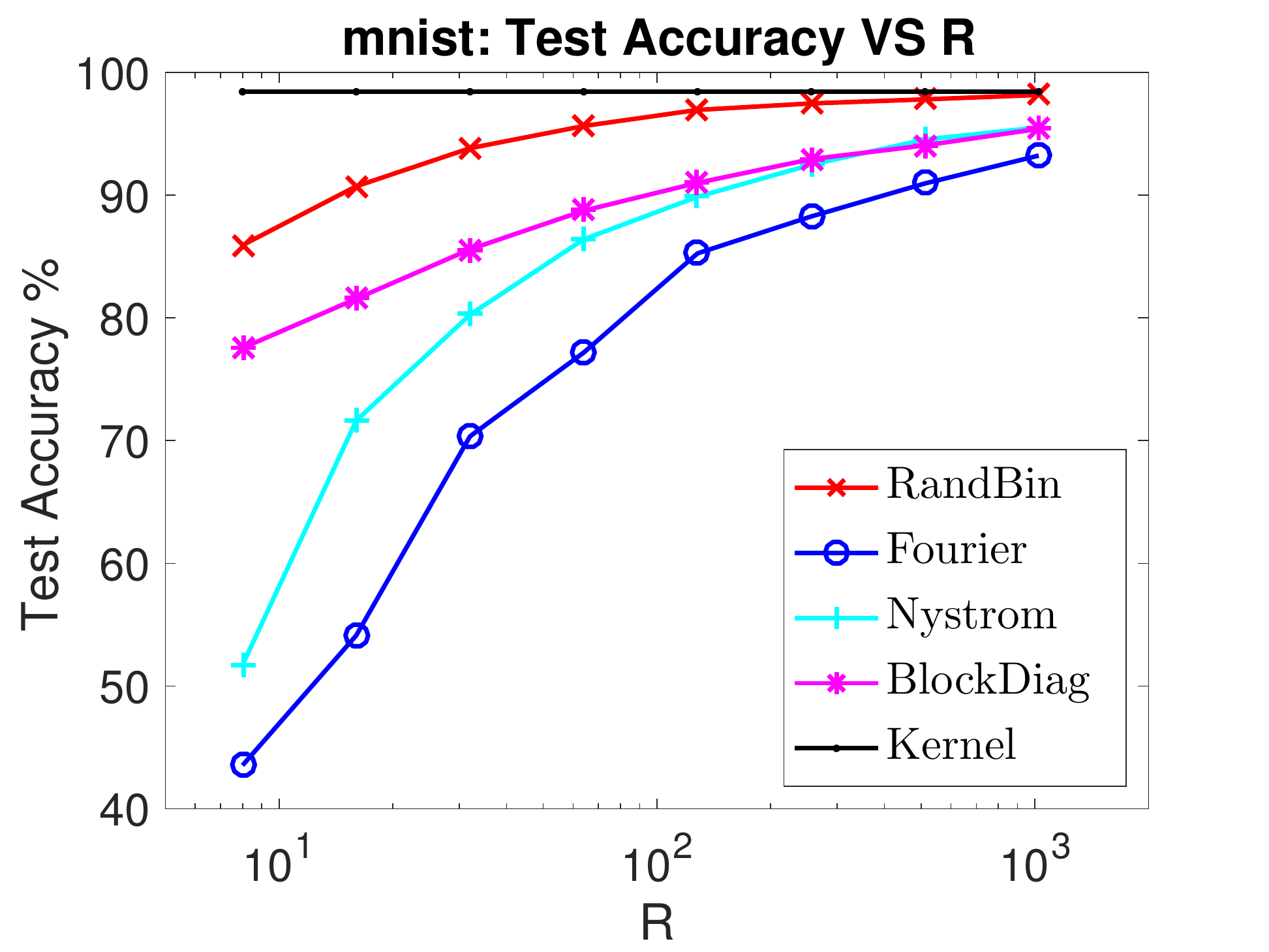}}
\subfigure[mnist]{\includegraphics[width = 1.55in]{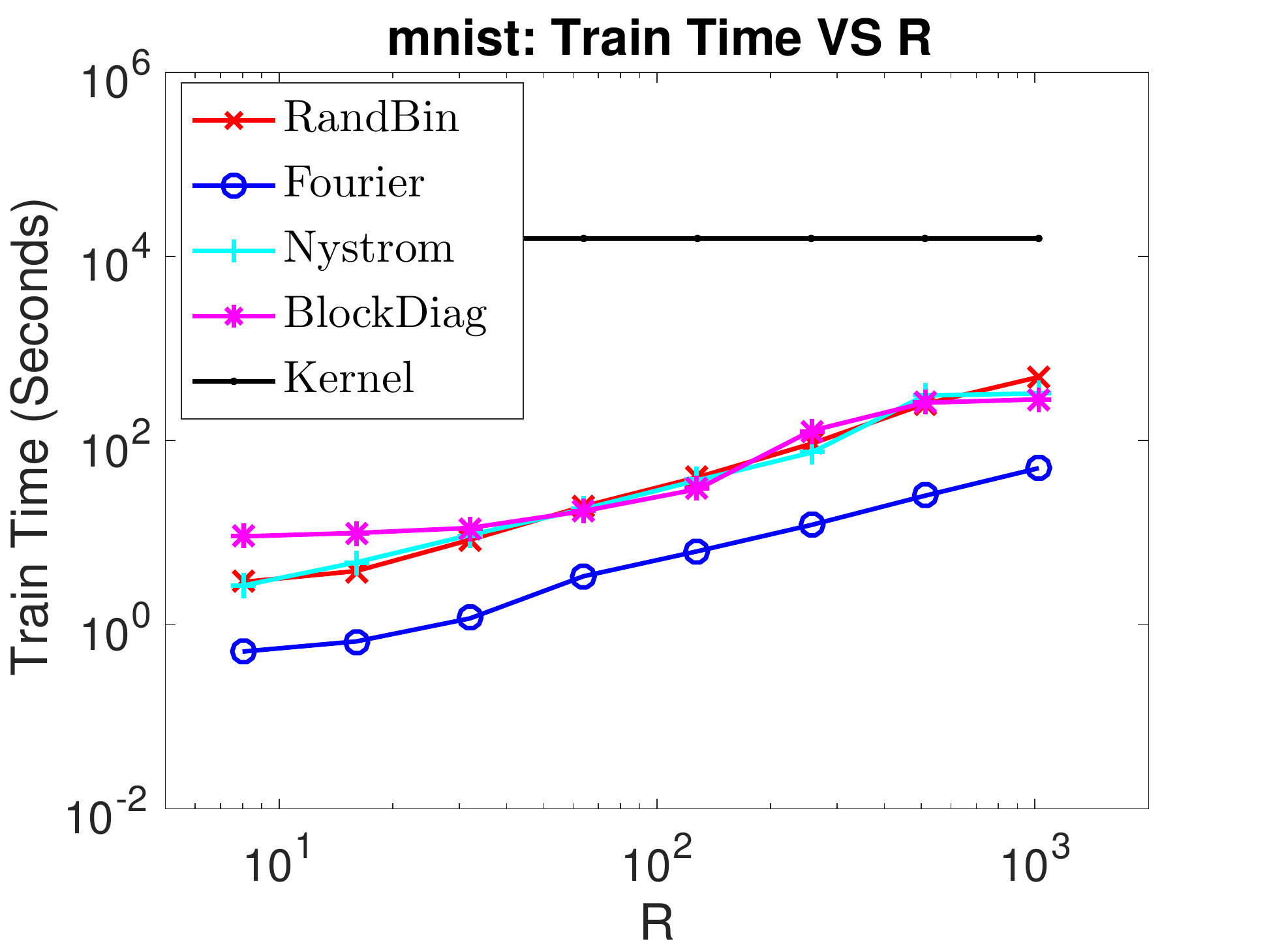}}
\subfigure[mnist]{\includegraphics[width = 1.55in]{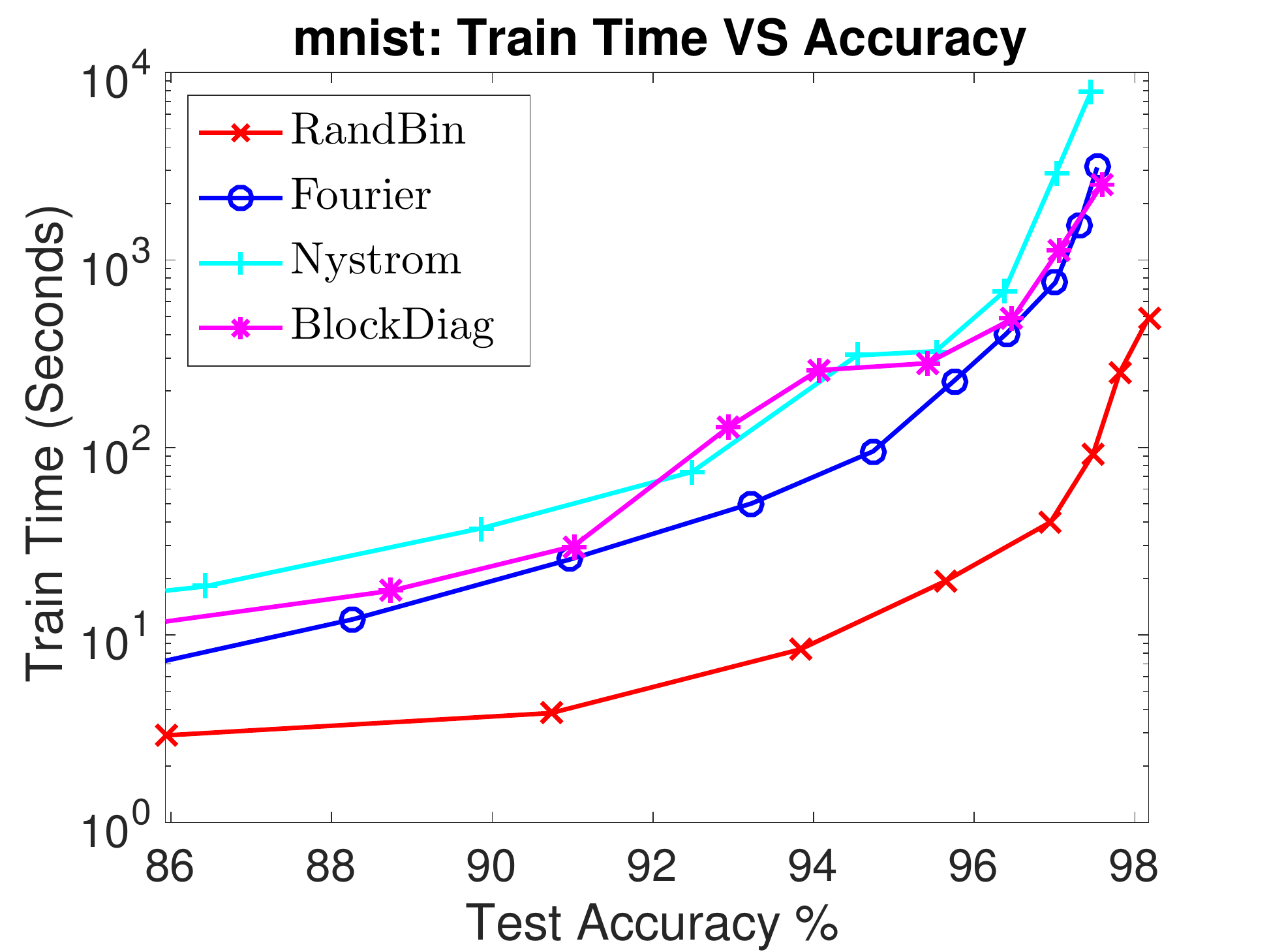}}
\subfigure[mnist]{\includegraphics[width = 1.55in]{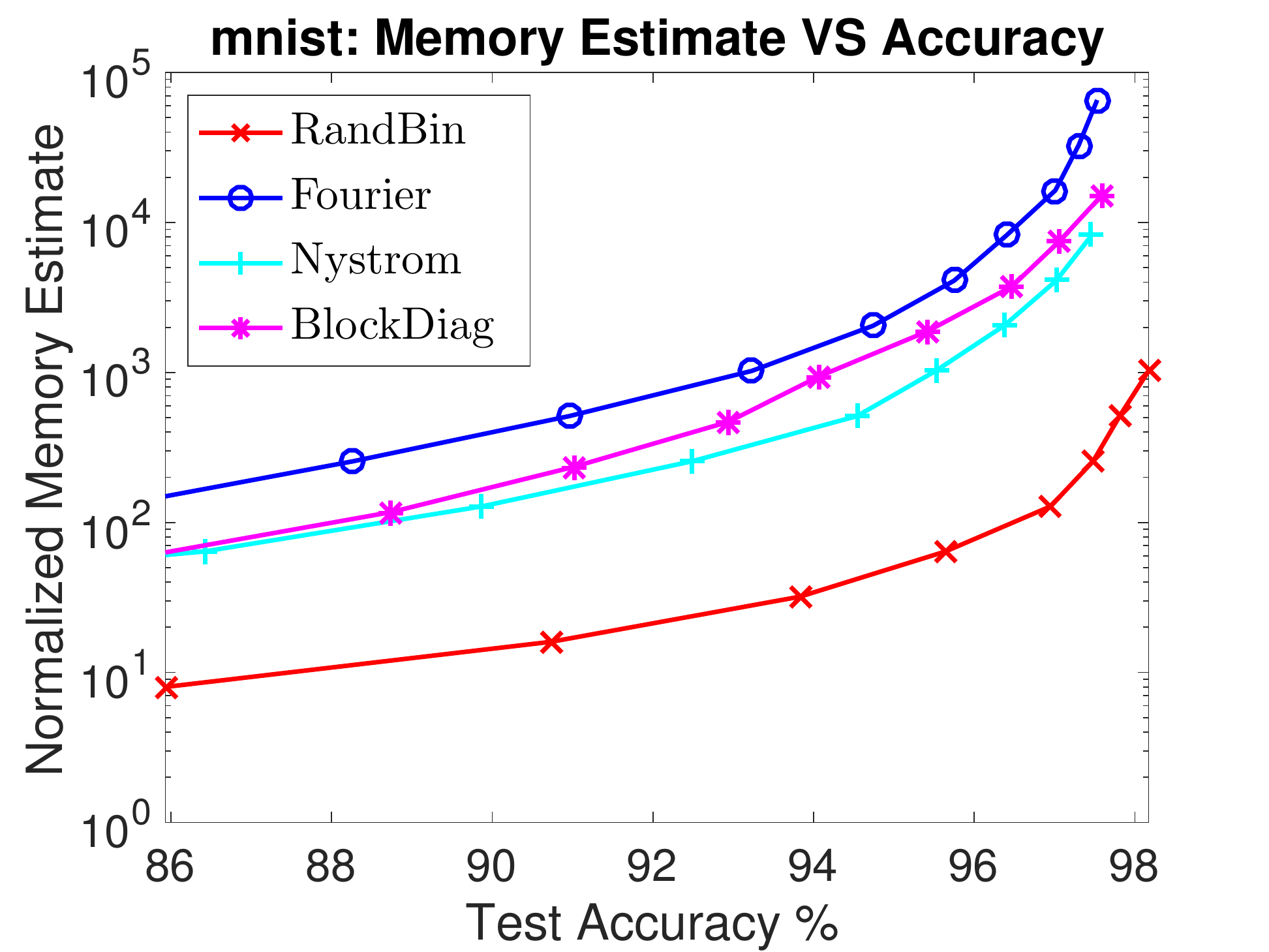}}
\subfigure[acoustic]{\includegraphics[width = 1.55in]{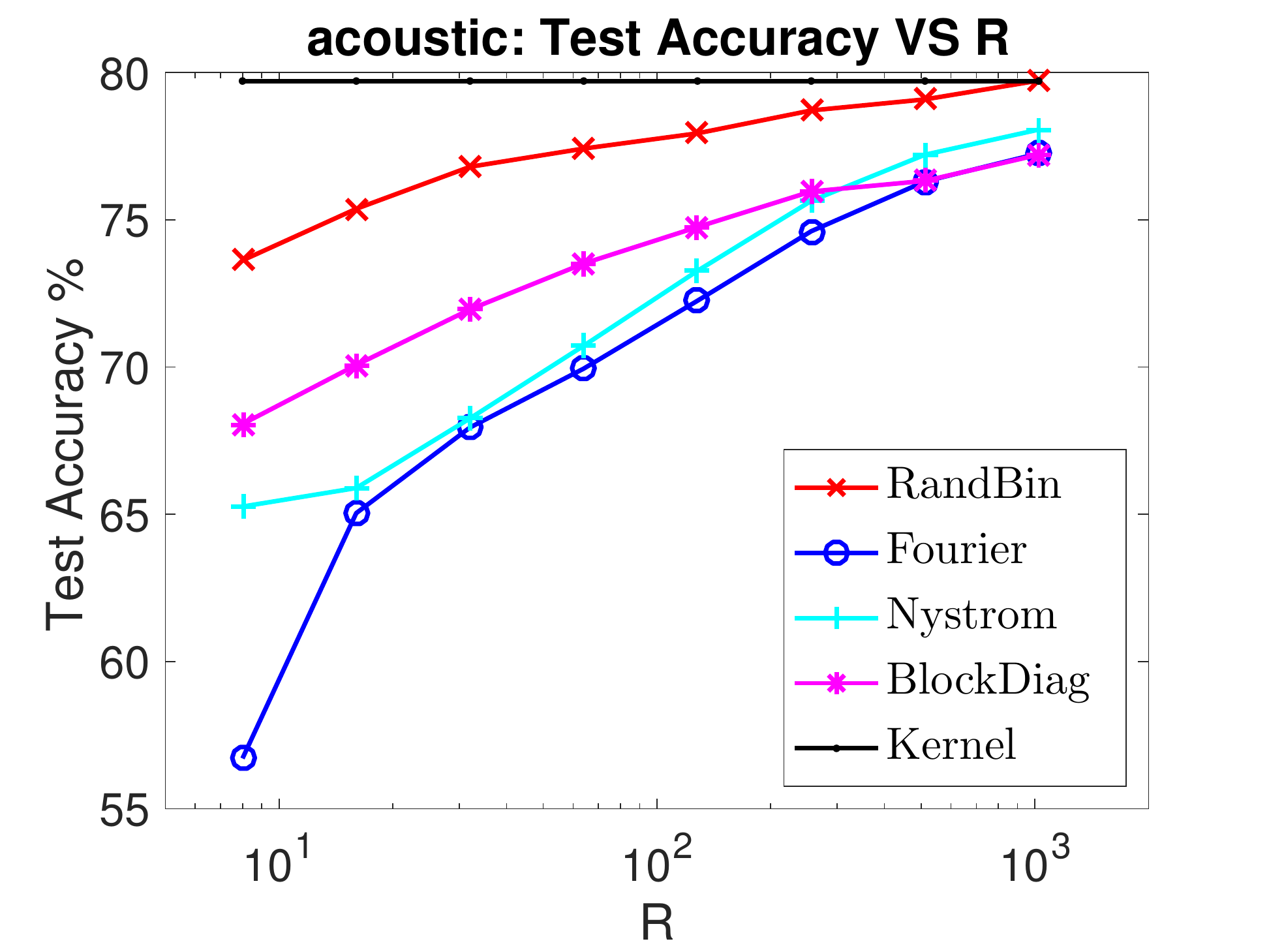}}
\subfigure[acoustic]{\includegraphics[width = 1.55in]{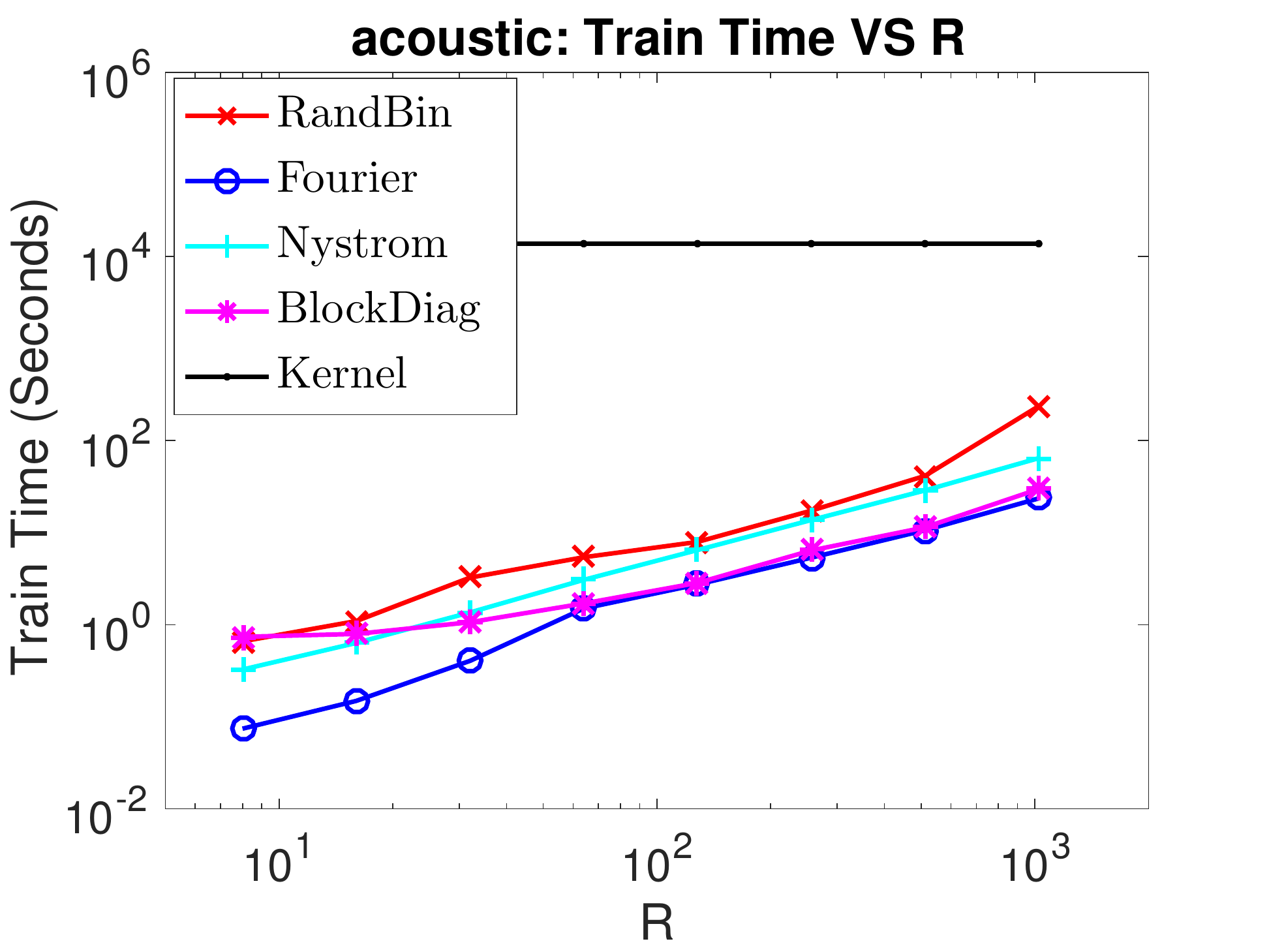}}
\subfigure[acoustic]{\includegraphics[width = 1.55in]{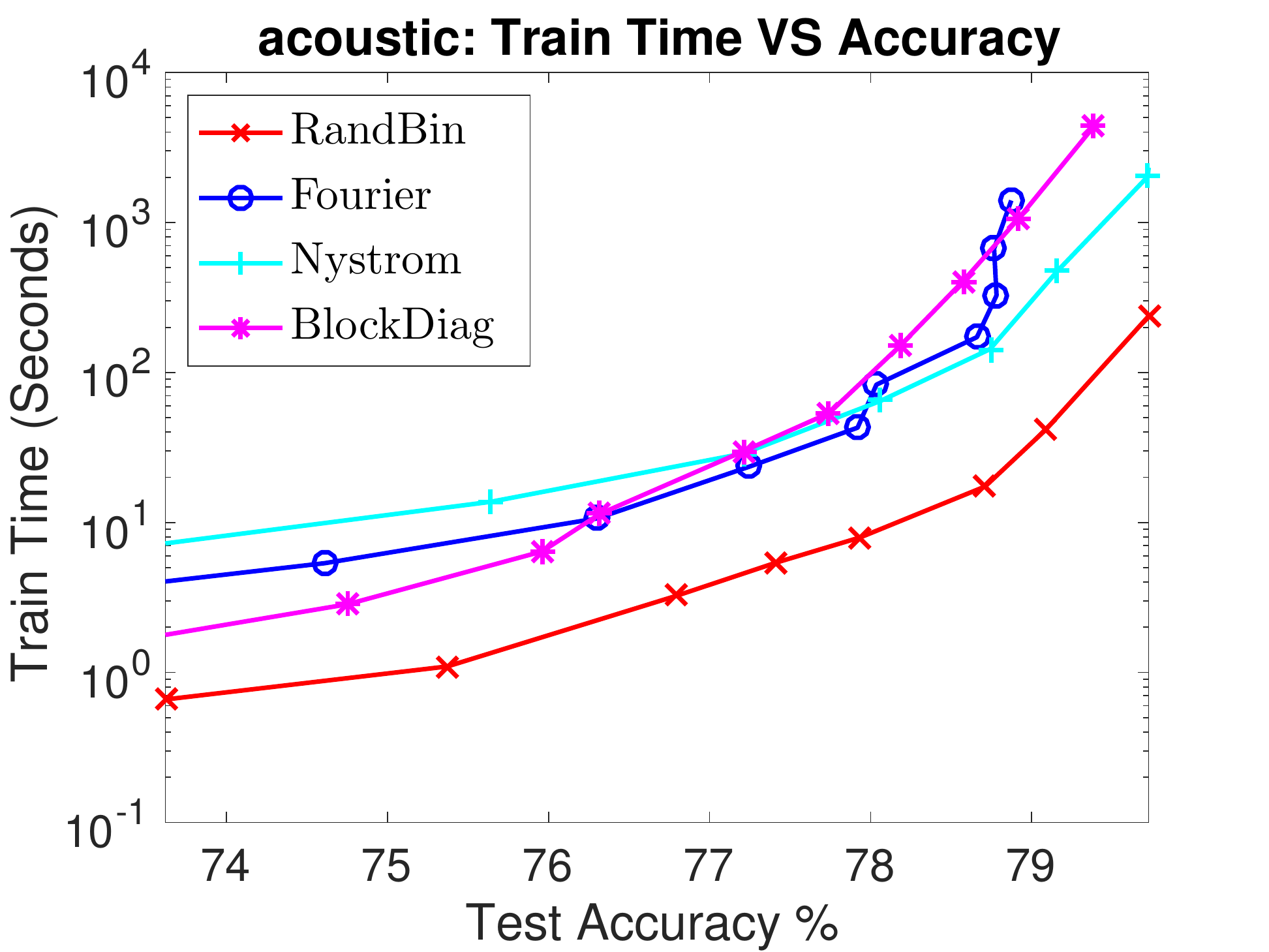}}
\subfigure[acoustic]{\includegraphics[width = 1.55in]{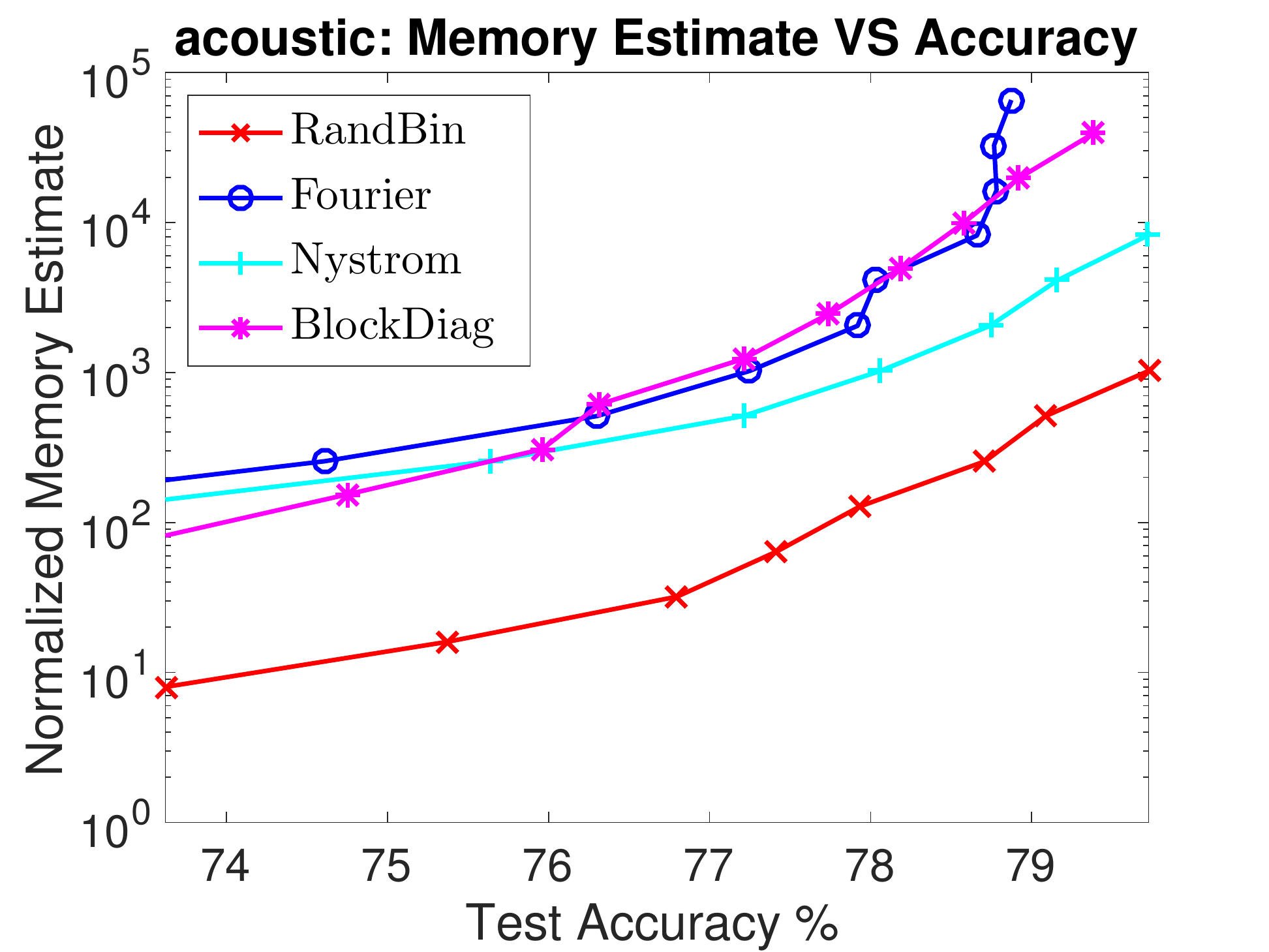}}
\caption{Comparisons among RB, RF, Nystr\"{o}m and Independent Block approximation. The first and second columns plot test performance and train time when increasing $R$. The third and fourth columns plot the train time and memory consumptions when achieving the desired test performance.}
\label{fig:perf_time_R_mem_time_perf_group1}
\end{figure*}

\subsection{Parallel Performance of Random Binning and Random Fourier}
We perform experiments to compare RB with RF when using RCD to solve L1-regularized Lasso and kernel SVM for both regression and binary classification problems. Since the goal is to demonstrate the strong parallel performance of RB, we implement the basic parallel implementation of RCD based on simple shared memory parallel programming model with OpenMP. We leave the high-performance distributed RCD implementation as one of the future works. We define the speedup of RCD on multicore implementation as follows: 
\[ speedup = 
\dfrac{\textit{runtime of RCD using single core}}{\textit{runtime using $P$ cores}}
\]
As shown in Fig.\ref{fig:parallel_rb_rf}, when the sparsity level of the feature matrix $Z$ is high, the near-linear speedup can be achieved \cite{marecek2014DBCD, liu2013asynchronous}. This is because the minimization problem can almost be separated along the coordinate axes, then higher degrees of parallelism are possible. In contrast, if $Z$ is lack of sparsity, then the penalty for data correlations slows the speedup to none. This is confirmed by no gain of parallel speedup of RF since $Z$ is always fully dense. Obviously, in order to empower strong parallel performance of RB, a very large $D$ is expected, which interestingly coincides with power of its faster convergence. Therefore, one can enjoy the double benefits of fast convergence and strong parallelizability of RB, which is especially useful for very large-scale problems.

\begin{figure*}[!htb]
\centering
\subfigure[ijcnn1]{\includegraphics[width = 1.55in]{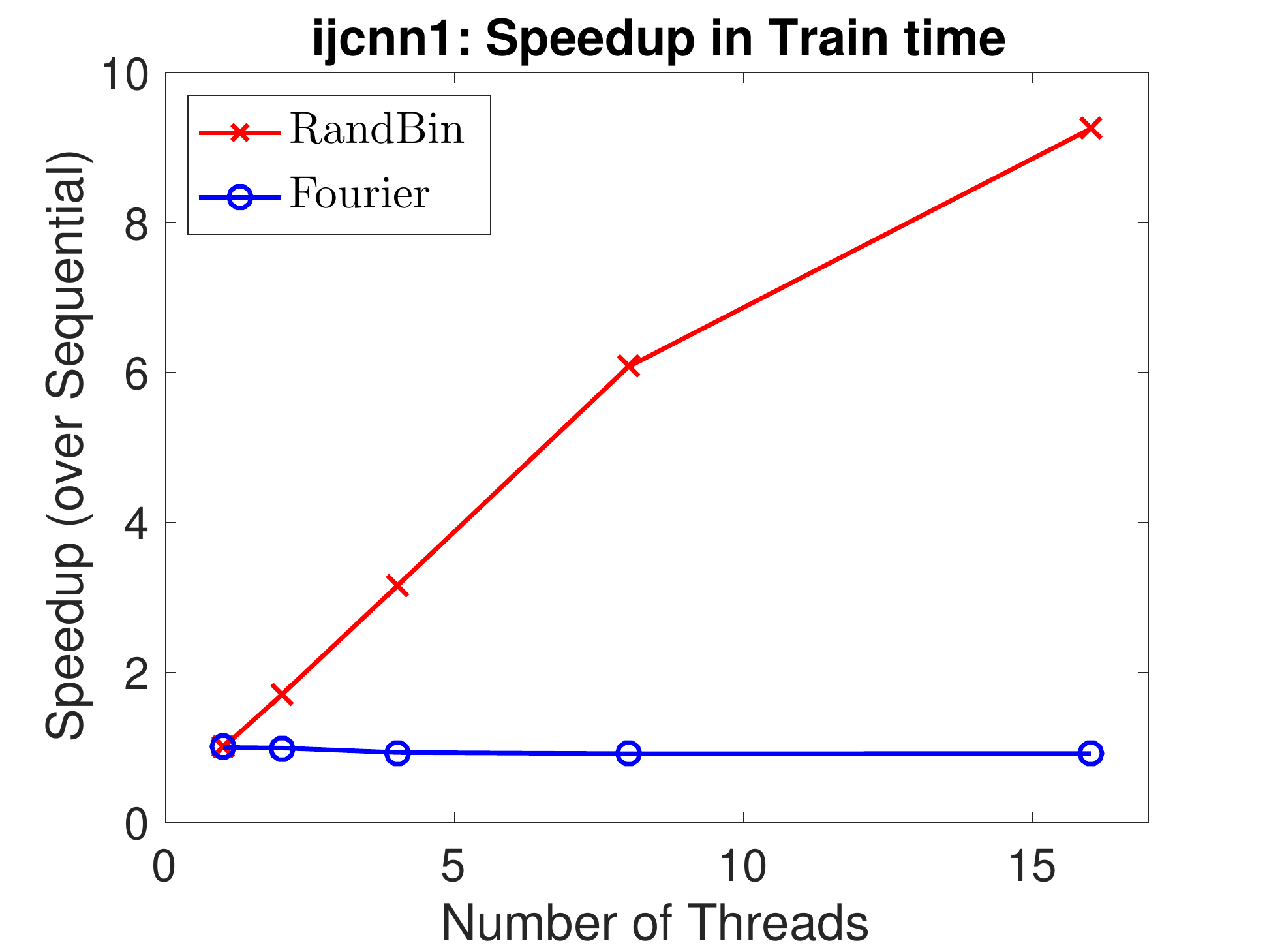}}
\subfigure[cod\_rna]{\includegraphics[width = 1.55in]{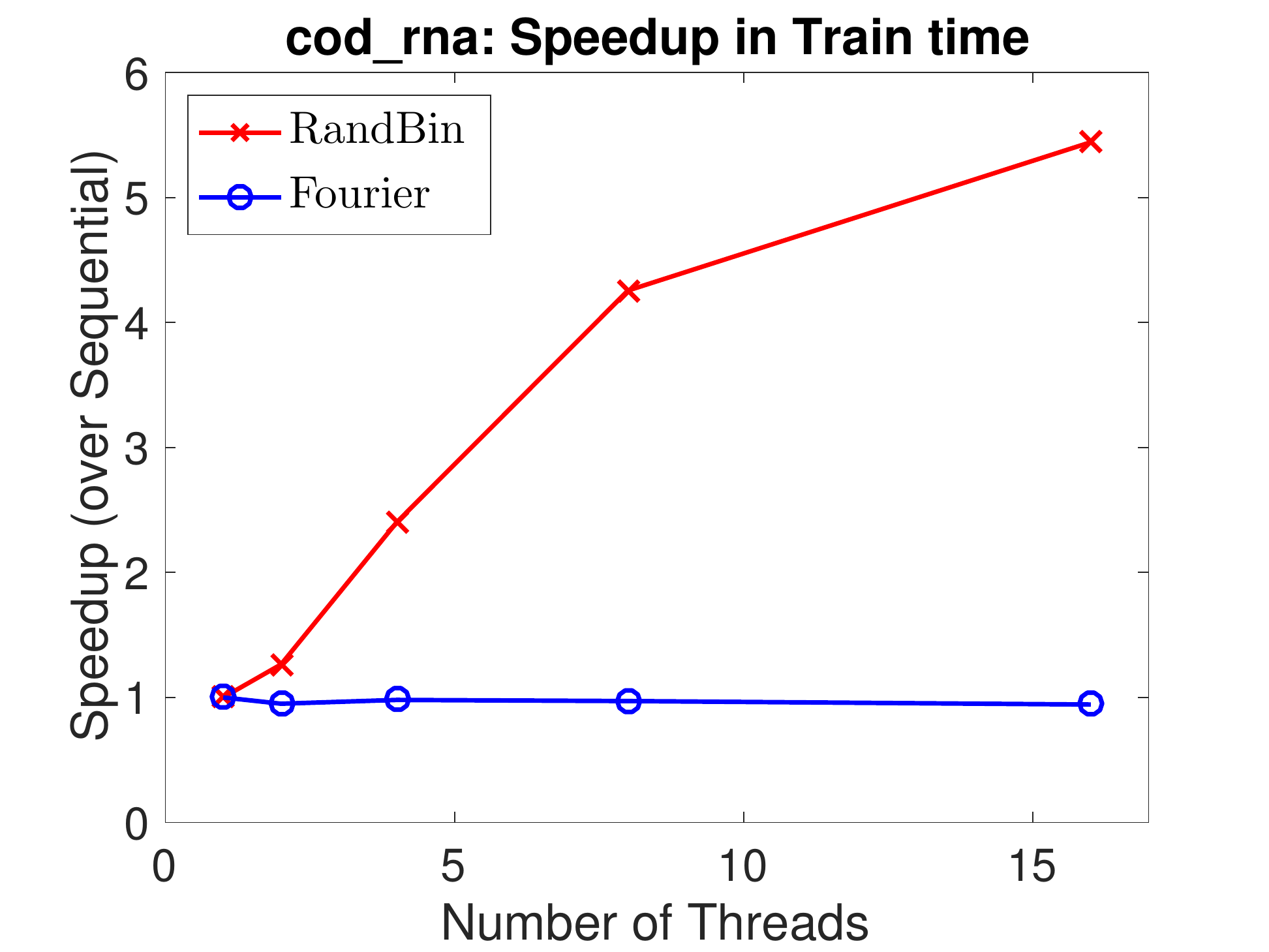}}
\subfigure[covtype]{\includegraphics[width = 1.55in]{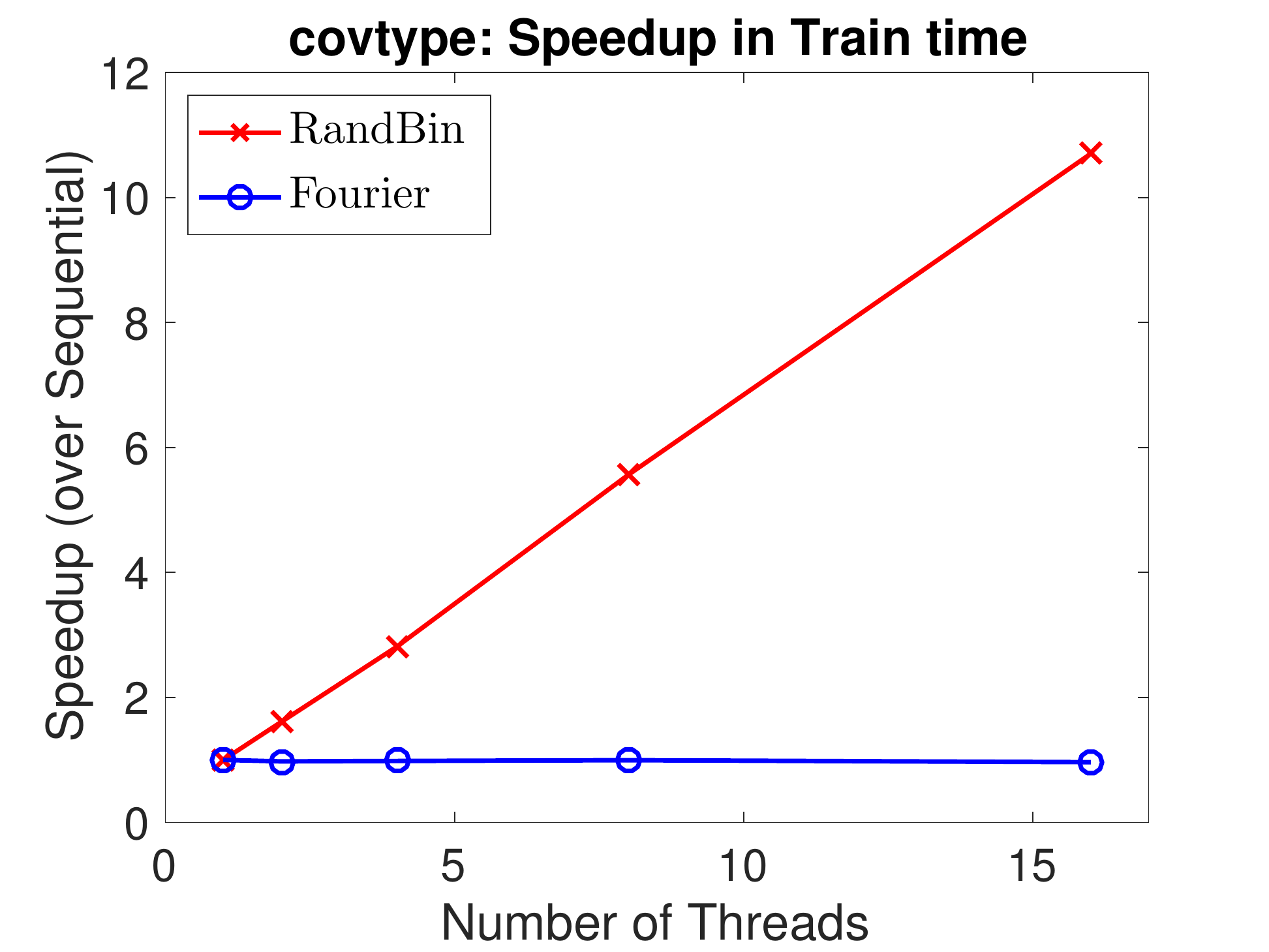}}
\subfigure[SUSY]{\includegraphics[width = 1.55in]{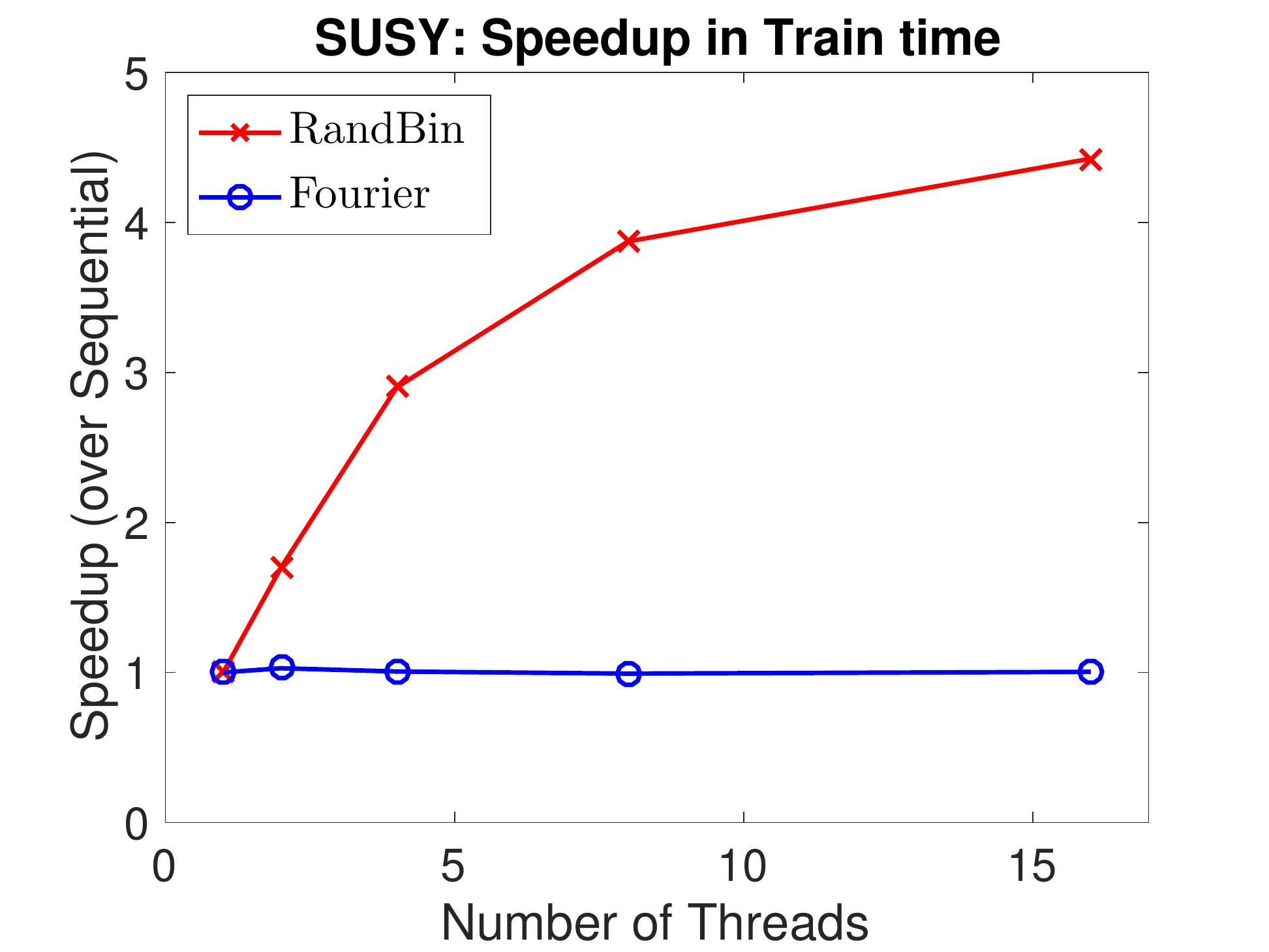}}
\caption{Comparisons of parallel performance between RB and RF using RCD when increasing the number of threads.}
\label{fig:parallel_rb_rf}
\end{figure*}

\section{Conclusions}
\label{section:conclusions}
In this paper, we revisit RB features, an overlooked yet very powerful random features, which we observe often to be orders of magnitude faster than other random features and kernel approximation methods to achieve the same accuracy. Motivated by these impressive empirical results, we propose the first analysis of RB from the perspective of optimization, to make a solid attempt to quantify its faster convergence, which is not captured by traditional Monte-Carlo analysis. By interpreting RB as a RBCD in the infinite-dimensional space, we show that by drawing $R$ grids with at least $\kappa$ expected number of non-empty bins per grid, RB achieves a convergence rate of $O(1/(\kappa R))$. In addition, in the L1-regularized setting, we demonstrate the sparse structure of RB features allows RCD solver to be parallelized with guaranteed speedup proportional to $\kappa$. Our extensive experiments demonstrate the superior performance of the RB features over other random feature and kernel approximation methods.

\section{Acknowledgement}
This work was done while L. Wu was a research intern at IBM Research. J. Chen is supported in part by the XDATA program of the Advanced Research Projects Agency (DARPA), administered through Air Force Research Laboratory contract FA8750-12-C-0323. 

\begin{flushleft}
{\small
\bibliographystyle{abbrv}
\bibliography{randomBinning}

\begin{thebibliography}{10}

\bibitem{bradley2011parallel}
J.~K. Bradley, A.~Kyrola, D.~Bickson, and C.~Guestrin.
\newblock Parallel coordinate descent for l1-regularized loss minimization.
\newblock {\em CoRR, abs/1105.5379}, 2011.

\bibitem{LIBSVM}
C.~Chang and C.~Lin.
\newblock Libsvm: a library for support vector machines.
\newblock {\em ACM Transactions on Intelligent Systems and Technology},
  2:27:1--27:27, 2011.

\bibitem{JCLW2016ICCASP}
J.~Chen, L.~Wu, K.~Audhkhasi, B.~Kingsbury, and B.~Ramabhadran.
\newblock Efficient one-vs-one kernel ridge regression for speech recognition.
\newblock In {\em ICASSP}, 2016.

\bibitem{Dai2014ScalableKernel}
B.~Dai, B.~Xie, N.~He, Y.~Liang, A.~Raj, M.-F.~F. Balcan, and L.~Song.
\newblock Scalable kernel methods via doubly stochastic gradients.
\newblock In {\em NIPS}. Curran Associates, Inc., 2014.

\bibitem{Drineas2005Nystrom}
P.~Drineas and M.~W. Mahoney.
\newblock On the nystr\"{o}m method for approximating a gram matrix for
  improved kernel-based learning.
\newblock {\em JMLR}, 6:2153--2175, Dec. 2005.

\bibitem{fan2008liblinear}
R.-E. Fan, K.-W. Chang, C.-J. Hsieh, X.-R. Wang, and C.-J. Lin.
\newblock Liblinear: A library for large linear classification.
\newblock {\em JMLR}, 9:1871--1874, 2008.

\bibitem{Fine2002JMLR}
S.~Fine and K.~Scheinberg.
\newblock Efficient svm training using low-rank kernel representations.
\newblock {\em JMLR}, 2:243--264, Mar. 2002.

\bibitem{Gittens2013RevisitNystrom}
A.~Gittens and M.~W. Mahoney.
\newblock Revisiting the nystr\"{o}m method for improved large-scale machine
  learning.
\newblock {\em CoRR}, abs/1303.1849, 2013.

\bibitem{Golub1996MatrixComp}
G.~H. Golub and C.~F. Van~Loan.
\newblock {\em Matrix Computations}.
\newblock Johns Hopkins University Press, Baltimore, MD, USA, 1996.

\bibitem{Huang2014Kernel}
P.-S. Huang, H.~Avron, T.~Sainath, V.~Sindhwani, and B.~Ramabhadran.
\newblock Kernel methods match deep neural networks on timit.
\newblock In {\em ICASSP}, 2014.

\bibitem{Kumar2012SamplingNystrom}
S.~Kumar, M.~Mohri, and A.~Talwalkar.
\newblock Sampling methods for the nystr\"{o}m method.
\newblock {\em JMLR}, 13:981--1006, Apr. 2012.

\bibitem{Le14Fastfood}
Q.~V. Le, T.~Sarl{\'{o}}s, and A.~J. Smola.
\newblock Fastfood: Approximate kernel expansions in loglinear time.
\newblock {\em ICML}, 2013.

\bibitem{lee2012proximal}
J.~Lee, Y.~Sun, and M.~Saunders.
\newblock Proximal newton-type methods for convex optimization.
\newblock In {\em Advances in Neural Information Processing Systems}, pages
  836--844, 2012.

\bibitem{lin2008trust}
C.-J. Lin, R.~C. Weng, and S.~S. Keerthi.
\newblock Trust region newton method for logistic regression.
\newblock {\em JMLR}, 9:627--650, 2008.

\bibitem{lin2008support}
H.-T. Lin and L.~Li.
\newblock Support vector machinery for infinite ensemble learning.
\newblock {\em JMLR}, 9:285--312, 2008.

\bibitem{liu2013asynchronous}
J.~Liu, S.~J. Wright, C.~R\'{e}, and V.~Bittorf.
\newblock {An asynchronous parallel stochastic coordinate descent algorithm}.
\newblock {\em JMLR}, 32(1):469--477, 2014.

\bibitem{Lu14ScaleKernel}
Z.~Lu, A.~May, K.~Liu, A.~B. Garakani, D.~Guo, A.~Bellet, L.~Fan, M.~Collins,
  B.~Kingsbury, M.~Picheny, and F.~Sha.
\newblock How to scale up kernel methods to be as good as deep neural nets.
\newblock {\em CoRR}, abs/1411.4000, 2014.

\bibitem{marecek2014DBCD}
J.~Mare{\v{c}}ek, P.~Richt{\'a}rik, and M.~Tak{\'a}{\v{c}}.
\newblock Distributed block coordinate descent for minimizing partially
  separable functions.
\newblock {\em Numerical Analysis and Optimization}, 134:261--288, 2014.

\bibitem{mercer1909functions}
J.~Mercer.
\newblock Functions of positive and negative type, and their connection with
  the theory of integral equations.
\newblock {\em Royal Society London}, A 209:415--446, 1909.

\bibitem{bengio2009binary}
M.~Raginsky and S.~Lazebnik.
\newblock Locality-sensitive binary codes from shift-invariant kernels.
\newblock In Y.~Bengio, D.~Schuurmans, J.~D. Lafferty, C.~K.~I. Williams, and
  A.~Culotta, editors, {\em Advances in Neural Information Processing Systems
  22}, pages 1509--1517. Curran Associates, Inc., 2009.

\bibitem{rahimi2007random}
A.~Rahimi and B.~Recht.
\newblock Random features for large-scale kernel machines.
\newblock In {\em NIPS}. Curran Associates, Inc., 2007.

\bibitem{rahimi2008RKS}
A.~Rahimi and B.~Recht.
\newblock Weighted sums of random kitchen sinks: Replacing minimization with
  randomization in learning.
\newblock In {\em NIPS}. Curran Associates, Inc., 2008.

\bibitem{richtarik2014iteration}
P.~Richt{\'a}rik and M.~Tak{\'a}{\v{c}}.
\newblock Iteration complexity of randomized block-coordinate descent methods
  for minimizing a composite function.
\newblock {\em Mathematical Programming}, 144(1-2):1--38, 2014.

\bibitem{richtarik2015parallel}
P.~Richt{\'a}rik and M.~Tak{\'a}{\v{c}}.
\newblock Parallel coordinate descent methods for big data optimization.
\newblock {\em Mathematical Programming}, pages 1--52, 2015.

\bibitem{Saad2003SparseLS}
Y.~Saad.
\newblock {\em Iterative Methods for Sparse Linear Systems}.
\newblock Society for Industrial and Applied Mathematics, Philadelphia, PA,
  USA, 2nd edition, 2003.

\bibitem{LearnKernels2001}
B.~Scholkopf and A.~J. Smola.
\newblock {\em Learning with Kernels: Support Vector Machines, Regularization,
  Optimization, and Beyond}.
\newblock MIT Press, Cambridge, MA, USA, 2001.

\bibitem{Si2014MEKA}
S.~Si, C.~Hsieh, and I.~S. Dhillon.
\newblock Memory efficient kernel approximation.
\newblock In {\em ICML}, 2014.

\bibitem{Smola2000SparseGreedy}
A.~J. Smola and B.~Sch\"{o}kopf.
\newblock Sparse greedy matrix approximation for machine learning.
\newblock In {\em ICML}, ICML '00, San Francisco, CA, USA, 2000. Morgan
  Kaufmann Publishers Inc.

\bibitem{Stein2014BlockDiag}
M.~L. Stein.
\newblock Limitations on low rank approximations for covariance matrices of
  spatial data.
\newblock {\em Spatial Statistics}, 8:1 -- 19, 2014.
\newblock Spatial Statistics Miami.

\bibitem{Taskar2003NIPS}
B.~Taskar, C.~Guestrin, and D.~Koller.
\newblock Max-margin markov networks.
\newblock In {\em NIPS}. MIT Press, 2004.

\bibitem{Seeger2000Nystrom}
C.~K.~I. Williams and M.~Seeger.
\newblock Using the nystr\"{o}m method to speed up kernel machines.
\newblock In {\em NIPS}. MIT Press, 2001.

\bibitem{Yang15LearnFastKernel}
Z.~Yang, A.~G. Wilson, A.~J. Smola, and L.~Song.
\newblock A la carte - learning fast kernels.
\newblock In {\em AISTATS}, 2015.

\bibitem{yen2014sparse}
I.~E.-H. Yen, T.-W. Lin, S.-D. Lin, P.~K. Ravikumar, and I.~S. Dhillon.
\newblock Sparse random feature algorithm as coordinate descent in hilbert
  space.
\newblock In {\em JMLR}, 2014.

\bibitem{yuan2010comparison}
G.-X. Yuan, K.-W. Chang, C.-J. Hsieh, and C.-J. Lin.
\newblock A comparison of optimization methods and software for large-scale
  l1-regularized linear classification.
\newblock {\em JMLR}, 11:3183--3234, 2010.

\bibitem{zhong2014proximal}
K.~Zhong, I.~E.-H. Yen, I.~S. Dhillon, and P.~K. Ravikumar.
\newblock Proximal quasi-newton for computationally intensive l1-regularized
  m-estimators.
\newblock In {\em Advances in Neural Information Processing Systems}, pages
  2375--2383, 2014.

\end{thebibliography}
}
\end{flushleft}

\end{document}